\newcommand{\R}{\mathbb{R}}
\renewcommand{\S}{\sigma}
\renewcommand{\L}{\mathcal{L}}
\newcommand{\T}{\mathcal{T}}
\renewcommand{\a}{\alpha}
\renewcommand{\d}{\delta}
\newcommand{\op}{\texttt{opt}}
\newcommand{\opt}{\operatorname{opt}}
\newtheorem{assumption}{Assumption}
\DeclareMathOperator*{\argmin}{arg\,min}
\begin{document}

\title{Optimizing Attention with Mirror Descent: Generalized Max-Margin Token Selection}

\renewcommand{\thefootnote}{\fnsymbol{footnote}}

\author{\name Addison Kristanto Julistiono\footnotemark[1] \email aaron25@mit.edu \\
\addr Massachusetts Institute of Technology
\AND
\name Davoud Ataee Tarzanagh\footnotemark[1] \email tarzanaq@upenn.edu \\
\addr University of Pennsylvania
\AND
\name Navid Azizan \email azizan@mit.edu \\
\addr Massachusetts Institute of Technology
}

\footnotetext[1]{Equal contribution.}

\renewcommand{\thefootnote}{\arabic{footnote}}

\editor{Mahdi Soltanolkotabi}

\maketitle

\begin{abstract}
Attention mechanisms have revolutionized several domains of artificial intelligence, such as natural language processing and computer vision, by enabling models to selectively focus on relevant parts of the input data. While recent work has characterized the optimization dynamics of gradient descent (GD) in attention-based models and the structural properties of its preferred solutions, less is known about more general optimization algorithms such as mirror descent (MD). In this paper, we investigate the convergence properties and implicit biases of a family of MD algorithms tailored for softmax attention mechanisms, with the potential function chosen as the $p$-th power of the $\ell_p$-norm. Specifically, we show that these algorithms converge in direction to a generalized hard-margin SVM with an $\ell_p$-norm objective when applied to a classification problem using a softmax attention model. Notably, our theoretical results reveal that the convergence rate is comparable to that of traditional GD in simpler models, despite the highly nonlinear and nonconvex nature of the present problem. Additionally, we delve into the joint optimization dynamics of the key-query matrix and the decoder, establishing conditions under which this complex joint optimization converges to their respective hard-margin SVM solutions. Lastly, our numerical experiments on real data demonstrate that MD algorithms improve generalization over standard GD and excel in optimal token selection.
\end{abstract}

\begin{keywords}
 Attention Mechanisms, Mirror Descent, Token Selection, Transformers
\end{keywords}

\section{Introduction}
Attention mechanisms \citep{bahdanau2014neural} have transformed natural language processing (NLP) and large language models (LLMs). Initially developed for encoder-decoder recurrent neural networks (RNNs), attention enables the decoder to focus on relevant input segments rather than relying solely on a fixed-length hidden state. This approach became fundamental in transformers \citep{vaswani2017attention}, where attention layers—computing softmax similarities among input tokens—are the architecture's backbone. Transformers have driven rapid advancements in NLP with models like BERT \citep{devlin2018bert} and ChatGPT \citep{gpt4}, and have become the preferred architecture for generative modeling \citep{chen2021evaluating, ramesh2021zero}, computer vision \citep{dosovitskiy2021vit, radford2021learning}, and reinforcement learning \citep{driess2023palm, chen2021decision}. This has led to increased exploration of the mathematical foundations of attention's optimization. 

To understand optimization dynamics of attention mechanisms, \cite{tarzanagh2023transformers,ataee2024max} studied the \textit{implicit bias} of gradient descent (GD) in a binary classification setting with a fixed linear decoder. This bias refers to the tendency of GD to learn specific weight characteristics when multiple valid solutions exist. For example, in linear logistic regression on separable data, GD favors solutions aligned with the max-margin class separator \citep{soudry2018implicit, ji2018risk}. Similarly, \cite{tarzanagh2023transformers, ataee2024max} propose a model resembling a hard-margin Support Vector Machine (SVM)—specifically, \eqref{eqn:w-svm} with $p=2$—which maximizes the margin between optimal and non-optimal input tokens based on their softmax logits. These studies show that as training progresses, the combined key-query weights $W(k)$ of the model increasingly align with the locally optimal solution $W^\alpha_\mathrm{mm}$—the minimizer of \eqref{eqn:w-svm} with $p=2$, where $k$ denotes the training iteration. Expanding on these insights, \cite{vasudeva2024implicit} explores global directional convergence and the convergence rate of GD under specific conditions. \cite{sheen2024implicit} further extends these findings by relaxing assumptions about the convergence of regularized paths for the $(W_K, W_Q)$ parameterization of the key-query matrix, showing that gradient flow minimizes the nuclear norm of the key-query weight $W=W_K W_Q^\top$.

\subsection{Contributions}\label{sec:contr} While the aforementioned works provide insights into the implicit bias and token-selection properties of attention mechanisms, their analyses are limited to simplistic GD models. A broader understanding of more general descent algorithms, such as the mirror descent (\ref{eqn:md}) family, and their token-selection properties is missing. We address this by examining a family of \ref{eqn:md} algorithms designed for softmax attention, where the potential function is the $p$-th power of the $\ell_p$-norm, termed \ref{alg:p:agd}. This generalizes both $\ell_p$-GD \citep{azizan2018stochastic,sun2022mirror,sun2023unified} and attention GD \citep{tarzanagh2023transformers,ataee2024max}, enabling the exploration of key aspects of attention optimization via \ref{alg:p:agd}.

\noindent \textbf{Implicit Bias of \ref{alg:p:agd} for Attention Optimization}.
Building on the setting of \cite{tarzanagh2023transformers,vasudeva2024implicit,sheen2024implicit}, we consider a one-layer attention model for binary classification. Specifically, given a dataset \( (X_i,y_i,z_i)_{i=1}^n \) where \( X_i \in \mathbb{R}^{T \times d} \) represents inputs with \( T \) tokens, \( y_i \in \{\pm 1\} \) is the label, and \( z_i \in \mathbb{R}^d \) is the comparison token, we study a single-layer attention model $f(X_i,z_i) := v^\top X^\top_i \S(X_iWz_i)$, where $\S(\cdot)$ is the softmax function, $W$ is the key-query matrix, and $v$ is a linear decoder. {This single-head, one-layer setting is adopted for analytical tractability—a standard abstraction in theoretical studies of attention mechanisms \citep{tarzanagh2023transformers, ataee2024max, vasudeva2024implicit, sheen2024implicit}—while our experimental validation in Section \ref{sec:real-data-exp} demonstrates that these principles extend to practical multi-head, multi-layer architectures on real datasets.} 
Our goal is to separate a \textit{locally optimal} token $\alpha_i \in [T]$ of each input sequence \(X_i\) from the rest via an empirical risk minimization problem \eqref{eqn:erm} with a smooth decreasing loss. We extend the SVM formulation of \cite{tarzanagh2023transformers} to \eqref{eqn:w-svm}, defining a hard-margin SVM using the \(\ell_p\)-norm instead of the \(\ell_2\)-norm. The solution \( W^\alpha_\mathrm{mm} \) separates the locally optimal tokens \((\alpha_{i})_{i=1}^n\) with \textit{generalized} maximum margin (see Section \ref{sec:prelim}).

Theorem~\ref{thm-direction} provides sufficient conditions for \ref{alg:p:agd} to converge locally, in direction, to \(W_\mathrm{mm}^\alpha\). Moreover, Theorem~\ref{thm-norm} shows that \(\|W(k)\|_{p,p}\) diverges as \(k \rightarrow \infty\). These results characterize the implicit bias towards \eqref{eqn:w-svm} in separating locally optimal tokens, extending previous work to a broader class of algorithms. While Theorem~\ref{thm-direction} and the results of \cite{tarzanagh2023transformers,ataee2024max} offer insights into optimization dynamics for \(p=2\), the finite-time convergence rate of \ref{alg:p:agd} for selecting locally optimal tokens remains unexplored.

\noindent \textbf{Convergence Rate of \ref{alg:p:agd} to the Solution of \eqref{eqn:w-svm}}. Theorem~\ref{thm-rate} establishes convergence rates for \ref{alg:p:agd}, showing that the iterates $W(k)$, for large $k$, exhibit a decrease in $D_\psi\left(W_\mathrm{mm}^\a/\|W_\mathrm{mm}^\a\|_{p,p}, W(k)/\|W(k)\|_{p,p}\right)$ at an inverse poly-log rate, where $D_\psi(\cdot,\cdot)$ denotes the Bregman divergence \citep{bregman1967relaxation}; see Definition~\ref{def:breg:d}. Despite optimizing a highly nonlinear, nonconvex softmax function, we achieve a convergence rate similar to GD in linear binary classification \cite[Theorem 1.1]{ji2018risk}. Compared to the recent polynomial rate \(O(k^{-3/4})\) in \cite[Theorem 1]{vasudeva2024implicit} for optimizing attention, our rate is logarithmic and slower, but applicable to standard GD and \ref{eqn:md} for locally optimal token selection. Importantly, we do not require the near-orthogonality of tokens assumption used in \cite{vasudeva2024implicit}.

\noindent \textbf{Generalized Max-Margin Solutions and Joint Optimization of $(v,W)$}.  We investigate the joint problem over \(v\) and \(W\) under logistic loss using $\ell_p$-norm regularization path, where \eqref{eqn:erm} is solved under $\ell_p$-norm constraints, examining the solutions trajectory as these constraints relax. Since the problem is linear in \(v\), if the attention features \(\bar{X}_i = X_i^\top \S(X_i W z_i)\) are separable by their labels \(y_i\), \(v\) acts as a generalized max-margin classifier \citep{azizan2021stochastic}. Inspired by \cite{tarzanagh2023transformers,ataee2024max},  we show that under suitable geometric conditions, \(W\) and \(v\) generated by $\ell_p$-norm regularization path converge to their respective max-margin solutions (Theorem \ref{thm:joint:rp} in the appendix).

Finally, we provide extensive numerical experiments on real and synthetic data, demonstrating that \ref{eqn:md} algorithms improve generalization over standard GD, excelling in optimal token selection and suppressing non-optimal tokens.

\subsection{Additional Related Work}\label{sec:rel-work}

\textbf{Transformers Optimization.} 
Recently, the study of the mathematical foundations of attention optimization has gained significant attention
\citep{deora2023optimization, huang2023context, tian2023joma, fu2023transformers, li2024mechanics, ataee2024max, tarzanagh2023transformers, vasudeva2024implicit, sheen2024implicit, deng2023unmasking, makkuva2024attention, jeon2024information, zheng2023learn, collins2024context, chen2024provably, li2023theoretical, ildiz2024self, vasudeva2024simplicity, bao2024self, chen2024training, huang2024non, wang2024transformers, zhao2024implicit,sun2025context}. Below, we highlight works most relevant to this paper. 

Studies such as \citet{sahiner2022unraveling, ergen2022convexifying} investigate attention model optimization through convex relaxations. \citet{jelassi2022vision} demonstrate that Vision Transformers (ViTs) identify spatial patterns in binary classification via gradient methods. \citet{li2023theoretical} provide sample complexity bounds and discuss attention sparsity in SGD for ViTs.  
\citet{nguyen2024primal} introduce static primal-dual formulations for attention mechanisms, connecting self-attention to support vector regression (SVR). 
\citet{chen2024primal} propose a novel attention mechanism that optimizes self-attention in Transformers using asymmetric Kernel Singular Value Decomposition (KSVD) in the primal representation, achieving efficiency and performance improvements through low-rank approximations and regularization techniques. However, these works do not examine optimization dynamics, the role of descent algorithms, or the implications of implicit bias in training, which are the main focus of our work.  

\citet{oymak2023role} and \citet{deora2023optimization} explore optimization dynamics in prompt-tuning and multi-head attention models, respectively. \citet{tian2023scan, tian2023joma} study SGD dynamics and multi-layer Transformer training. \citet{ataee2024max, tarzanagh2023transformers} examine the implicit bias of GD in attention optimization. \citet{vasudeva2024implicit} analyze the global directional convergence and convergence rate of GD for attention optimization under specific data conditions. Furthermore, \citet{tarzanagh2023transformers} and \citet{sheen2024implicit} respectively show that the regularization path and gradient flow not only achieve minimal loss but also minimize the nuclear norm of the key-query weight \( W \). \citet{thrampoulidis2024implicit, li2024mechanics, zhao2024implicit} also study the optimization dynamics of attention mechanisms and the implicit bias of GD in next-token prediction. Our work extends these findings and those of \citet{tarzanagh2023transformers,ataee2024max}, focusing on the implicit bias of the general class of MD algorithms for attention training. {Similar to these prior works, our theoretical analysis adopts a single-head, one-layer setting—a standard abstraction that balances analytical tractability with capturing the essential nonconvex optimization structure of softmax attention. Extensions to multi-head and multi-layer attention theory, as explored in \citep{tian2023joma, collins2024context, chen2025provably}, remain an important direction for future work.}

\noindent
\textbf{Implicit Bias of First-Order Methods.} 
Significant progress has been made in understanding the implicit bias of gradient descent on separable data, particularly highlighted by \citet{soudry2018implicit, ji2018risk}. For linear predictors, \citet{nacson2019convergence, ji2021characterizing, ji2021fast} demonstrate that gradient descent methods rapidly converge to the max-margin predictor. Extending these insights to multi-layer perceptrons (MLPs), \citet{ji2020directional, lyu2019gradient, chizat2020implicit} examine the implicit bias of GD and gradient flow using exponentially-tailed classification losses, showing convergence to the Karush-Kuhn-Tucker (KKT) points of the corresponding max-margin problem, both in finite \citep{ji2020directional, lyu2019gradient} and infinite-width scenarios \citep{chizat2020implicit}. Furthermore, the implicit bias of GD for training ReLU and Leaky-ReLU networks has been investigated, particularly in the context of orthogonal data \citep{phuong2020inductive, frei2022implicit}. Additionally, the implicit bias toward rank minimization in regression settings with square loss has been explored in \citep{vardi2021implicit, arora2019implicit, li2020towards}.

Our work is also related to the implicit bias of MD \citep{gunasekar2018characterizing, azizan2018stochastic} in regression and classification, respectively. Specifically, \citet{sun2022mirror} extend the findings of \citet{gunasekar2018characterizing, azizan2018stochastic} to classification problems, developing a class of algorithms that exhibit an implicit bias toward a generalized SVM with \( \ell_p \) norms, effectively separating samples based on their labels. For a comprehensive survey, we refer to \citet{vardi2023implicit}.

\section{Preliminaries}\label{sec:prelim}
\paragraph{Notations.} Let \( N \geq 1 \) and \([N] = \{1, 2, \dots, N\}\). Vectors are denoted by lowercase letters (e.g., \( a \)), with components \( a_i \), and matrices by uppercase letters (e.g., \( A \)). The minimum and maximum of scalars \( a \) and \( b \) are \( a \wedge b \) and \( a \vee b \), respectively. For a vector \( v \in \mathbb{R}^d \), the \( p \)-norm is \( \|v\|_p = (\sum_{i=1}^d |v_i|^p)^{1/p} \). For a matrix \( M \in \mathbb{R}^{d \times d} \), the \( p,p \)-norm is \( \|M\|_{p,p} = (\sum_{i=1}^d \sum_{j=1}^d |M_{ij}|^p)^{1/p} \).   When \( p = 2 \), these are the Euclidean norm for vectors and the Frobenius norm for matrices. For any two matrices $X, Y$ of the same dimensions, we define $\langle X, Y \rangle := \text{trace}(X^\top Y)$. $\mathcal{O}$ denotes an upper bound, $\Omega$ a lower bound, and $\Theta$ both. All logarithms are natural (base $e$). Throughout, for a differentiable function $f:\mathbb{R}^{d\times d}\rightarrow\mathbb{R}$, we define $   D_{f}: \mathbb{R}^{d\times d} \times  \mathbb{R}^{d\times d} \rightarrow  \mathbb{R} $ as 
\begin{align}\label{eqn:def:df}
    D_{f}(W,V):=f(W)-f(V)-\langle\nabla f(V),W-V\rangle.
\end{align}

\paragraph{Single-head attention model.} Given input sequences \( X, Z \in \mathbb{R}^{T \times d} \) with length \( T \) and embedding dimension \( d \), the output of a single-head (cross)-attention layer is computed as:
\begin{align*}
\textnormal{softmax}(X W_Q W_K^\top Z^\top) X W_V,
\end{align*}
where \( W_Q, W_K \in \mathbb{R}^{d \times d_1} \), \( W_V \in \mathbb{R}^{d \times d_2} \) 
are trainable key, query, value matrices, respectively; \(\textnormal{softmax}(X W_Q W_K^\top Z^\top)\) is the attention map; and \(\textnormal{softmax}(\cdot): \mathbb{R}^{T \times T} \rightarrow \mathbb{R}^{T \times T}\)
denotes the row-wise softmax function applied row-wise on \(X W_Q W_K^\top Z^\top\). Similar to \cite{tarzanagh2023transformers,ataee2024max}, we reparameterize the key-query product matrix as \( W := W_Q W_K^\top \in \mathbb{R}^{d \times d} \), and subsume the value weights \( W_V \) within the prediction head \( v \in \mathbb{R}^d \). Suppose the first token of \( Z \), denoted by \( z \), is used for prediction. Then, the attention model can be formulated as
\begin{align}\label{eq-model-W}
f(X,z) = v^\top X^\top \S(XWz),
\end{align}
where $\S : \mathbb{R}^T \rightarrow \mathbb{R}^T$ is the softmax function on vectors.

\noindent
\textbf{Attention-based empirical risk minimization.} We consider a one-layer attention model \eqref{eq-model-W} for binary classification. Consider the dataset \( (X_i,y_i,z_i)_{i=1}^n \), where \( X_i \in \mathbb{R}^{T \times d} \) is the input with \( T \) tokens each of dimension \( d \), \( y_i \in \{\pm 1\} \) is the label, and \( z_i \in \mathbb{R}^d \) is the token used for comparison. We use a smooth decreasing loss function \( l: \mathbb{R} \rightarrow \mathbb{R} \) and study empirical risk minimization (ERM):
\begin{align}\label{eqn:erm}
\min_{v \in \mathbb{R}^d, W \in \mathbb{R}^{d \times d}} \quad \mathcal{L}(v,W) := \frac{1}{n} \sum_{i=1}^n l\left(y_i v^\top X_i^\top \S\left(X_i W z_i\right)\right). \tag{ERM}
\end{align}
Throughout, we will use $ \mathcal{L}(W)$ to denote the objective of \eqref{eqn:erm} with fixed $v $.

The highly nonlinear and nonconvex nature of the softmax operation makes the training problem described in \eqref{eqn:erm} a challenging nonconvex optimization task for \( W \), even with a fixed \( v \). Next, we provide an assumption on the loss function necessary to demonstrate the convergence of \ref{eqn:md}  for margin maximization within the attention mechanism. 

\begin{assumption}\label{assumption-loss}
Within any closed interval, the loss function \( l:\mathbb{R} \rightarrow \mathbb{R} \) is strictly decreasing and differentiable, and its derivative \( l' \) is bounded and Lipschitz continuous.
\end{assumption}
Assumption~\ref{assumption-loss} aligns with the assumptions on loss functions in \cite{ataee2024max, tarzanagh2023transformers}. Commonly used loss functions, such as \( l(x) = \exp(-x) \), \( l(x) = -x \), and \( l(x) = \log(1 + \exp(-x)) \), satisfy this assumption.

\noindent
\textbf{Preliminaries on mirror descent.} We review the \ref{eqn:md} algorithm \citep{original-md} for solving attention-based \eqref{eqn:erm}. Mirror descent is defined using a \textit{potential function}. We focus on differentiable and strictly convex potentials $\psi$ defined on the entire domain $\mathbb{R}^{d \times d}$. Note that in general, the potential function is a convex function of Legendre type \citep[Section 26]{rockafellar2015convex}. We call $\nabla \psi$ the \textit{mirror map}. The natural ``distance'' associated with the potential $\psi$ is given by the Bregman divergence \citep{bregman1967relaxation}.

%
\begin{definition}[Bregman Divergence]\label{def:breg:d}
For a strictly convex function $\psi: \mathbb{R}^{d \times d}  \rightarrow \mathbb{R}$, the expression $D_\psi(\cdot,\cdot)$ defined in \eqref{eqn:def:df} is called the Bregman divergence.
\end{definition}
An important example of a potential function is $\psi = \frac{1}{2}\|\cdot\|^2_F$. In this case, the Bregman divergence simplifies to \( D_\psi(W, V) = \frac{1}{2}\|W -V\|^2_F \); For more details, see \cite{bauschke2017descent}. \ref{eqn:md} with respect to the mirror map $\psi$ is a generalization of GD where the Bregman divergence is used as a measure of distance. Given a stepsize $\eta>0$, the \ref{eqn:md} algorithm is as follows:
\begin{align}\label{eqn:md}
W(k+1) \leftarrow \argmin_{W \in \mathbb{R}^{d \times d}} \left\{{\eta}^{-1}D_\psi(W, W(k)) + \left\langle\nabla \mathcal{L}(W(k)), {W} \right\rangle \right\}. \tag{\texttt{MD}}
\end{align}
Equivalently, \ref{eqn:md} can be written as $\nabla \psi(W(k+1)) = \nabla \psi(W(k)) - \eta \nabla \L(W(k))$; see \cite{bubeck2015convex,juditsky2011first}. A useful fact about the Bregman divergence is that it is non-negative and $D_\psi(W,V)=0$ if and only if $W=V$. 

\noindent
\textbf{Preliminaries on attention SVM.} Following \cite{ataee2024max,tarzanagh2023transformers}, we use the following definition of token scores.

\begin{definition}[Token Score]\label{def:token:score}
For prediction head $v \in \mathbb{R}^d$, the score of token $X_{it} $ is $\gamma_{it} = y_i v^\top X_{it}$.
\end{definition}
It is important to highlight that the score is determined solely based on the \emph{value embeddings} $v^\top X_{it}$ of the tokens. The softmax function $\S(\cdot)$ minimizes \eqref{eqn:erm} by selecting the token with the highest score \cite[Lemma 2]{tarzanagh2023transformers}. Using \eqref{def:token:score}, \cite{tarzanagh2023transformers} defines globally optimal tokens $(\op_i)_{i=1}^n$, with each $\op_i$ maximizing the score for $X_{i\op_i}$. For our \ref{eqn:md} analysis, we primarily consider locally optimal tokens, as they are more general than globally optimal ones. Locally optimal tokens \citep{ataee2024max,tarzanagh2023transformers} are characterized by having scores that surpass those of nearby tokens, we formalize the notion of nearby tokens later in Definition \ref{def:token:loptimal} on locally optimal tokens and \textit{support tokens}. Intuitively, these are the tokens that locally minimize \eqref{eqn:erm} upon selection and can be defined based on support tokens. Before presenting the mathematical notion of locally optimal tokens, we provide the formulation of the attention SVM problem. Given a set of (locally) optimal token indices $(\alpha_i)_{i=1}^n\in[T]^n$, \cite{tarzanagh2023transformers} defines the following hard-margin attention SVM problem, which aims to separate, with maximal margin, (locally) optimal tokens from the rest of the tokens for every input sequence:
\begin{equation}
    \begin{split}\label{eqn:w-2-svm}
        W^\alpha_\mathrm{mm} &:= \argmin_{W \in \mathbb{R}^{d \times d}} \|W\|_F \\
        &\textnormal{subj.~to}~~(X_{i\alpha_i} - X_{it})^\top W z_i \geq 1,~~\textnormal{for all }~t \in [T] - \{\alpha_i\}, \, i \in [n].
    \end{split}
\end{equation}
The constraint $(X_{i\alpha_i} - X_{it})^\top W z_i \geq 1$ indicates that in the softmax probability vector $\S(X_i W z_i)$, the $\alpha_i$ component has a significantly higher probability compared to the rest, and so these problems solve for a sort of probability separator that has the lowest norm.

\begin{definition}[Globally and Locally Optimal Tokens]\label{def:token:loptimal}
Consider the dataset \( (X_i, y_i, z_i)_{i=1}^n \).
 \vspace{-15pt}
\begin{enumerate}[label={\textnormal{{\textbf{\arabic*.}}}}, wide, labelindent=0pt, itemsep=-5pt]
\item  The tokens with indices $\op=(\op_i)_{i=1}^n$ are called globally optimal if they have the highest scores, given by $\op_i \in \arg\max_{t \in [T]}\gamma_{it}$.
\item  Fix token indices $(\alpha_i)_{i=1}^n$ for
which \eqref{eqn:w-2-svm} is feasible to obtain $W^\alpha_{\mathrm{mm}}$.  Let the support tokens $\mathcal{T}_i$ for the $i^{th}$ data be the set of tokens $\tau$ such that $(X_{i\alpha_i} - X_{i\tau})^\top W^\alpha_{\mathrm{mm}} z_i = 1$. The tokens with indices $(\alpha_i)_{i=1}^n$ are called locally optimal if, for all $i \in [n]$ and $\tau \in \mathcal{T}_i$, the scores per Def.~\ref{def:token:score} obey $\gamma_{i\alpha_i} > \gamma_{i\tau}$.
\end{enumerate}
\end{definition}

It is worth noting that token scoring and optimal token identification can help us understand the importance of individual tokens and their impact on the overall objective. A token score measures how much a token contributes to a prediction or classification task, while an optimal token is defined as the token with the highest relevance in the corresponding input sequence \citep{tarzanagh2023transformers}. For illustration, please refer to Figure \ref{fig:svm_token_visualization}.
\section{Implicit Bias of Mirror Descent for Optimizing Attention}

\subsection{Optimizing Attention with Fixed Head $v$ }\label{sec:w:train}
In this section, we assume the prediction head is fixed and focus on the directional convergence of \ref{eqn:md} and its token-selection property through the training of the key-query matrix \( W \). The analysis will later be expanded in Section \ref{sec:joint-optimization} to include the joint optimization of both \( v \) and \( W \).

We investigate the theoretical properties of the main algorithm of interest, namely \ref{eqn:md} with $\psi(\cdot) = \frac{1}{p} \|\cdot\|_{p,p}^p$ for $p > 1$ for training \eqref{eqn:erm} with fixed $v$. We shall call this algorithm \textit{$\ell_p$-norm AttGD} because it naturally generalizes attention training via GD to $\ell_p$ geometry, and for conciseness, we will refer to this algorithm by the shorthand \ref{alg:p:agd}. As noted by \cite{azizan2021stochastic}, this choice of mirror potential is particularly of practical interest because the mirror map $\nabla \psi$ updates become \textit{separable} in coordinates and thus can be implemented \textit{coordinate-wise} independently of other coordinates.  The update steps of the $\ell_p$-norm AttGD algorithm (\ref{alg:p:agd}) are given as follows: for all $i, j \in [d]$, and $k=0,1,\ldots,$
\begin{align}
\begin{cases}
    \left[W(k+1)\right]_{ij} \leftarrow \left| [W(k)]^+_{ij}\right|^{\frac{1}{p-1}} \cdot \textnormal{sign}\left([W(k)]^+_{ij}\right),\vspace{.3cm}\\
    [W(k)]^+_{ij} := |[W(k)]_{ij}|^{p-1} \textnormal{sign}([W(k)]_{ij}) - \eta [\nabla \L(W(k))]_{ij}. 
\end{cases}
 \tag{\texttt{$\ell_p$-AttGD}}\label{alg:p:agd}
\end{align}

The algorithm will still incur additional overhead compared to GD, but this overhead is linear in the size of the trainable parameters for both time and space. We discuss this further the Appendix \ref{alg:overhead}. Under Assumption~\ref{assumption-loss} on the loss function, we establish the following property of Algorithm~\ref{alg:p:agd}.

\begin{lemma}[$\ell_{p,p}$-Growth Bound of Attention Weights]\label{lem:norm-growth-bound}
Let Assumption \ref{assumption-loss} hold. Consider the sequence $\{W(k)\}_{k \geq 0}$ generated by Algorithm \ref{alg:p:agd} with stepsize $\eta > 0$. Then, the increment of the $\ell_{p,p}$-norm between consecutive iterations can be bounded as follows:
\[
\|W(k+1)\|_{p,p} - \|W(k)\|_{p,p} \leq \mathcal{O}\big(\|W(k)\|_{p,p}^{2-p}\big), \quad \textnormal{for all}~~~ k = 0, 1, 2, \dots~.
\]
\end{lemma}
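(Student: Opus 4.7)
My plan is to work in the dual (mirror) variable, where the update is linear, and then transport the resulting bound back to the primal via the inverse mirror map.

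\textbf{Step 1: Dual reformulation.} Introduce the mirror image $U(k) := \nabla\psi(W(k))$, whose entries are $U(k)_{ij} = |W(k)_{ij}|^{p-1}\,\textnormal{sign}(W(k)_{ij})$. With $\psi(\cdot)=\tfrac{1}{p}\|\cdot\|_{p,p}^p$, the \ref{eqn:md} recursion reads
\begin{equation*}
U(k+1) = U(k) - \eta\,\nabla\mathcal{L}(W(k)).
\end{equation*}
A direct computation in coordinates shows that, with $q:=p/(p-1)$ being the Hölder conjugate,
\begin{equation*}
\|U(k)\|_{q,q}^{\,q} \;=\; \sum_{ij}|W(k)_{ij}|^{(p-1)q}\;=\;\|W(k)\|_{p,p}^p,
\qquad\text{i.e.}\qquad \|U(k)\|_{q,q}=\|W(k)\|_{p,p}^{\,p-1}.
\end{equation*}

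\textbf{Step 2: Uniform bound on the loss gradient.} I want to show there is a constant $C>0$, depending only on $\{X_i,z_i\}_{i=1}^n$, on $v$, and on the bound on $l'$ from Assumption~\ref{assumption-loss}, such that $\|\nabla\mathcal{L}(W)\|_{q,q}\le C$ for every $W\in\mathbb{R}^{d\times d}$. Differentiating $\mathcal{L}$ entry-wise gives
\begin{equation*}
\nabla\mathcal{L}(W) \;=\; \frac{1}{n}\sum_{i=1}^n l'\!\left(y_i v^\top X_i^\top\S(X_iWz_i)\right)\, y_i\, X_i^\top\bigl(\mathrm{diag}(s_i)-s_is_i^\top\bigr)X_iv\,z_i^\top,
\end{equation*}
with $s_i:=\S(X_iWz_i)$. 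Here $s_i$ lies in the probability simplex, $\mathrm{diag}(s_i)-s_is_i^\top$ is bounded in every norm uniformly in $W$, $l'$ is bounded by Assumption~\ref{assumption-loss}, and $X_i,z_i,v$ are fixed. Equivalence of $\ell_{q,q}$-norms with the Frobenius norm on $\mathbb{R}^{d\times d}$ then gives the claimed uniform bound.

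\textbf{Step 3: Triangle inequality in the dual.} Applying the $\ell_{q,q}$-norm triangle inequality to the dual recursion and using Steps~1–2,
\begin{equation*}
\|W(k+1)\|_{p,p}^{\,p-1} \;=\; \|U(k+1)\|_{q,q} \;\le\; \|U(k)\|_{q,q} + \eta\,\|\nabla\mathcal{L}(W(k))\|_{q,q} \;\le\; \|W(k)\|_{p,p}^{\,p-1} + \eta C.
\end{equation*}

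\textbf{Step 4: Pull the bound back to the primal norm.} Set $g(t):=t^{1/(p-1)}$, so $\|W(k+1)\|_{p,p}\le g\!\bigl(\|W(k)\|_{p,p}^{\,p-1}+\eta C\bigr)$. The mean value theorem yields some $\xi\in[\|W(k)\|_{p,p}^{\,p-1},\,\|W(k)\|_{p,p}^{\,p-1}+\eta C]$ with
\begin{equation*}
\|W(k+1)\|_{p,p}-\|W(k)\|_{p,p}\;\le\; g'(\xi)\,\eta C\;=\;\frac{\eta C}{p-1}\,\xi^{(2-p)/(p-1)}.
\end{equation*}
When $p\ge 2$ the exponent $(2-p)/(p-1)$ is nonpositive, so $g'$ is decreasing and $\xi^{(2-p)/(p-1)}\le\|W(k)\|_{p,p}^{\,2-p}$. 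When $1<p<2$ the exponent is positive and $g'$ is increasing, so I bound $\xi\le\|W(k)\|_{p,p}^{\,p-1}+\eta C\le (1+\eta C)\,\max\{1,\|W(k)\|_{p,p}\}^{p-1}$, which again gives $\xi^{(2-p)/(p-1)}\le \mathrm{const}\cdot\max\{1,\|W(k)\|_{p,p}\}^{2-p}$. In either regime this yields the desired $\mathcal{O}\bigl(\|W(k)\|_{p,p}^{\,2-p}\bigr)$ bound (absorbing an additive constant in the $\mathcal{O}$ to handle the uninteresting regime where $\|W(k)\|_{p,p}$ is $\mathcal{O}(1)$).

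\textbf{Main obstacle.} The conceptual content is really only in Steps~1 and~2; the rest is calculus. The one place that requires care is Step~4, where the direction of monotonicity of $g'(t)=\tfrac{1}{p-1}t^{(2-p)/(p-1)}$ flips between $p\ge 2$ and $1<p<2$, so I must handle the two regimes separately (but identically in conclusion). Step~2 also requires a clean verification that the softmax Jacobian contribution is uniformly bounded in $W$, which is precisely where the nonlinearity of the model would cause trouble if the softmax were replaced by an unbounded nonlinearity, but here it does not.
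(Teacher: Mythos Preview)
Your proof is correct, and it takes a genuinely different route from the paper's. The paper works coordinate-wise: it first uses the scalar inequality $x^{1/p}-y^{1/p}\le \tfrac{1}{p}y^{1/p-1}(x-y)$ to pass from $\|W(k+1)\|_{p,p}-\|W(k)\|_{p,p}$ to $\|W(k+1)\|_{p,p}^{p}-\|W(k)\|_{p,p}^{p}$, then expands the latter as a sum over entries and bounds each $|W_{ij}(k+1)|^{p}-|W_{ij}(k)|^{p}$ via another scalar inequality together with the entrywise update $\bigl||W_{ij}(k+1)|^{p-1}-|W_{ij}(k)|^{p-1}\bigr|\le \eta|\nabla\mathcal{L}(W(k))_{ij}|$; the uniform gradient bound (the paper's Lemma on $\|\nabla\mathcal{L}(W)\|$) then closes the argument. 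By contrast, you recognize that the update is \emph{linear in the mirror image} $U(k)=\nabla\psi(W(k))$, observe the clean identity $\|U(k)\|_{q,q}=\|W(k)\|_{p,p}^{p-1}$, and apply the triangle inequality directly in the dual to get $\|W(k+1)\|_{p,p}^{p-1}\le\|W(k)\|_{p,p}^{p-1}+\eta C$; the mean value theorem on $t\mapsto t^{1/(p-1)}$ then does in one line what the paper's coordinate bookkeeping does in several. Your approach is more conceptual and arguably cleaner—it isolates the place where the $p$-geometry enters (the exponent $(2-p)/(p-1)$ in $g'$) and makes the case split at $p=2$ transparent—while the paper's approach is more elementary in that it never leaves the primal and relies only on scalar power inequalities. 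Both need the same uniform bound on $\nabla\mathcal{L}$ (your Step~2, the paper's gradient-boundedness lemma), and both implicitly read the $\mathcal{O}(\cdot)$ as asymptotic in $\|W(k)\|_{p,p}$, which is what justifies your handling of the small-norm regime in Step~4.
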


Lemma~\ref{lem:norm-growth-bound} characterizes how the $\ell_{p,p}$-norm of attention weights evolves during training. When $p > 2$, the bound indicates sublinear growth, while for $p < 2$, it allows superlinear growth. This controlled growth property plays a fundamental role in our convergence rate in Theorem~\ref{thm-rate}, as it enables us to characterize the asymptotic behavior of the Bregman divergence between normalized iterates and the max-margin solution. Specifically, the growth rate established here directly influences the poly-logarithmic factors in the convergence rate, which varies depending on whether $p > 2$, $p = 2$, or $p < 2$.

In the following, we first identify the conditions that guarantee the convergence of \ref{alg:p:agd}. The intuition is that, for attention to exhibit implicit bias, the softmax nonlinearity should select the locally optimal token within each input sequence. \cite{tarzanagh2023transformers} shows that under certain assumptions, training an attention model using GD causes its parameters' direction to converge.

This direction can be found by solving a simpler optimization problem, such as \eqref{eqn:w-2-svm}, which selects the locally optimal token. Depending on the attention model's parameterization, the attention SVM varies slightly. In this work, we generalize \eqref{eqn:w-2-svm} using the $\ell_p$-norm as follows:

\begin{definition}[Attention SVM with $\ell_p$--norm Objective]\label{def:w:attsvm}
Given  $\{(X_i, y_i, z_i)\}_{i=1}^n$ with $y_i\in \{\pm 1\}$, $X_{i}\in\R^{T\times d}$, and token indices $(\a_i)_{i=1}^n$, $\ell_p$-based attention SVM is defined as 
\begin{align}\label{eqn:w-svm}
W^\a_\mathrm{mm}&:=\argmin_{W \in \mathbb{R}^{d \times d}}\|W\|_{p,p}     \tag{{$\ell_p$-AttSVM}}    \\
&\textnormal{subj.~to}~~(X_{i\a_i}-X_{it})^\top Wz_i\geq1,~~\textnormal{for all}~~t\in[T]-\{\a_i\},~i\in[n].    
\nonumber 
\end{align}
\end{definition}

\begin{wrapfigure}{r}{0.42\textwidth}
    \centering
    \vspace{-2cm}
    \includegraphics[width=0.41\textwidth]{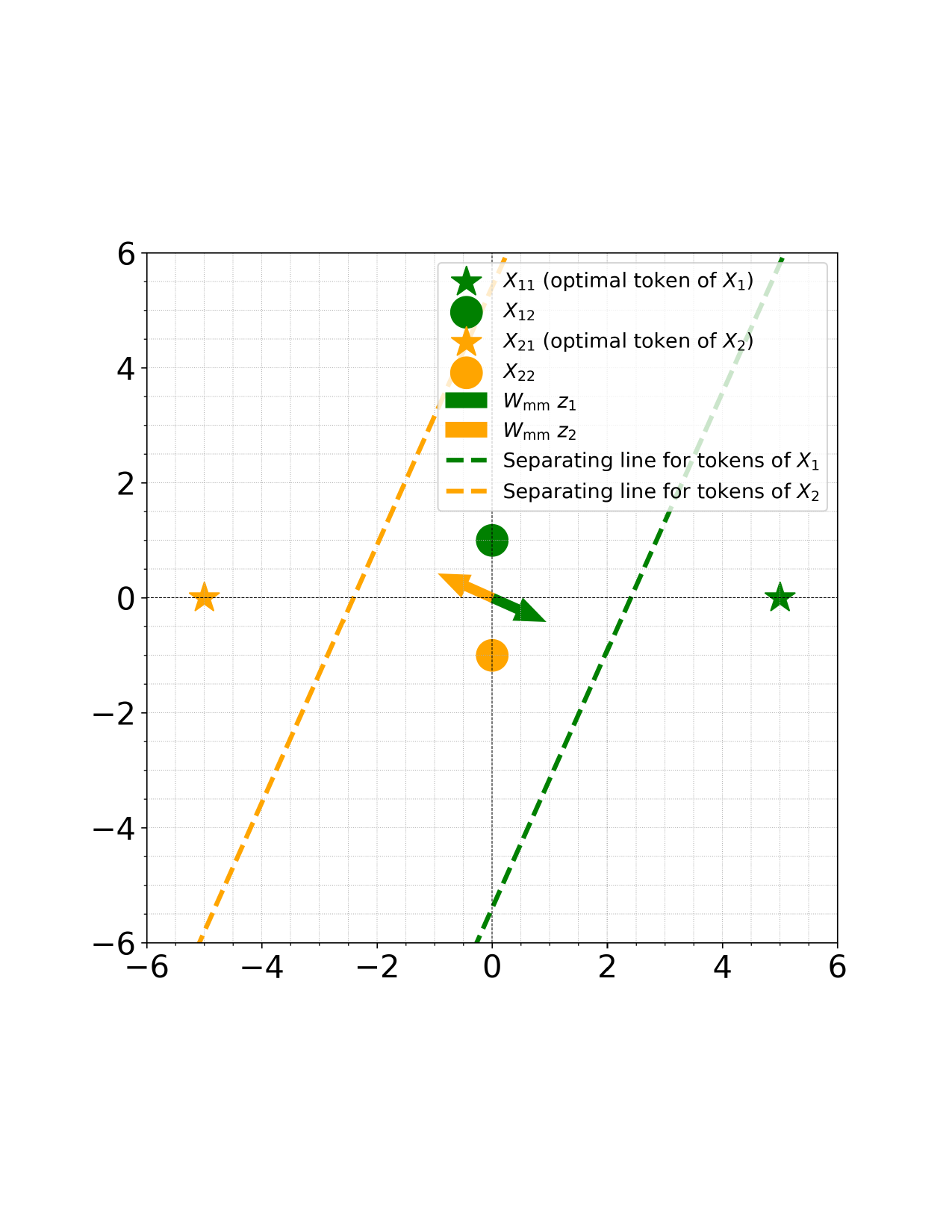}
            \vspace{-2cm}
\caption{\small{Visualization of \eqref{eqn:w-svm} for $p=3$.}}
            \vspace{-.3cm}
    \label{fig:svm_token_visualization}
\end{wrapfigure}

Problem \eqref{eqn:w-svm} is strictly convex, so it has unique solutions when feasible. Throughout this paper, we assume feasibility, which means there exists a matrix \( W \) that linearly separates the logits \( X_{i\a_i}^\top Wz_i \) from the logits \( X_{it}^\top Wz_i \) for all \( t \in [T] \setminus \{\a_i\} \) and \( i \in [n] \). 

It is worth noting that this is not a strong assumption. For example, under mild overparameterization, $d \geq \max\{T-1,n\}$, the problem is almost always feasible \cite[Theorem 1]{tarzanagh2023transformers}. Next, we assert that the solution to the \eqref{eqn:w-svm} problems determines the direction that the attention model parameters approach as the training progresses. 
%

\begin{example}\label{exm:svm_token_visualization}
Consider the matrices $X_1 = [5,~0;~0,~1]$ and $X_2 = [-5,~0;~0,~-1]$ with $y_1 = -y_2 = 1$. Let \( X_{i1} \) be the optimal token and \( X_{it} \) be the others.  Solving Problem \eqref{eqn:w-svm} with \( p=3 \) and setting \( z_i = X_{i1} \), we obtain the solution $W_\mathrm{mm}:= W^\a_\mathrm{mm} = [0.03846,~0;~-0.00769,~0]$. Figure~\ref{fig:svm_token_visualization} illustrates how the optimal tokens \( X_{11} \) and \( X_{21} \) are separated by the dashed decision boundaries. These boundaries are orthogonal to the vectors \( W_\mathrm{mm} z_i \) and indicate the hyperplanes that separate the sequences based on the optimal token in each case. 
\end{example}

\begin{theorem}[$\ell_p$--norm Regularization Path]\label{thm:rp:w}
Suppose Assumption~\ref{assumption-loss} on the loss function holds. Consider the ridge-constrained solutions \(W^{(R)}\) of \eqref{eqn:erm} defined as 
\begin{equation}\label{alg:arp}
W^{(R)} := \argmin_{W \in \mathbb{R}^{d \times d}}~ \mathcal{L}( W) \quad \textnormal{subj. to} \quad \|W\|_{p,p} \leq R.
\tag{\texttt{$\ell_p$-AttRP}}
\end{equation} 
Then, \(\lim_{R \to \infty} W^{(R)}/R= W^{\opt}_\mathrm{mm}/{\|W^{\opt}_\mathrm{mm}\|_{p,p}}\), where \(W^{\opt}_\mathrm{mm}\) is the solution of \eqref{eqn:w-svm}, with \(\alpha_i\) replaced by \(\opt_i\).
 \end{theorem}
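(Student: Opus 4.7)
The plan is to adapt the classical ``regularization-path converges to the max-margin direction'' argument (à la Rosset--Zhu--Hastie for linear models, and as done for $p=2$ attention in \cite{tarzanagh2023transformers}) to the $\ell_p$-norm geometry of \eqref{eqn:w-svm}. The central observation is that as $R\to\infty$ both $W^{(R)}$ and the explicit test vector
\[
\bar W_R := R\, W^{\opt}_{\mathrm{mm}}/\|W^{\opt}_{\mathrm{mm}}\|_{p,p}
\]
drive the loss down to the common lower envelope $\mathcal{L}^\star := \tfrac{1}{n}\sum_i l(\gamma_{i\opt_i})$, and matching the exponential decay rates forces any subsequential limit direction $\hat W$ of $W^{(R)}/R$ to separate the globally optimal tokens with the same $\ell_{p,p}$-margin that $W^{\opt}_{\mathrm{mm}}/\|W^{\opt}_{\mathrm{mm}}\|_{p,p}$ achieves; uniqueness of the solution of \eqref{eqn:w-svm} (strict convexity of $\|\cdot\|_{p,p}^p$ for $p>1$) then pins $\hat W$ down exactly.

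Concretely I would proceed as follows. \textbf{Step 1 (lower envelope).} Since $\S(X_iWz_i)$ lies in the simplex, the logit $y_i v^\top X_i^\top\S(X_iWz_i)$ is a convex combination of the token scores $\{\gamma_{it}\}_t$, hence bounded above by $\gamma_{i\opt_i}$; strict monotonicity of $l$ then gives $\mathcal{L}(W)\geq \mathcal{L}^\star$ for all $W$. \textbf{Step 2 (upper envelope via the SVM direction).} The constraints in \eqref{eqn:w-svm} give $X_{i\opt_i}^\top \bar W_R z_i - X_{it}^\top \bar W_R z_i \geq R/\|W^{\opt}_{\mathrm{mm}}\|_{p,p}$ for every $t\neq \opt_i$, so $\S(X_i\bar W_R z_i)_{\opt_i}\geq 1-(T-1)\exp(-R/\|W^{\opt}_{\mathrm{mm}}\|_{p,p})$; combining with boundedness of the scores and the Lipschitz bound on $l$ from Assumption~\ref{assumption-loss} yields $\mathcal{L}(\bar W_R)-\mathcal{L}^\star = \mathcal{O}\!\bigl(e^{-R/\|W^{\opt}_{\mathrm{mm}}\|_{p,p}}\bigr)$, which by optimality of $W^{(R)}$ transfers to the same rate for $\mathcal{L}(W^{(R)})-\mathcal{L}^\star$. \textbf{Step 3 (feasibility of the limit direction).} Strict decrease of $l$ along $W\mapsto W+\bar W_1$ implies the constraint is active, so $\|W^{(R)}\|_{p,p}=R$ for all large $R$; by compactness of the $\ell_{p,p}$-sphere every $R_k\to\infty$ has a subsequence with $W^{(R_k)}/R_k\to\hat W$, $\|\hat W\|_{p,p}=1$. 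Setting $\beta(\hat W):=\min_{i,\,t\neq \opt_i}(X_{i\opt_i}-X_{it})^\top \hat W z_i$, if $\beta(\hat W)\leq 0$ then $\S(X_i W^{(R_k)} z_i)$ leaks mass onto some $t$ with $\gamma_{it}<\gamma_{i\opt_i}$ and $\liminf_k \mathcal{L}(W^{(R_k)})>\mathcal{L}^\star$, contradicting Step 2; hence $\beta(\hat W)>0$ and $\hat W/\beta(\hat W)$ is feasible for \eqref{eqn:w-svm}, giving $\beta(\hat W)\leq 1/\|W^{\opt}_{\mathrm{mm}}\|_{p,p}$. \textbf{Step 4 (matching rates).} Repeating the calculation of Step 2 with the actual margin $R_k\beta(\hat W)(1+o(1))$ of $W^{(R_k)}$ produces a matching lower bound $\mathcal{L}(W^{(R_k)})-\mathcal{L}^\star = \Omega\!\bigl(e^{-R_k\beta(\hat W)}\bigr)$; comparing with Step 2 forces $\beta(\hat W)\geq 1/\|W^{\opt}_{\mathrm{mm}}\|_{p,p}$, so $\beta(\hat W)=1/\|W^{\opt}_{\mathrm{mm}}\|_{p,p}$ and $\hat W/\beta(\hat W)$ is an optimum of \eqref{eqn:w-svm}; uniqueness then identifies $\hat W=W^{\opt}_{\mathrm{mm}}/\|W^{\opt}_{\mathrm{mm}}\|_{p,p}$, and since every subsequential limit agrees the full limit exists.

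The main obstacle is Step 4: one needs a genuinely two-sided exponential estimate of $\mathcal{L}(W^{(R)})-\mathcal{L}^\star$ in terms of the \emph{realized} margin of $W^{(R)}$, not merely an upper bound. The upper bound is a direct softmax concentration calculation, but the lower bound requires (i) controlling the $o(R_k)$ slack between $W^{(R_k)}$ and $R_k\hat W$ so the logit gap is still $R_k\beta(\hat W)(1+o(1))$, and (ii) a local strict-monotonicity bound on $l$ near $\gamma_{i\opt_i}$ to convert leaked softmax mass into a quantified loss increase. Ancillary bookkeeping issues—possible ties in $\arg\max_t \gamma_{it}$ and the need to rule out $\beta(\hat W)=0$ via a polynomial correction to the exponential—are handled under the standing assumption that $\opt_i$ is a well-defined globally optimal index with strict score separation.
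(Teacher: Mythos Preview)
Your proposal is correct and follows essentially the same approach as the paper's (deferred) proof: the paper simply cites \cite[Theorem~1 and Lemma~14]{ataee2024max} and observes that the same argument goes through with $\|\cdot\|_{p,p}$ in place of the Frobenius norm, since the structure of the regularization-path proof depends only on the softmax concentration estimates and the strict convexity/uniqueness of the SVM objective, not on the particular norm. The detailed two-sided exponential comparison you outline in Steps~2--4 is exactly the Rosset--Zhu--Hastie style argument underlying those references, so you have reconstructed what the paper leaves implicit.
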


Theorem~\ref{thm:rp:w} shows that as the radius \( R \) increases, the optimal direction \( W^{(R)} \) aligns more closely with the max-margin solution \( W^\alpha_\mathrm{mm} \). This theorem, which allows for globally optimal tokens (see Definition~\ref{def:token:loptimal}), does not require any specific initialization for the \ref{alg:arp} algorithm and demonstrates that max-margin token separation is an essential feature of the attention mechanism. 

Next, we analyze the convergence of \ref{eqn:md} applied to \eqref{eqn:erm}. Under specific initializations, the parameter's \(\ell_p\)-norm diverges to infinity, while its direction approaches the \eqref{eqn:w-svm} solution. To characterize these initializations, we define the following sets.
\begin{definition}\label{def:cone:ellp}
Given a square matrix $W\in\mathbb{R}^{d\times d}$, $\mu\in(0,1)$, and some $R>0$,
\begin{subequations}
\begin{align}
\label{eqn:def:spmu}
S_{p,\mu}(W) &:= \left\{W'\in\mathbb{R}^{d\times d} \mid D_\psi\left(\frac{W}{\|W\|_{p,p}},\frac{W'}{\|W'\|_{p,p}}\right) \leq \mu\right\}, \\
C_{p,\mu,R}(W) &:= S_\mu(W) \cap \left\{W' \mid \|W'\|_{p,p} \geq R\right\}.
\end{align}
\end{subequations}
\end{definition}

These sets contain matrices with a similar direction to a reference matrix \(W\), as captured by the inner product in \(S_\mu(W)\). For \(C_{p,\mu,R}(W)\), there is an additional constraint that the matrices must have a sufficiently high norm. We note that \(S_{p,\mu}(W)\) and \(C_{p,\mu,R}(W)\) reduce to their Euclidean variants as described in \cite{ataee2024max,tarzanagh2023transformers} when $p=2$. With this definition, we present our first theorem about the norm of the parameter increasing during training.

\begin{theorem}\label{thm-norm}
Suppose Assumption \ref{assumption-loss} holds. Let \((\alpha_i)_{i=1}^n\) be locally optimal tokens as per Definition \ref{def:token:loptimal}. Consider the sequence \(W(k)\) generated by Algorithm \ref{alg:p:agd}. For a small enough stepsize $\eta$, if \(W(0) \in C_{p,\mu,R}(W^\alpha_\mathrm{mm})\) for some dataset-dependent constants \(\mu, R > 0\), then we have $\|W(k)\|_{p,p}=\Omega(\log k)$.
\end{theorem}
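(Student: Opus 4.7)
The plan is to track the dual-space alignment potential $\Phi(k) := \langle \nabla\psi(W(k)), W^\alpha_\mathrm{mm}\rangle$ and translate cumulative alignment with the max-margin solution into norm growth. Using the MD update identity $\nabla\psi(W(k+1)) - \nabla\psi(W(k)) = -\eta\nabla\mathcal{L}(W(k))$, the fact that $[\nabla\psi(W)]_{ij} = |W_{ij}|^{p-1}\mathrm{sign}(W_{ij})$ for $\psi(W) = \tfrac{1}{p}\|W\|_{p,p}^p$ (which gives $\|\nabla\psi(W)\|_{q,q} = \|W\|_{p,p}^{p-1}$ with $q$ conjugate to $p$), and H\"older's inequality, telescoping yields
\begin{align*}
\|W(k)\|_{p,p}^{p-1}\,\|W^\alpha_\mathrm{mm}\|_{p,p} \;\geq\; \Phi(k) \;=\; \Phi(0) + \eta\sum_{j=0}^{k-1}\bigl(-\langle \nabla\mathcal{L}(W(j)), W^\alpha_\mathrm{mm}\rangle\bigr).
\end{align*}
The problem thus reduces to lower bounding each gradient--margin inner product by an explicit decreasing function of $\|W(j)\|_{p,p}$.

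For this, I would expand the attention gradient to obtain
\begin{align*}
-\langle \nabla\mathcal{L}(W(k)), W^\alpha_\mathrm{mm}\rangle \;=\; \frac{1}{n}\sum_{i=1}^n \bigl(-l'(y_iv^\top\bar X_i(k))\bigr)\cdot \mathrm{Cov}_{s_i(k)}\!\bigl(X_iW^\alpha_\mathrm{mm}z_i,\,y_iX_iv\bigr),
\end{align*}
where $\bar X_i(k) = X_i^\top s_i(k)$ and $s_i(k) = \S(X_iW(k)z_i)$. The max-margin constraint makes the first argument of each covariance maximized at $\alpha_i$ by at least $1$; the locally-optimal hypothesis (Definition~\ref{def:token:loptimal}) makes the second argument strictly maximized at $\alpha_i$ among the support set. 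Together these force a positive covariance lower bound proportional to the softmax mass on non-optimal support tokens. Invoking the cone invariance $W(k) \in C_{p,\mu,R}(W^\alpha_\mathrm{mm})$ furnished by Theorem~\ref{thm-direction}, the logit gap $(X_{i\alpha_i}-X_{it})^\top W(k)z_i$ grows linearly in $\|W(k)\|_{p,p}$ for support $t$, so that mass is at least $c_1\exp(-c_2\|W(k)\|_{p,p})$, while non-support tokens have strictly larger logit gap and hence exponentially smaller mass. Since Assumption~\ref{assumption-loss} combined with the boundedness of $\bar X_i(k)$ bounds $|l'(y_iv^\top\bar X_i(k))|$ below by a positive constant, this gives $-\langle \nabla\mathcal{L}(W(j)), W^\alpha_\mathrm{mm}\rangle \geq c_3\exp(-c_4\|W(j)\|_{p,p})$.

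Plugging this back produces the self-consistent recursion $\|W(k)\|_{p,p}^{p-1} \geq C_0 + C_1 \sum_{j<k}\exp(-c_4\|W(j)\|_{p,p})$, which I would close by the ansatz $\|W(j)\|_{p,p} \geq \rho\log j$ for any $\rho \leq 1/c_4$: each summand then decays as $j^{-c_4\rho}$, so the partial sum is $\Omega(\log k)$, giving $\|W(k)\|_{p,p}^{p-1} = \Omega(\log k)$ and therefore $\|W(k)\|_{p,p} = \Omega(\log k)$ for every $p > 1$ (with the dataset-dependent constant absorbing the $(p-1)$-th root through $\|W^\alpha_\mathrm{mm}\|_{p,p}$). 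The main obstacle I anticipate is the covariance lower bound, since non-support tokens may have score gaps $\gamma_{i\alpha_i}-\gamma_{it}$ of arbitrary sign, so ruling out cancellation requires exploiting the strict inequality in their logit gap so that their (possibly negative) contribution is exponentially dominated by the (positive) support contribution. A secondary technical point is verifying that the cone-invariance hypothesis used above is self-consistent with the stepsize choice here, which likely appeals to Lemma~\ref{lem:norm-growth-bound} to control the single-step drift of $\|W(k)\|_{p,p}$ and prevent escape from $C_{p,\mu,R}$.
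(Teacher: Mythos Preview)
Your overall strategy---lower-bounding $-\langle\nabla\mathcal{L}(W(k)),W^\alpha_\mathrm{mm}\rangle$ by $c_3\exp(-c_4\|W(k)\|_{p,p})$ via a covariance expansion of the attention gradient, and then converting cumulative alignment into norm growth---is essentially the paper's route (Lemmas~\ref{lem:obj:grad-corr-bound-1}, \ref{lem:alg:md-corr-bound}, \ref{lem:rate:norm-asym}). The paper differs in that it tracks the one-step increment $\|W(k+1)\|_{p,p}^{p-1}-\|W(k)\|_{p,p}^{p-1}$ through $\langle-\nabla\mathcal{L}(W(k)),W(k)\rangle$ (i.e.\ the normalized \emph{iterate} in the inner product, not $W^\alpha_\mathrm{mm}$), which directly yields monotonicity of $\|W(k)\|_{p,p}$; it then closes with a doubling argument: if $k_{i+1}$ is the first time the norm exceeds $\|W(k_i)\|_{p,p}+1$, the exponential lower bound forces $k_{i+1}-k_i\leq e^{O(\|W(k_i)\|_{p,p})}$, hence $\|W(k)\|_{p,p}=\Omega(\log k)$.

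Your closure step, however, has the inequality backwards. Assuming $\|W(j)\|_{p,p}\geq\rho\log j$ gives $\exp(-c_4\|W(j)\|_{p,p})\leq j^{-c_4\rho}$, an \emph{upper} bound on each summand, so it does not yield $\sum_{j<k}\exp(-c_4\|W(j)\|_{p,p})=\Omega(\log k)$. Lower-bounding that sum requires \emph{upper} bounds on $\|W(j)\|_{p,p}$, which your ansatz does not supply. The paper avoids ever summing the exponentials by working with the one-step recursion and the doubling argument above. A related symptom: even granting $\|W(k)\|_{p,p}^{p-1}=\Omega(\log k)$, you only get $\|W(k)\|_{p,p}=\Omega((\log k)^{1/(p-1)})$, strictly weaker than $\Omega(\log k)$ for $p>2$; the exponent $1/(p-1)$ cannot be ``absorbed into a constant.'' Finally, invoking Theorem~\ref{thm-direction} for cone invariance is circular in the paper's logic (that theorem is derived from Theorem~\ref{thm-rate}, which itself uses the present result); the correct standalone ingredient is Lemma~\ref{lemma-stay-in-cone}.
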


\begin{remark} The condition on the stepsize $\eta$ is that it must be sufficiently small so that $\psi(\cdot)-\eta\mathcal{L}(\cdot)$ remains convex for the matrices $W$ along the path traced by the iterates $W(k)$. This applies to all theorems in this paper that require a sufficiently small stepsize $\eta$. 
\end{remark}
This theorem implies that the parameters will increase and diverge to infinity, justifying the need to characterize the convergence of their direction. 

\begin{theorem}[Convergence of \ref{alg:p:agd}]\label{thm-direction}
Suppose Assumption \ref{assumption-loss} holds. Let \((\alpha_i)_{i=1}^n\) be locally optimal tokens as per Definition \ref{def:token:loptimal}. Consider the sequence \(W(k)\) generated by Algorithm \ref{alg:p:agd}. For a small enough stepsize $\eta$, if \(W(0) \in C_{p,\mu,R}(W^\alpha_\mathrm{mm})\) for some dataset-dependent constants \(\mu > 0, R > \textnormal{exp}(2)\), then
    \[
    \lim_{k \rightarrow \infty} \frac{W(k)}{\|W(k)\|_{p,p}} = \frac{W^\alpha_\mathrm{mm} }{\|W^\alpha_\mathrm{mm} \|_{p,p}}.
    \]
\end{theorem}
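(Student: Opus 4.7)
The plan is to control the normalized Bregman divergence
\[
\Delta(k):=D_\psi\!\left(\frac{W^\a_\mathrm{mm}}{\|W^\a_\mathrm{mm}\|_{p,p}},\,\frac{W(k)}{\|W(k)\|_{p,p}}\right)
\]
by exploiting three ingredients: the dual-space form of \ref{eqn:md}, namely $\nabla\psi(W(k+1))=\nabla\psi(W(k))-\eta\nabla\L(W(k))$; the cone-invariance machinery encoded in $C_{p,\mu,R}$; and the divergence $\|W(k)\|_{p,p}=\Omega(\log k)$ from Theorem~\ref{thm-norm}. The target is to show $\Delta(k)\to 0$, which by strict convexity of $\psi$ is equivalent to the directional statement.

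\textbf{Step 1 (cone invariance).} I would first prove by induction that, for sufficiently small $\mu$, sufficiently large $R$, and small enough stepsize $\eta$, the hypothesis $W(0)\in C_{p,\mu,R}(W^\a_\mathrm{mm})$ implies $W(k)\in C_{p,\mu,R_k}(W^\a_\mathrm{mm})$ for some $R_k\geq R$ increasing in $k$. The norm-growth half follows from Theorem~\ref{thm-norm} (combined with the per-step bound in Lemma~\ref{lem:norm-growth-bound}); the directional half requires showing that the dual-space update moves $\nabla\psi(W(k))$ in a direction nearly aligned with $\nabla\psi(W^\a_\mathrm{mm})$. The mechanism is the following: once $W(k)/\|W(k)\|_{p,p}$ is close enough to $W^\a_\mathrm{mm}/\|W^\a_\mathrm{mm}\|_{p,p}$ and $\|W(k)\|_{p,p}$ is large, the softmax vector $\S(X_iW(k)z_i)$ concentrates on $\a_i$ and the support tokens $\T_i$, and since the $\a_i$ are locally optimal (Definition~\ref{def:token:loptimal}), $-\nabla\L(W(k))$ has essentially the same dual-space profile as $\nabla\psi(W^\a_\mathrm{mm})$ up to a scalar.

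\textbf{Step 2 (correlation with the SVM direction).} Next I would establish the mirror-descent analogue of the max-margin correlation estimate used for GD in \cite{tarzanagh2023transformers, ataee2024max}: for all $W$ in the invariant cone,
\[
\bigl\langle -\nabla\L(W),\,W^\a_\mathrm{mm}\bigr\rangle \;\geq\; (1+\delta)\,\|W^\a_\mathrm{mm}\|_{p,p}\cdot\frac{\bigl\langle -\nabla\L(W),\,W\bigr\rangle}{\|W\|_{p,p}},
\]
where $\delta>0$ depends on the score gap $\gamma_{i\a_i}-\max_{\tau\in\T_i}\gamma_{i\tau}$ and on the \eqref{eqn:w-svm} margin. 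The proof splits each inner product into contributions from support tokens $\T_i$ (which are handled by the unit-margin constraint of \eqref{eqn:w-svm}) and non-support tokens (which contribute negligibly once $\|W\|_{p,p}$ is large, by the exponential suppression of softmax weights).

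\textbf{Step 3 (Bregman three-point identity and decay).} Combining the dual update with the standard three-point identity for $D_\psi$ gives
\[
D_\psi(W^\a_\mathrm{mm},W(k+1))-D_\psi(W^\a_\mathrm{mm},W(k))=-D_\psi(W(k+1),W(k))+\eta\,\bigl\langle \nabla\L(W(k)),\,W^\a_\mathrm{mm}-W(k+1)\bigr\rangle.
\]
Plugging the correlation bound from Step~2 (together with smoothness of $\L$ on bounded sets to control the $W(k+1)\leftrightarrow W(k)$ swap) shows that the un-normalized divergence $D_\psi(c\,W^\a_\mathrm{mm},W(k))$, for a suitable scaling $c=c(k)\propto \|W(k)\|_{p,p}/\|W^\a_\mathrm{mm}\|_{p,p}$, grows strictly slower than $\psi(W(k))$ itself.

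\textbf{Step 4 (passage to the normalized quantity; expected main obstacle).} The hard part will be converting the unnormalized estimate into a bound on $\Delta(k)$, because $D_\psi$ is not $1$-homogeneous: under $\psi=\tfrac1p\|\cdot\|_{p,p}^p$ one has $\psi(cW)=c^p\psi(W)$ and $\nabla\psi(cW)=c^{p-1}\nabla\psi(W)$. I would therefore derive a renormalization lemma expressing $\Delta(k)$ in terms of $D_\psi(W^\a_\mathrm{mm},W(k))$ divided by $\|W(k)\|_{p,p}^p$, up to an error absorbed by the divergence of $\|W(k)\|_{p,p}$, and then treat the regimes $p>2$, $p=2$, $p<2$ separately, since Lemma~\ref{lem:norm-growth-bound} produces qualitatively different per-step norm increments in each. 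Combining the renormalized decay from Step~3 with the $\Omega(\log k)$ lower bound on $\|W(k)\|_{p,p}$ yields $\Delta(k)\to 0$, from which the claimed directional limit follows.
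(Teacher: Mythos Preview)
Your overall architecture---cone invariance, a gradient-correlation comparison between $\bar W$ and $\bar W^\a_\mathrm{mm}$, a Bregman/three-point computation, then normalization---is exactly the skeleton the paper uses (Lemmas~\ref{lemma-stay-in-cone}, \ref{lem:obj:grad-corr-bound-2}, \ref{lem:alg:corr-bound-2}, \ref{lem:rate:bregman-asym}), and the paper in fact deduces Theorem~\ref{thm-direction} as an immediate corollary of the rate Theorem~\ref{thm-rate}. So the plan is on target, but two points deserve correction.

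\textbf{Step 2 as written is false.} You claim $\langle -\nabla\L(W),\,\bar W^\a_\mathrm{mm}\rangle \ge (1+\delta)\,\langle -\nabla\L(W),\,\bar W\rangle$ with a \emph{fixed} $\delta>0$ depending only on the score gap and the SVM margin. This cannot hold uniformly on the cone: when $W$ is a scalar multiple of $W^\a_\mathrm{mm}$ the two sides coincide, so no strict multiplicative gap survives. The correct inequality, and the one the paper proves (Lemma~\ref{lem:obj:grad-corr-bound-2}), goes the other way: for every $\pi\in(0,1)$ there is $R_\pi=\Theta\!\big(\tfrac{1}{\pi\delta}\log\tfrac{\delta}{\pi}\big)$ such that
\[
\langle -\nabla\L(W),\,\bar W^\a_\mathrm{mm}\rangle \;\ge\; \frac{1}{1+\pi}\,\langle -\nabla\L(W),\,\bar W\rangle
\qquad\text{for all }W\in C_{p,\mu_0,R_\pi}(W^\a_\mathrm{mm}).
\]
The whole point is that $\pi$ can be driven to $0$ as $\|W(k)\|_{p,p}\to\infty$; the paper takes $\pi\asymp \log\|W(k)\|_{p,p}/\|W(k)\|_{p,p}$ in the rate lemma. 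A fixed-$\delta$ version would make the proof trivial but simply is not available here.

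\textbf{Steps 3--4 are harder than necessary.} Working with the un-normalized divergence $D_\psi(c\,W^\a_\mathrm{mm},W(k))$ and then renormalizing forces you to fight the $p$-homogeneity of $\psi$ head-on. The paper avoids this entirely by using the identity
\[
D_\psi(\bar W^\a_\mathrm{mm},\bar W(k)) \;=\; 1-\langle\nabla\psi(\bar W(k)),\,\bar W^\a_\mathrm{mm}\rangle
\;=\; 1-\|W(k)\|_{p,p}^{1-p}\,\langle\nabla\psi(W(k)),\,\bar W^\a_\mathrm{mm}\rangle,
\]
which is specific to $\psi=\tfrac1p\|\cdot\|_{p,p}^p$ on the unit sphere. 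One then telescopes $\langle\nabla\psi(W(k+1))-\nabla\psi(W(k)),\,\bar W^\a_\mathrm{mm}\rangle=\eta\langle-\nabla\L(W(k)),\,\bar W^\a_\mathrm{mm}\rangle$ directly, applies the $\tfrac{1}{1+\pi}$ bound, and compares against $\|W(k+1)\|_{p,p}^{p-1}-\|W(k)\|_{p,p}^{p-1}$ (via Lemma~\ref{lem:alg:corr-bound-1} and Lemma~\ref{lem:norm:xpyp}). This yields
\[
\|W(K)\|_{p,p}^{p-1} D_\psi(\bar W^\a_\mathrm{mm},\bar W(K)) \;\le\; \|W(0)\|_{p,p}^{p-1} D_\psi(\bar W^\a_\mathrm{mm},\bar W(0)) \;+\; \text{(sum of }\pi\text{-terms)} \;+\; O(1),
\]
and the $p$-dependent case split you anticipate in Step~4 enters only when estimating that sum via Lemma~\ref{lem:norm-growth-bound}. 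With the correct $\tfrac{1}{1+\pi}$ inequality and this direct normalized identity, your outline becomes the paper's proof.
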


These theorems show that as the parameters grow large enough and approach a locally optimal direction, they will keep moving toward that direction. {While our results build on the max-margin token selection framework of~\cite{tarzanagh2023transformers}, the extension to $\ell_p$-mirror descent is nontrivial: we establish directional convergence under non-quadratic potentials where standard Euclidean arguments fail, and provide the first explicit finite-time poly-logarithmic convergence rates for attention optimization in Theorem~\ref{thm-rate}.}

\begin{theorem}[Convergence Rate of \ref{alg:p:agd}]\label{thm-rate}
Suppose Assumption \ref{assumption-loss} holds. Let \((\alpha_i)_{i=1}^n\) be locally optimal tokens as per Definition \ref{def:token:loptimal}. Consider the sequence \(W(k)\) generated by Algorithm \ref{alg:p:agd}. For a small enough stepsize $\eta$, if \(W(0) \in C_{p,\mu,R}(W^\alpha_\mathrm{mm})\) for some \(\mu > 0, R > \textnormal{exp}(2)\), then
\begin{equation}\label{eqn:dwk:final:rate}
D_\psi\left(\frac{W^\alpha_\mathrm{mm}}{\|W^\alpha_\mathrm{mm}\|_{p,p}}, \frac{W(k)}{\|W(k)\|_{p,p}}\right) = \mathcal{O}\left( 
\begin{cases}
\frac{\log \log k}{\log k} & \textnormal{if}~~p > 2, \\
\frac{(\log \log k)^2}{\log k} & \textnormal{if}~~p = 2, \\
\frac{1}{(\log k)^{p-1}} & \textnormal{otherwise}.
\end{cases}\right).
\end{equation}
\end{theorem}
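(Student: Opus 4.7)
The plan is to exploit the positive $p$-homogeneity of the potential $\psi(\cdot)=\tfrac{1}{p}\|\cdot\|_{p,p}^p$ to rewrite the normalized Bregman divergence as a correlation-type quantity, and then track this quantity across the \ref{eqn:md} iterations. Using $\nabla\psi(cW)=c^{p-1}\nabla\psi(W)$ together with Euler's identity $\langle\nabla\psi(W),W\rangle=p\psi(W)$, one obtains the identity
\[
D_\psi\!\left(\frac{W^\alpha_\mathrm{mm}}{\|W^\alpha_\mathrm{mm}\|_{p,p}},\frac{W(k)}{\|W(k)\|_{p,p}}\right) \;=\; 1-\frac{\langle\nabla\psi(W(k)),W^\alpha_\mathrm{mm}\rangle}{\|W(k)\|_{p,p}^{p-1}\,\|W^\alpha_\mathrm{mm}\|_{p,p}},
\]
so it suffices to lower-bound this inner product as the iterations proceed.

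The mirror-map update unrolls to $\nabla\psi(W(k))=\nabla\psi(W(0))-\eta\sum_{j=0}^{k-1}\nabla\mathcal{L}(W(j))$; pairing with $W^\alpha_\mathrm{mm}$ reduces the estimate to bounding the running sum $\sum_{j=0}^{k-1}\langle-\nabla\mathcal{L}(W(j)),W^\alpha_\mathrm{mm}\rangle$. Expanding $\nabla\mathcal{L}$ through the softmax Jacobian and using the margin inequalities $(X_{i\alpha_i}-X_{it})^\top W^\alpha_\mathrm{mm}z_i\ge 1$ from \eqref{eqn:w-svm} yields a term-by-term lower bound of the form
\[
\langle-\nabla\mathcal{L}(W(j)),W^\alpha_\mathrm{mm}\rangle \;\ge\; \frac{c_0}{\|W^\alpha_\mathrm{mm}\|_{p,p}}\,\frac{1}{n}\sum_{i=1}^{n}\bigl(1-s_{i\alpha_i}(j)\bigr),
\]
where $s_{i\alpha_i}(j):=[\S(X_iW(j)z_i)]_{\alpha_i}$ and $c_0>0$ depends on the loss through $|l'|$. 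The directional convergence established in Theorem \ref{thm-direction} ensures that the softmax is concentrated on $\alpha_i$ at a rate exponential in $\|W(j)\|_{p,p}/\|W^\alpha_\mathrm{mm}\|_{p,p}$, while the support-token structure from Definition \ref{def:token:loptimal} ensures this lower bound is the dominant contribution.

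The remaining step is to plug in two-sided control on $\|W(k)\|_{p,p}$. Theorem \ref{thm-norm} supplies the denominator $\|W(k)\|_{p,p}^{p-1}=\Omega((\log k)^{p-1})$, while the per-step increment $\|W(k+1)\|_{p,p}-\|W(k)\|_{p,p}=\mathcal{O}(\|W(k)\|_{p,p}^{2-p})$ from Lemma \ref{lem:norm-growth-bound} yields a matching upper bound as well as an estimate of the cumulative gradient contribution. Balancing the increment $\|W(j)\|_{p,p}^{2-p}$ against the exponential softmax gap $\exp(-\|W(j)\|_{p,p}/\|W^\alpha_\mathrm{mm}\|_{p,p})$ inside the running sum, and then dividing by $\|W(k)\|_{p,p}^{p-1}$, produces the three regimes in \eqref{eqn:dwk:final:rate}: for $p>2$ the sublinear norm growth gives the $\log\log k/\log k$ rate, the $p=2$ boundary case picks up an additional $\log\log k$ factor from the telescoping increment, and for $1<p<2$ the superlinear norm growth permitted by Lemma \ref{lem:norm-growth-bound} produces the sharper $(\log k)^{-(p-1)}$ rate.

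The main obstacle I expect is quantitative control of $\langle-\nabla\mathcal{L}(W(j)),W^\alpha_\mathrm{mm}\rangle$ \emph{uniformly} along the trajectory: the softmax nonlinearity prevents global alignment of the gradient with $\nabla\psi(W^\alpha_\mathrm{mm})$, and the KKT-dual structure of \eqref{eqn:w-svm} must be used carefully, via the support tokens $\mathcal{T}_i$, to match the softmax mass on non-optimal tokens to the binding dual constraints. This alignment must hold throughout the trajectory --- not only asymptotically --- starting from the initialization cone $C_{p,\mu,R}(W^\alpha_\mathrm{mm})$, which will require a simultaneous induction that controls both the direction and the norm of $W(k)$, analogous to the cone-preservation arguments that underlie Theorems \ref{thm-norm} and \ref{thm-direction}.
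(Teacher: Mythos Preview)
Your opening identity and the plan to track $\langle\nabla\psi(W(k)),\bar W^\alpha_\mathrm{mm}\rangle$ via the unrolled mirror map are exactly what the paper does. The gap is in how you couple the running sum $\eta\sum_{j<k}\langle-\nabla\mathcal{L}(W(j)),\bar W^\alpha_\mathrm{mm}\rangle$ to the denominator $\|W(k)\|_{p,p}^{p-1}$. Your direct lower bound $\langle-\nabla\mathcal{L}(W(j)),W^\alpha_\mathrm{mm}\rangle\ge c_0\cdot\frac1n\sum_i(1-s_{i\alpha_i}(j))$ is fine, but summing it and then separately invoking Theorem~\ref{thm-norm} and Lemma~\ref{lem:norm-growth-bound} does not force the ratio to tend to $1$: the constant in your lower bound is governed by the unit margin of $W^\alpha_\mathrm{mm}$, whereas the growth of $\|W(k)\|_{p,p}^{p-1}$ is governed by the margin achieved by $W(k)/\|W(k)\|_{p,p}$, and those two only coincide \emph{after} directional convergence has been established. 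Invoking Theorem~\ref{thm-direction} to assert softmax concentration is circular here, since in the paper that theorem is derived \emph{from} the present one.

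The paper closes this gap with a different mechanism. Instead of bounding $\langle-\nabla\mathcal{L}(W(j)),\bar W^\alpha_\mathrm{mm}\rangle$ absolutely, it proves the comparison inequality (Lemma~\ref{lem:obj:grad-corr-bound-2})
\[
\big\langle-\nabla\mathcal{L}(W),\bar W^\alpha_\mathrm{mm}\big\rangle\;\ge\;\frac{1}{1+\pi}\,\Big\langle-\nabla\mathcal{L}(W),\tfrac{W}{\|W\|_{p,p}}\Big\rangle,
\]
valid for any $\pi\in(0,1)$ once $\|W\|_{p,p}\ge R_\pi=\Theta\bigl(\tfrac{1}{\pi\delta}\log\tfrac{\delta}{\pi}\bigr)$. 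The right-hand side is then lower-bounded by the norm increment $\|W(j+1)\|_{p,p}^{p-1}-\|W(j)\|_{p,p}^{p-1}$ (Lemmas~\ref{lem:alg:corr-bound-1} and~\ref{lem:norm:xpyp}), which telescopes exactly against the denominator. The rate comes from choosing $\pi=\pi(j)\asymp\tfrac{\log\|W(j)\|_{p,p}}{\delta\,\|W(j)\|_{p,p}}$ so that $R_{\pi(j)}\le\|W(j)\|_{p,p}$ is automatic, leaving the residual
\[
\sum_{j<k}\pi(j)\,\bigl(\|W(j+1)\|_{p,p}^{p-1}-\|W(j)\|_{p,p}^{p-1}\bigr).
\]
It is an integral estimate of \emph{this} sum (using Lemma~\ref{lem:norm-growth-bound} to control the step), not a balance of softmax gaps against per-step increments, that produces the three regimes: the sum is $\mathcal{O}(\|W(k)\|_{p,p}^{p-2}\log\|W(k)\|_{p,p})$, $\mathcal{O}((\log\|W(k)\|_{p,p})^2)$, or $\mathcal{O}(1)$ according as $p>2$, $p=2$, or $p<2$, and only then is $\|W(k)\|_{p,p}=\Omega(\log k)$ substituted.
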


{Establishing these rates required developing new analytical tools beyond the $\ell_2$ blueprint: we prove $\ell_{p,p}$-norm growth bounds under non-quadratic potentials (Lemma~\ref{lem:norm-growth-bound}), introduce double-cone stability arguments to handle joint norm and directional constraints (Lemma~\ref{lemma-stay-in-cone}), and work throughout with Bregman divergences rather than Euclidean geometry.} 

\begin{wrapfigure}{r}{0.42\textwidth}
    \centering
    \vspace{-0.3cm}
    \includegraphics[width=0.39\textwidth]{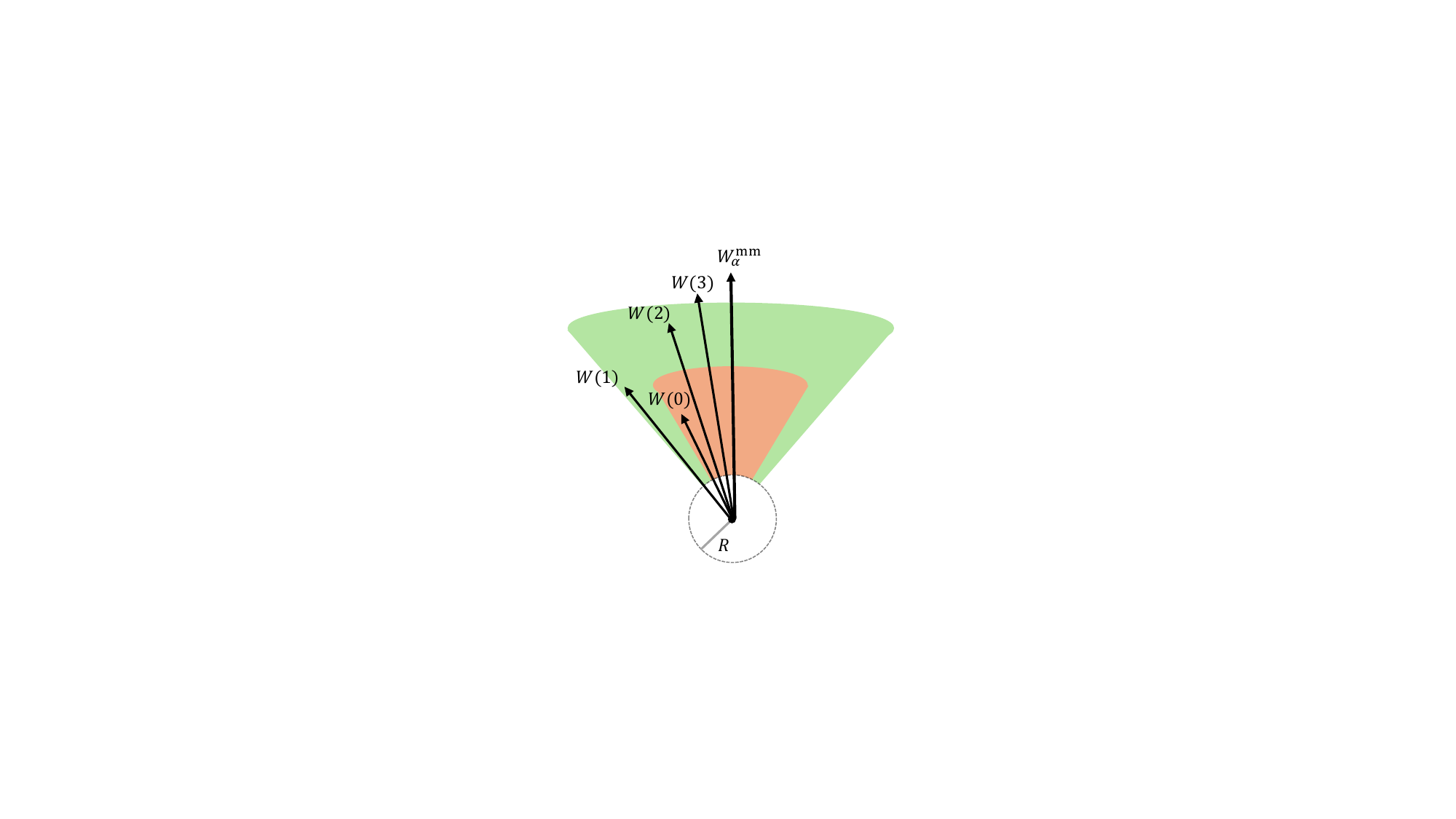}
    \vspace{-0.5cm}
    \caption{\small{Illustration of Lemma~\ref{lemma-stay-in-cone}. \(W(k)\), $\forall k>0$ are within the larger set.}}
    \label{fig:cones_and_arrows}
    \vspace{-0.2cm}
\end{wrapfigure}

Note that in the left-hand side of \eqref{eqn:dwk:final:rate}, there is a dependence on $p$ in the Bregman divergence $D_\psi$ itself as well. Despite optimizing a highly nonlinear, nonconvex softmax function, we achieve a convergence rate similar to GD in linear binary classification \cite[Theorem 1.1]{ji2018risk} (up to a $\log\log k$ factor).

Since we aim to show the parameter converges in direction to the cone center $W_\mathrm{mm}^\alpha$, we need conditions ensuring the parameters remain in the cone. {Unlike $\ell_2$ gradient descent where cone invariance follows from Pythagorean identities, the $\ell_p$ setting requires a novel double-cone construction: we must ensure iterates simultaneously remain within the directional cone $S_{p,\mu_0}(W_{\mathrm{mm}}^\alpha)$ and satisfy the norm constraint $\|W\|_{p,p}\geq R_\pi$—a strictly joint requirement not present in prior mirror descent analyses~\citep{sun2022mirror}.} We formalize this in Lemma~\ref{lemma-stay-in-cone} and prove that for any $\mu>0$ and locally optimal tokens $(\alpha_i)_{i=1}^n$ (Definition~\ref{def:token:loptimal}), there exist constants $R,\mu'>0$ depending on the dataset and $\mu$ such that if $W(0)\in C_{p,\mu',R}(W^\alpha_\mathrm{mm})$, then $W(k)\in C_{p,\mu,R}(W^\alpha_\mathrm{mm})$ for all $k$, meaning the iterates remain within a larger cone; see Figure~\ref{fig:cones_and_arrows}.

For Theorem \ref{thm-norm}, we show in  Lemma~\ref{lem:alg:md-corr-bound}  that   at any timestep $k\geq 0$, the norm of the $W$ parameter evolves in the following manner,
\begin{align*}
|W(k+1)\|_{p,p}^{p-1} &\geq\|W(k)\|_{p,p}^{p-1}+\frac{\eta}{\|W(k)\|_{p,p}}\langle-\nabla\L(W(k)),W(k)\rangle.    
\end{align*}
With the above, to prove Theorem \ref{thm-norm}, it is enough to show that \(\langle -\nabla\mathcal{L}(W(k)), W(k) \rangle\) is positive and large enough to keep the norm increasing to infinity. Specifically, in Lemma~\ref{lem:obj:grad-corr-bound-1} we show that there exist dataset-dependent constants \(R, \delta, \mu > 0\) such that for all \(W, V \in C_{p,\mu,R}(W^\alpha_\mathrm{mm} )\) with \(\|V\|_{p,p} = \|W^\alpha_\mathrm{mm} \|_{p,p}\),
\[
-\langle \nabla\mathcal{L}(W), V \rangle = \Omega\left( \textnormal{exp} \left(-\frac{\|W\|_{p,p}}{\|W^\alpha_\mathrm{mm} \|_{p,p}}\left(1+\frac{1}{2}\delta\right)\right)\right) > 0.
\]

Theorem \ref{thm-direction} is a direct consequence of Theorem \ref{thm-rate}, which extends the analysis that is done for Lemma \ref{lemma-stay-in-cone} by providing a tighter bound on how thin the cone set $C_{p,\mu,R}$ may be for later iterates.

\subsection{Training Dynamics of Mirror Descent for Joint Optimization of $W$ and $v$}\label{sec:joint-optimization}
This section explores the training dynamics of jointly optimizing the prediction head \( v \) and attention weights \( W \). Unlike Section~\ref{sec:w:train}, the main challenge here is the evolving token scores \( \gamma \), as given in Definition~\ref{def:token:score}, are influenced by the changing nature of \( v \). This requires additional technical considerations beyond those in Section~\ref{sec:w:train}, which are also addressed in this section. 

Given stepsizes $ \eta_W, \eta_v >0$,  we consider the following \textit{joint} updates for $W$ and $v$ applied to \eqref{eqn:erm}, respectively: for all $i, j \in [d]$, and $k = 0, 1, 2, \dots$,
\begin{align}
\hspace{-20pt}
\begin{cases}
    \left[W(k+1)\right]_{ij} \leftarrow \left| [W(k)]^+_{ij}\right|^{\frac{1}{p-1}} \cdot \textnormal{sign}\left([W(k)]^+_{ij}\right),\vspace{.3cm}\\
    [W(k)]^+_{ij} := |[W(k)]_{ij}|^{p-1} \textnormal{sign}([W(k)]_{ij}) - \eta_W [\nabla_W \L(W(k), v(k))]_{ij},\vspace{.3cm}\\
    [v(k+1)]_i \leftarrow \left| [v(k)]_i^+ \right|^{\frac{1}{p-1}} \cdot \textnormal{sign}([v(k)]_i^+),\vspace{.3cm}\\
    [v(k)]_i^+ := |[v(k)]_i|^{p-1} \textnormal{sign}([v(k)]_i) - \eta_v [\nabla_v \L(W(k), v(k))]_i.
\end{cases}
 \tag{\texttt{$\ell_p$-JointGD}}\label{alg:pq:agd}
\end{align}

We discuss the implicit bias and convergence for $v(k)$ below. From previous results \citep{azizan2021stochastic}, one can expect $v(k)$ to converge to the $\ell_p$-SVM solution, i.e., the max-margin classifier separating the set of samples $\{(X_{i\alpha_i}, y_i)\}_{i=1}^n$, where $X_{i\alpha_i}$ denote the (locally) optimal token for each $i \in [n]$. Consequently, we consider the following hard-margin SVM problem,
\begin{equation}
    \begin{split}\label{eqn:vp:svm}
        v_\mathrm{mm} &:= \argmin_{v\in \mathbb{R}^d} \|v\|_p \\
        &\textnormal{subj.~to}~~~y_i X_{i \alpha_i}^\top v \geq 1,~~\textnormal{for all }~ i \in [n].
    \end{split}
    \tag{$\ell_p$-SVM}    
\end{equation}

\begin{figure}[t]
     \centering
     \begin{subfigure}[b]{0.235\textwidth}
         \centering
         \includegraphics[width=\textwidth]{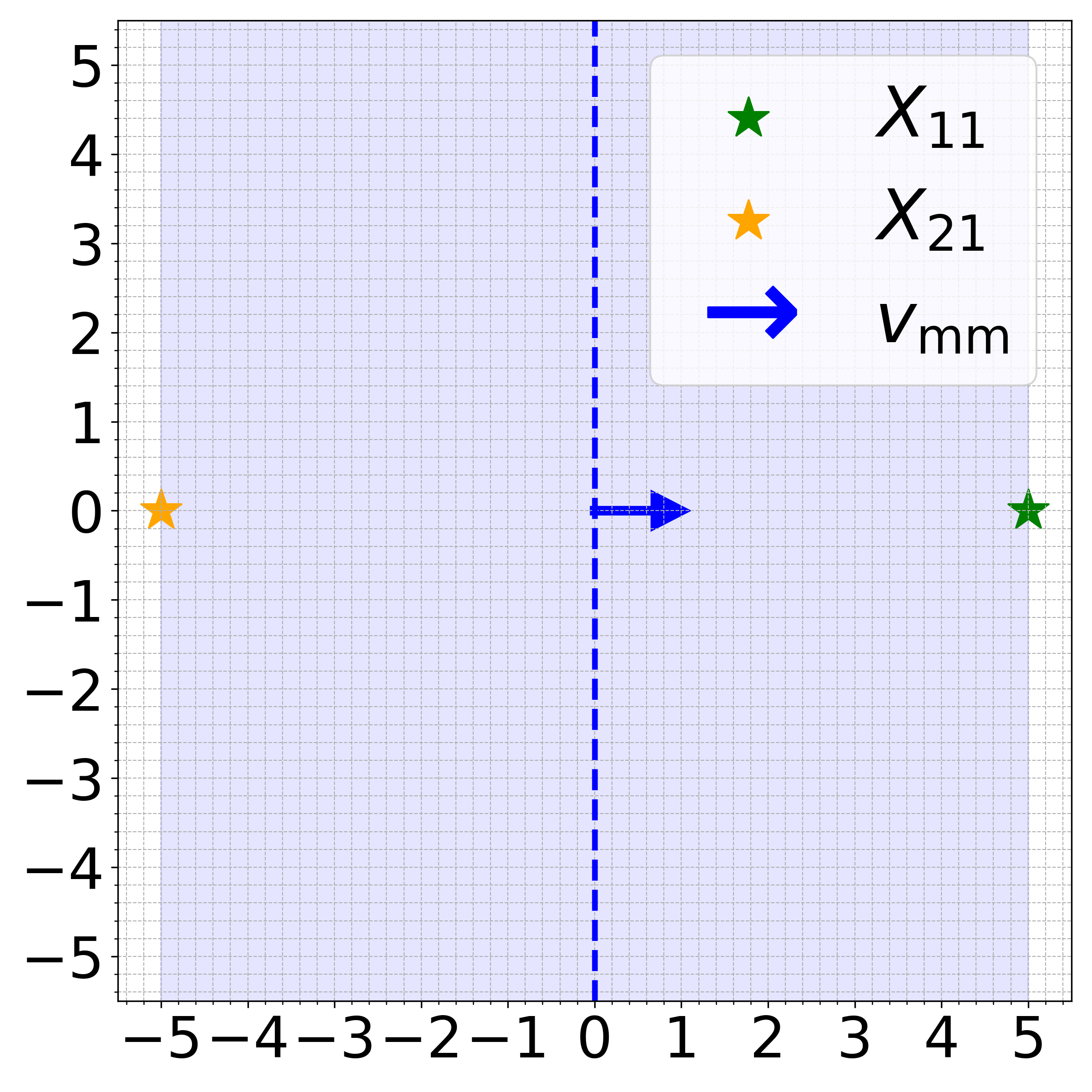}
     \end{subfigure}%
     \hspace{0.1cm}
     \begin{subfigure}[b]{0.235\textwidth}
         \centering
         \includegraphics[width=\textwidth]{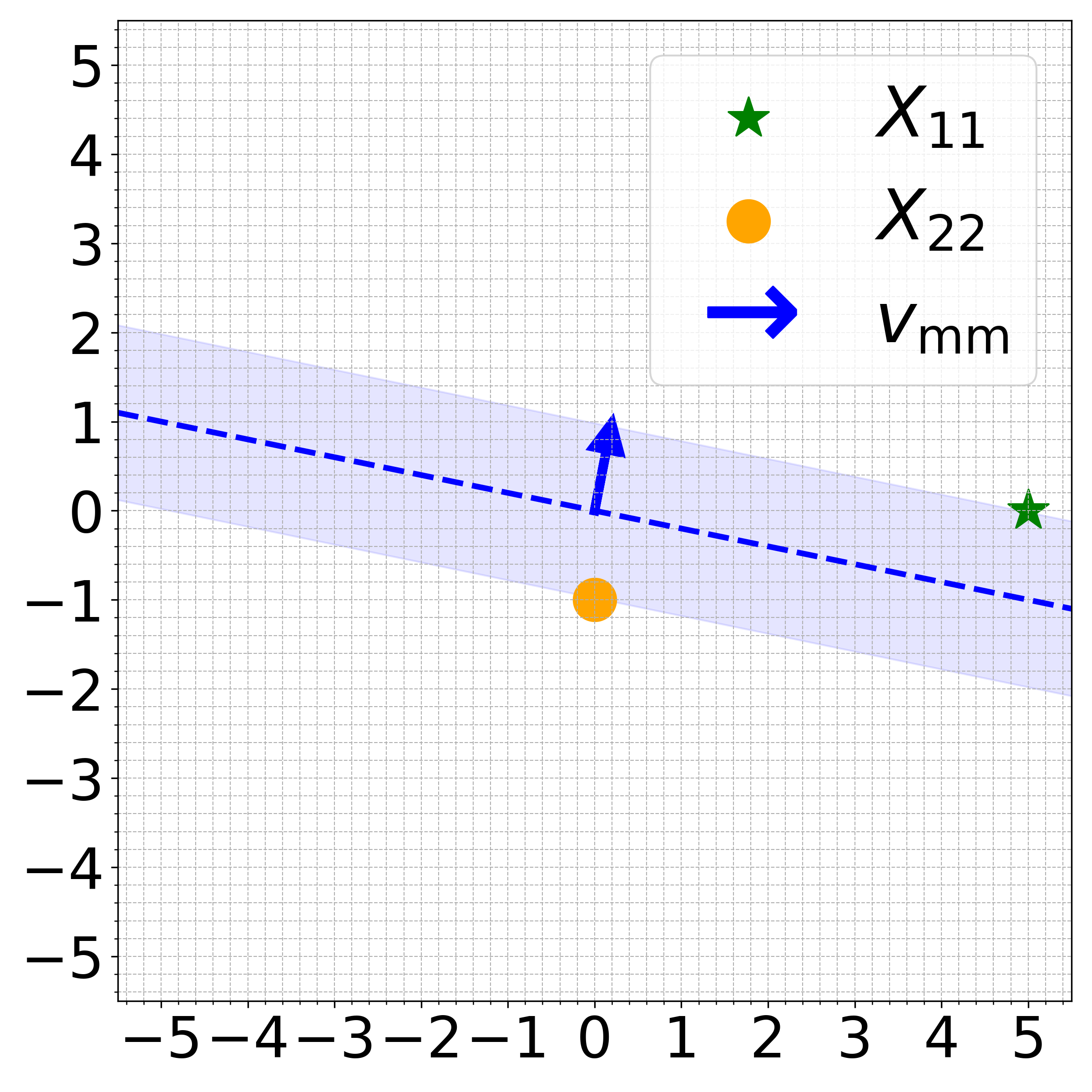}
     \end{subfigure}%
          \hspace{0.1cm}
     \begin{subfigure}[b]{0.235\textwidth}
         \centering
         \includegraphics[width=\textwidth]{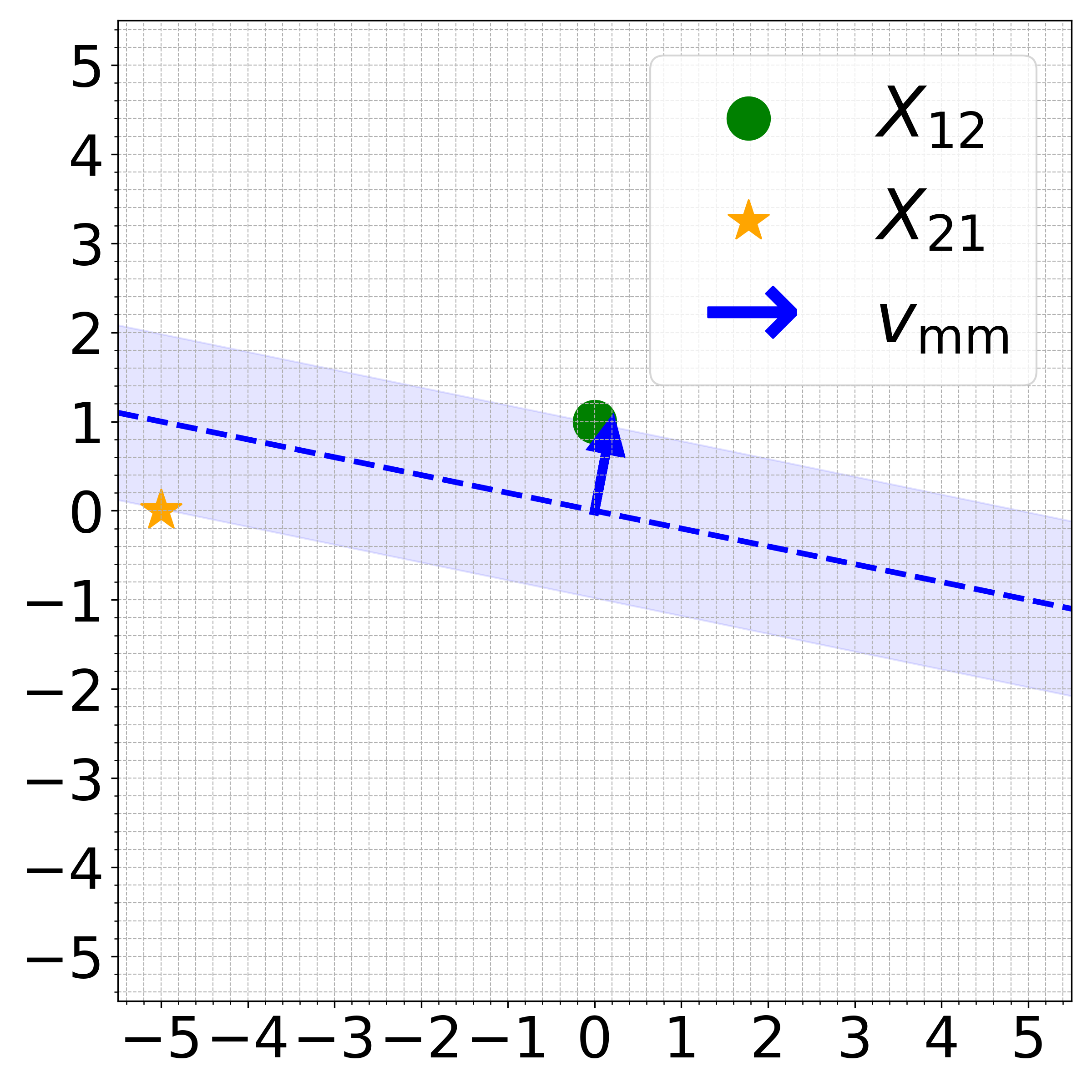}
     \end{subfigure}%
          \hspace{0.1cm}
     \begin{subfigure}[b]{0.235\textwidth}
         \centering
         \includegraphics[width=\textwidth]{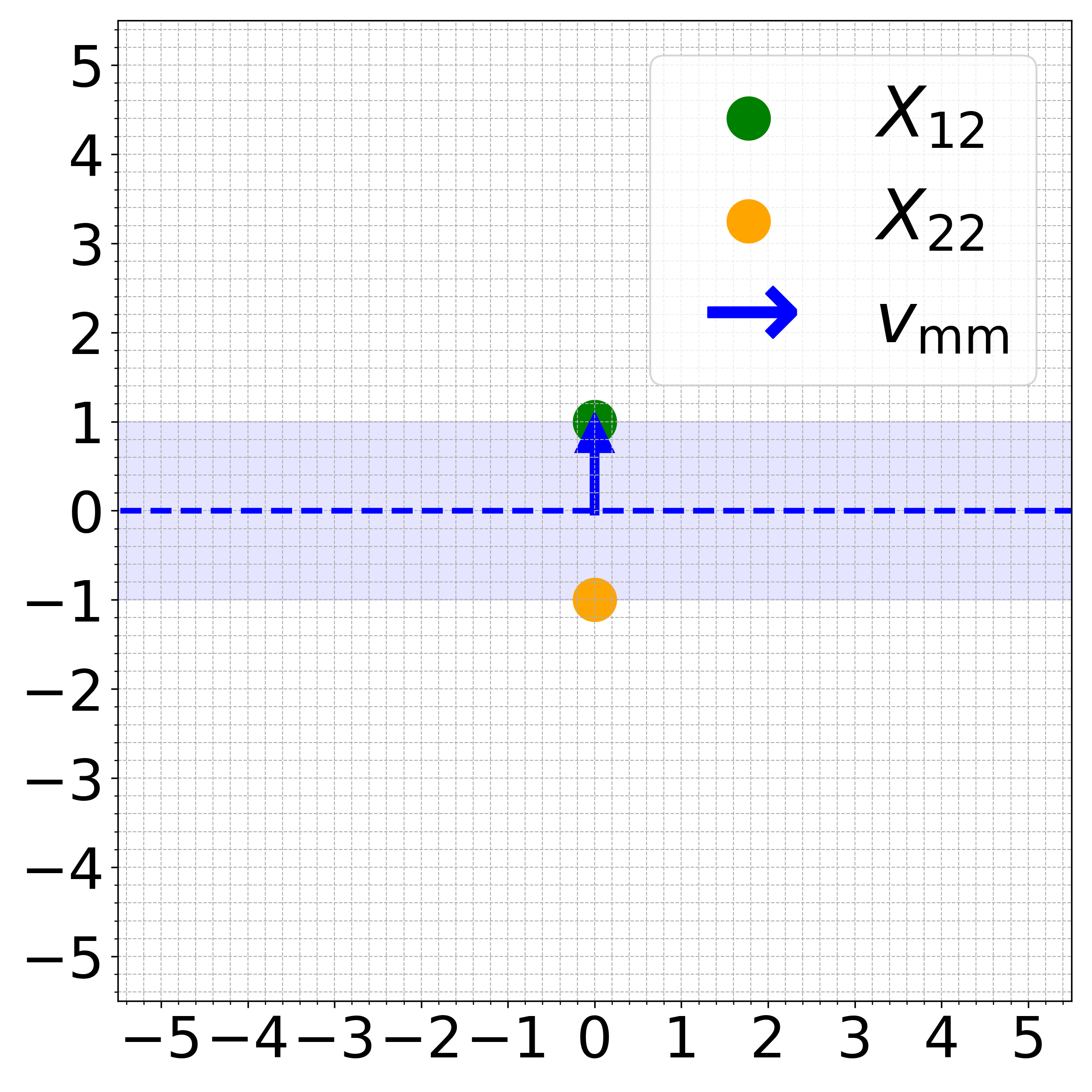}
     \end{subfigure}
\caption{Effect of token selection on margin size in \eqref{eqn:vp:svm} for Example~\ref{exm:svm_token_visualization}. The first plot shows the largest class margin with optimal tokens \(X_{11}\) and \(X_{21}\).  In subsequent plots, as different tokens are used, the class margin (light blue shaded area) decreases, reflecting suboptimal class separation.}
    \label{fig:margin;token}
    \vspace{-.25cm}
\end{figure}

In \eqref{eqn:vp:svm}, define the \emph{label margin} as \(1/\| v_\mathrm{mm} \|_{p} \). The label margin quantifies the distance between the separating hyperplane and the nearest data point in the feature space. A larger label margin indicates better generalization performance of the classifier, as it suggests that the classifier has a greater separation between classes. From \eqref{eqn:vp:svm} and Definitions~\ref{def:token:score} and \ref{def:token:loptimal}, an additional intuition by \cite{ataee2024max} behind optimal tokens is that they maximize the label margin when selected; see Figure~\ref{fig:margin;token}. Selecting the locally optimal token indices $\a = (\alpha_i)_{i=1}^n$ from each input data sequence achieves the largest label margin, meaning that including other tokens will reduce the label margin as defined in \eqref{eqn:vp:svm}. In the Appendix~\ref{sec:app:joint}, we show that \(W\) and \(v\) generated by \ref{alg:rp} converge to their respective max-margin solutions under suitable geometric conditions (Theorem~\ref{thm:joint:rp} in the appendix).

\section{Experimental Results}

\subsection{Synthetic Data Experiments}

We describe the setup of the experiments for ~\ref{alg:p:agd} and ~\ref{alg:pq:agd} and their results.

\subsubsection{\ref{alg:p:agd} Experiment}
To measure the directional distance between \( W^\mathrm{mm}_\a \) (solution of \eqref{eqn:w-svm}) and \( W(k) \) (output of \ref{alg:p:agd}), we use the directional Bregman divergence \( D_\psi(W/\|W\|_{p,p}, V/\|V\|_{p,p}) \) for \( W, V \in \mathbb{R}^{d\times d} \). We compare the \eqref{eqn:w-svm} solution with the $\ell_q$ optimization path for all $p, q \in \{1.75, 2, 3\}$ for synthetically generated data (described in detail in the Appendix). The experiment is repeated 100 times, and the average directional Bregman divergence is reported. A closer look at one sample trial is also provided.

\begin{figure}[h]
     \centering
     \begin{subfigure}[b]{0.32\textwidth}
         \centering
         \includegraphics[width=\textwidth]{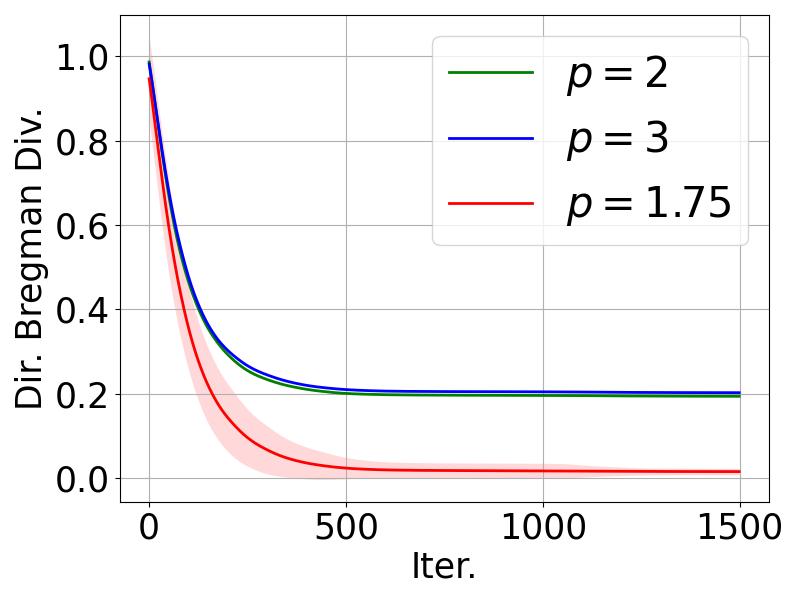}
         \caption{$\ell_{1.75}$ Dir. Bregman Div.}
         \label{fig-corr-1-75}
     \end{subfigure}
     \hfill
     \begin{subfigure}[b]{0.32\textwidth}
         \centering
         \includegraphics[width=\textwidth]{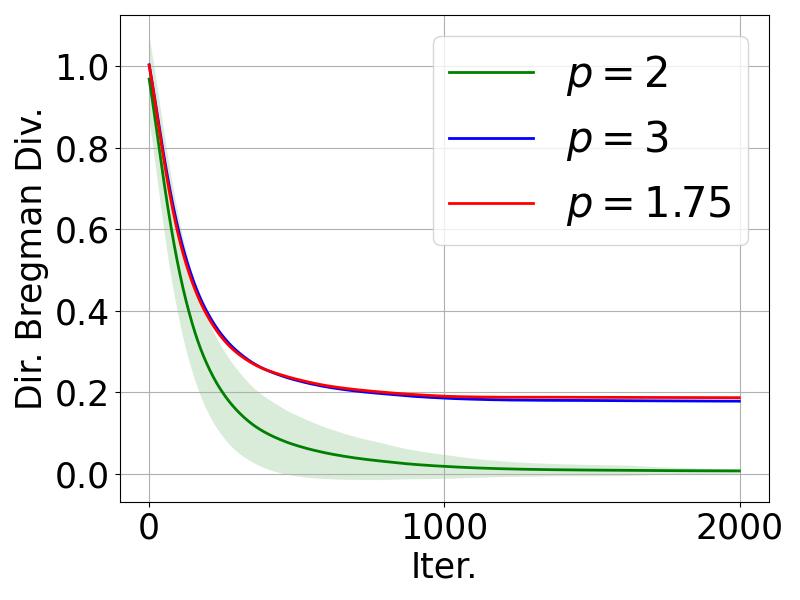}
         \caption{$\ell_2$ Directional Bregman Div.}
         \label{fig-corr-2}
     \end{subfigure}
     \hfill
     \begin{subfigure}[b]{0.32\textwidth}
         \centering
         \includegraphics[width=\textwidth]{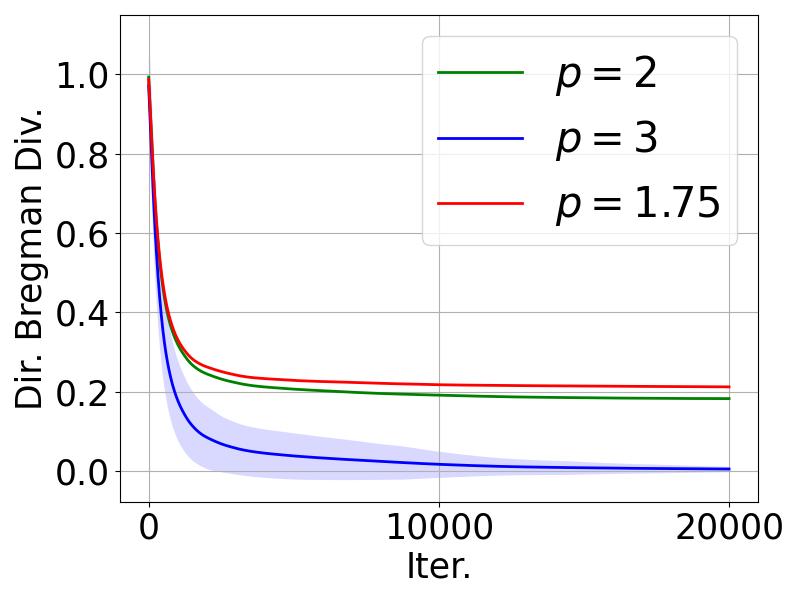}
         \caption{$\ell_3$ Directional Bregman Div.}
         \label{fig-corr-3}
     \end{subfigure}
\caption{Average directional Bregman divergence between the (a) $\ell_{1.75}$, (b) $\ell_2$, and (c) $\ell_3$ optimization paths and the \eqref{eqn:w-svm} solutions for $p=1.75, 2$, and $3$ at each training iteration from 100 trials. The shaded area represents the standard deviation of the directional Bregman divergence.  }
\label{fig-corr}
\end{figure}
\begin{figure}[t]
     \centering
     \begin{subfigure}[b]{0.3\textwidth}
         \centering
         \includegraphics[width=\textwidth]{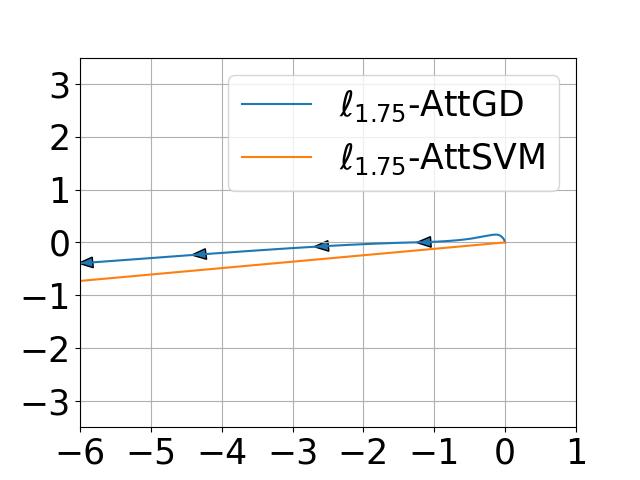}
         \caption{$\ell_{1.75}$ Optimization Path}
         \label{figsvm-inst-1-75}
     \end{subfigure}
     \hfill
     \begin{subfigure}[b]{0.3\textwidth}
         \centering
         \includegraphics[width=\textwidth]{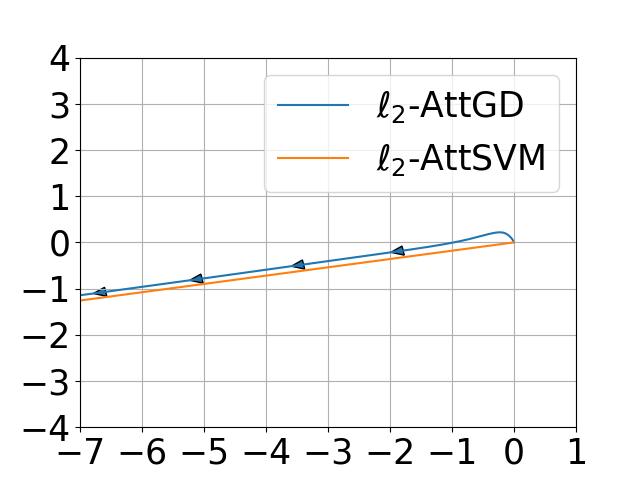}
         \caption{$\ell_2$ Optimization Path}
         \label{figsvm-inst-2}
     \end{subfigure}
     \hfill
     \begin{subfigure}[b]{0.3\textwidth}
         \centering
         \includegraphics[width=\textwidth]{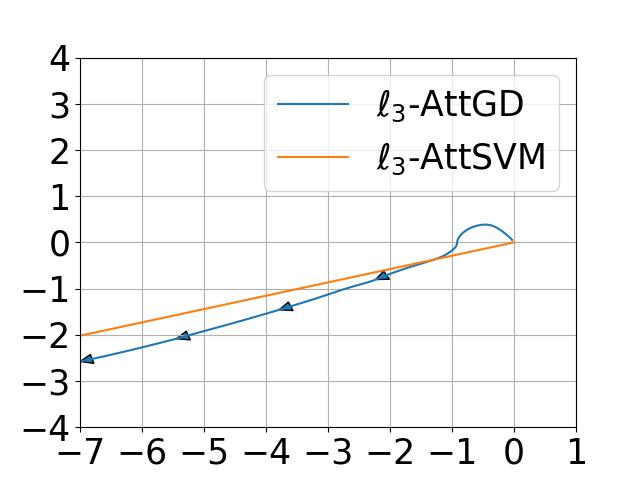}
         \caption{$\ell_3$ Optimization Path}
         \label{figsvm-inst-3}
     \end{subfigure}
\caption{Direction of change of two entries of $W$ updated by \ref{alg:p:agd} with $p=1.75$, $p=2$, and $p=3$ for one trial, shown in (a), (b), and (c). Each axis represents a different entry. The orange line shows the direction of \eqref{eqn:w-svm}.}
\vspace{-.25cm}
\label{fig-inst}
\end{figure}

Figure~\ref{fig-corr} shows the directional Bregman divergence between the \eqref{eqn:w-svm} solution and the $\ell_q$ optimization path for each pair $p, q \in \{1.75, 2, 3\}$. In Figure \ref{fig-corr-1-75}, the divergence converges to $0$ only for the \eqref{eqn:w-svm} ($p=1.75$) solution, indicating that the $\ell_{1.75}$ path does not converge to the $p=2$ or $3$ solutions. The shrinking standard deviation shows consistent behavior. Similarly, Figures \ref{fig-corr-2} and \ref{fig-corr-3} show the divergence converging to $0$ for the corresponding \eqref{eqn:w-svm} solution, demonstrating that the $\ell_p$ optimization path converges to the \eqref{eqn:w-svm} solution, with the direction of convergence changing with $p$. In addition to the directional Bregman divergence, we can also observe the convergence in direction for one trial (initialized at zero) directly by plotting how two of the entries of $W$ change during training simultaneously and plotting it on a Cartesian graph, then showing that the path it follows converges to the direction of the \eqref{eqn:w-svm} solution. As we can see in Figure \ref{fig-inst}, each of the $\ell_p$ optimization paths follows the direction of the corresponding \eqref{eqn:w-svm} solution.

\subsubsection{\ref{alg:pq:agd} Experiments} We use the data from Example ~\ref{example:p-gd} in Appendix ~\ref{app:exp:joint} to train a model using ~\ref{alg:pq:agd} for $p=1.75,2,$ and $3$.  The comparison between the iterates and the SVM solutions in Figure \ref{fig-joint-iterates} shows that the iterates of $W$ and $v$ converge to the ~\ref{eqn:w-svm} and ~\ref{eqn:vp:svm} directions, respectively, for each of $p=1.75,2,$ and $3$. These convergence are similar to Theorem \ref{thm:joint:rp}, as in both this experiment and that theorem, we get that the iterates converge to the SVM problem solutions. In addition to these iterates, we record the evolution of the average softmax probability of the optimal token, along with the average logistic probability of the model, which we define to be $1/n \sum_{i=1}^{n} 1/(1+\exp(-\gamma_i a_i))
$.

\begin{figure}[t]
     \centering
     \begin{subfigure}[b]{0.32\textwidth}
         \centering
         \includegraphics[width=\textwidth]{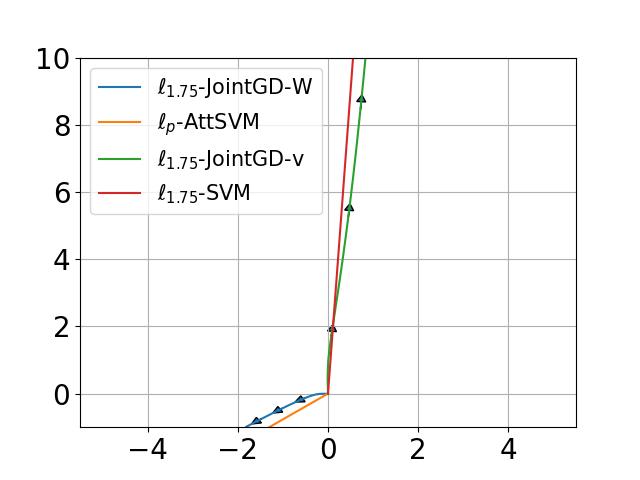}
         \caption{$\ell_{1.75}$ Optimization Paths}
         \label{figjoint-inst-1-75}
     \end{subfigure}
     \hfill
     \begin{subfigure}[b]{0.32\textwidth}
         \centering
         \includegraphics[width=\textwidth]{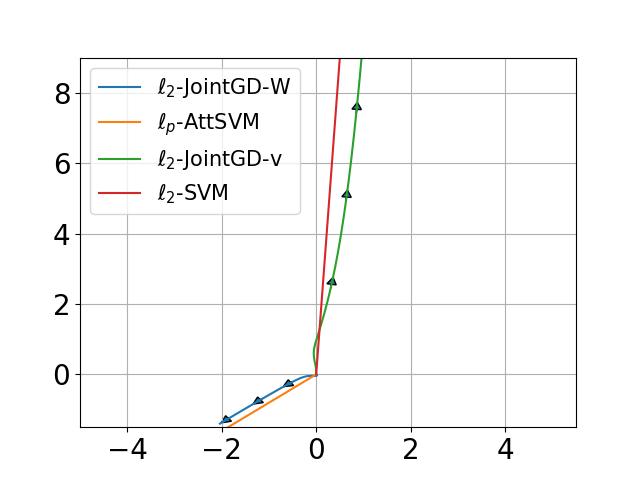}
         \caption{$\ell_2$ Optimization Paths}
         \label{figjoint-inst-2}
     \end{subfigure}
     \hfill
     \begin{subfigure}[b]{0.32\textwidth}
         \centering
         \includegraphics[width=\textwidth]{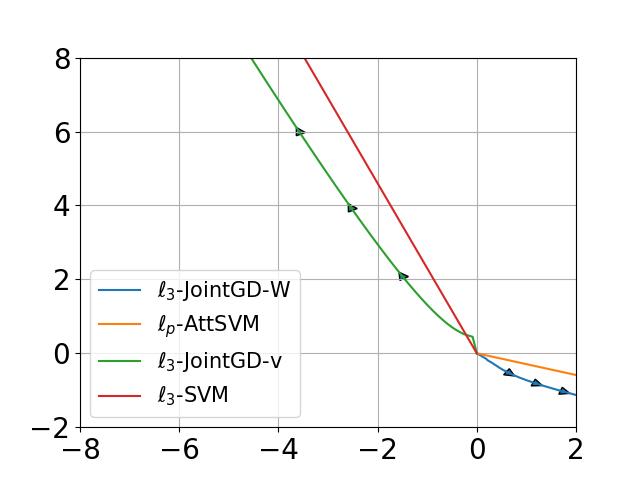}
         \caption{$\ell_3$ Optimization Paths}
         \label{figjoint-inst-3}
     \end{subfigure}
\caption{Iterates of the $W$ and $v$ parameters of the model as they are trained using ~\ref{alg:pq:agd} for $p=1.75,2,$ and $3$, along with the corresponding ~\ref{eqn:w-svm} and ~\ref{eqn:vp:svm} directions.}
        \label{fig-joint-iterates}
\end{figure}
\begin{figure}[t]
     \centering
     \begin{subfigure}[b]{0.32\textwidth}
         \centering
         \includegraphics[width=\textwidth]{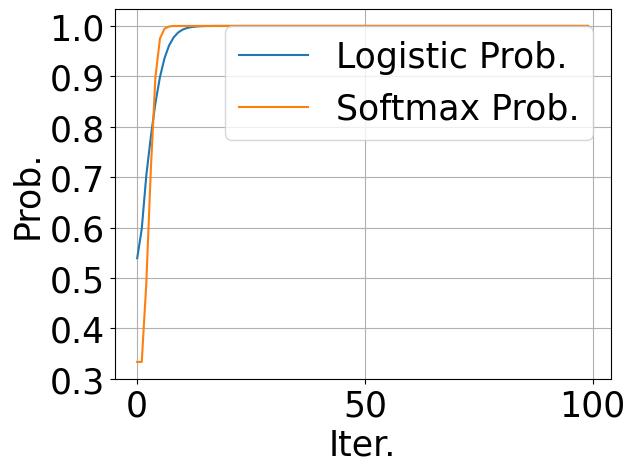}
         \caption{$\ell_{1.75}$ Probabilties}
         \label{figprob-inst-1-75}
     \end{subfigure}
     \hfill
     \begin{subfigure}[b]{0.32\textwidth}
         \centering
         \includegraphics[width=\textwidth]{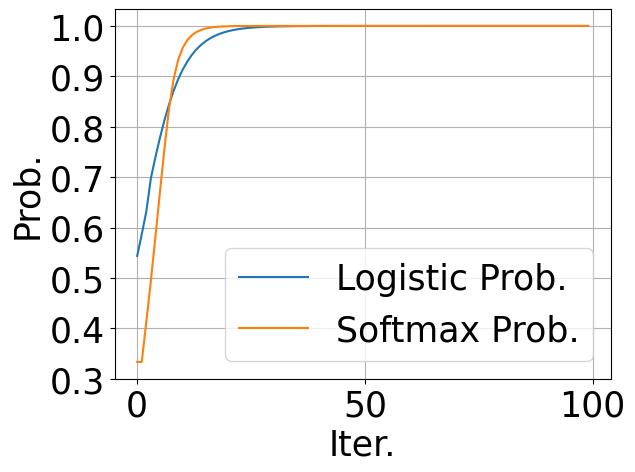}
         \caption{$\ell_2$ Probabilties}
         \label{figprob-inst-2}
     \end{subfigure}
     \hfill
     \begin{subfigure}[b]{0.32\textwidth}
         \centering
         \includegraphics[width=\textwidth]{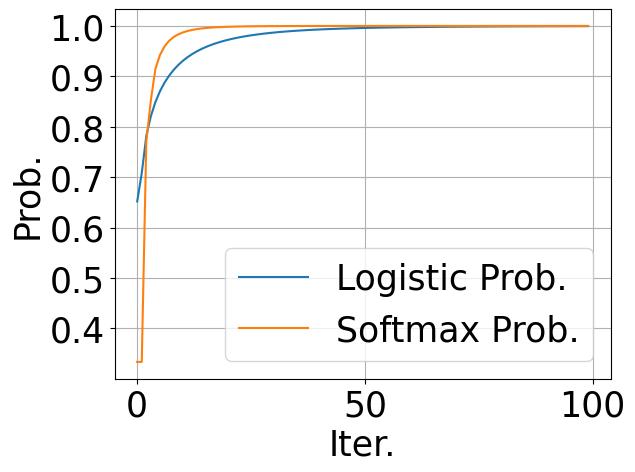}
         \caption{$\ell_3$ Probabilties}
         \label{figprob-inst-3}
     \end{subfigure}
\caption{Softmax probability evolution of the optimal token and logistic probability evolution for $p=1.75,2,$ and $3$.}
        \label{fig-prob}
\end{figure}
As shown in Figure~\ref{fig-prob}, each average softmax probability converges to $1$, indicating that the attention mechanism produces a softmax vector converging to a one-hot vector during different ~\ref{alg:pq:agd} training. Moreover, the average logistic probability also converges to $1$, indicating the model's prediction reaches $100\%$ accuracy.

\subsection{Real Data Experiments}\label{sec:real-data-exp}
This section presents evidence of improved generalization, token selection, and explainability from training an attention network with \ref{eqn:md} instead of other algorithms such as GD and Adam \citep{kingma2014adam}. Specifically, we compare the generalization, token selection, and model weight distribution of the \ref{eqn:md} and GD algorithms in training a transformer classification model on the Stanford Large Movie Review Dataset; and the generalization and model weight distribution of the \ref{eqn:md} and Adam in training a vision transformer (ViT) model \citep{dosovitskiy2020image} on CIFAR-10.

\subsubsection{Comparison with Gradient Descent}

\begin{table}[h!]
\centering
\begin{tabular}{||c|c|c|c||} 
 \hline
  &  & & \\ [-1.4ex] 
 Algorithm & Model Size 3 & Model Size 4 & Model Size 6 \\ [0.5ex] 
 \hline\hline
   &  & & \\ [-1.5ex] 
 $\ell_{1.1}$-MD & \textbf{83.47 $\pm$ 0.09\%} & \textbf{83.36 $\pm$ 0.13\%} & \textbf{83.65 $\pm$ 0.13\%} \\
   &  & & \\ [-1.1ex] 
 $\ell_{2}$-MD  & $81.66\pm0.09\%$ & $81.05\pm0.17\%$ & $82.22\pm0.13\%$ \\
   &  & & \\ [-1.1ex] 
 $\ell_{3}$-MD & $82.57\pm0.09\%$ & $82.40\pm0.12\%$ & $81.97\pm0.10\%$ \\[0.5ex] 
 \hline
\end{tabular}
\caption{Test accuracies of transformer classification models trained with $\ell_{1.1}$, $\ell_2$, and $\ell_3$-\ref{eqn:md} on the \textbf{Stanford Large Movie Review Dataset}. The model sizes refers to the number of layers in the transformer model and the number of attention heads per layer.  $\ell_{1.1}$-\ref{eqn:md} provides superior generalization performance. }
\label{table:1}
\vspace{-0.3cm}
\end{table}
We trained a transformer classification model on the Stanford Large Movie Review Dataset \citep{maas-EtAl:2011:ACL-HLT2011} using \ref{eqn:md} with $\ell_{1.1}$, $\ell_2$, and $\ell_3$ potentials. The models are similar to the encoder module in \cite{vaswani2017attention}, with the last layer being a linear classification layer on the feature representation of the first $[\texttt{CLS}]$ token. We put the details of the classification model in Appendix \ref{app:exp:stanford}. Table \ref{table:1} summarizes the resulting test accuracy of several variants of that model when trained with the three algorithms, which shows that the $\ell_{1.1}$ potential \ref{eqn:md} outperforms the other \ref{eqn:md} algorithms, including the one with the $\ell_2$ potential, which is equivalent to the GD. %
\begin{figure}[t]
    \centering
    \includegraphics[width=1\linewidth]{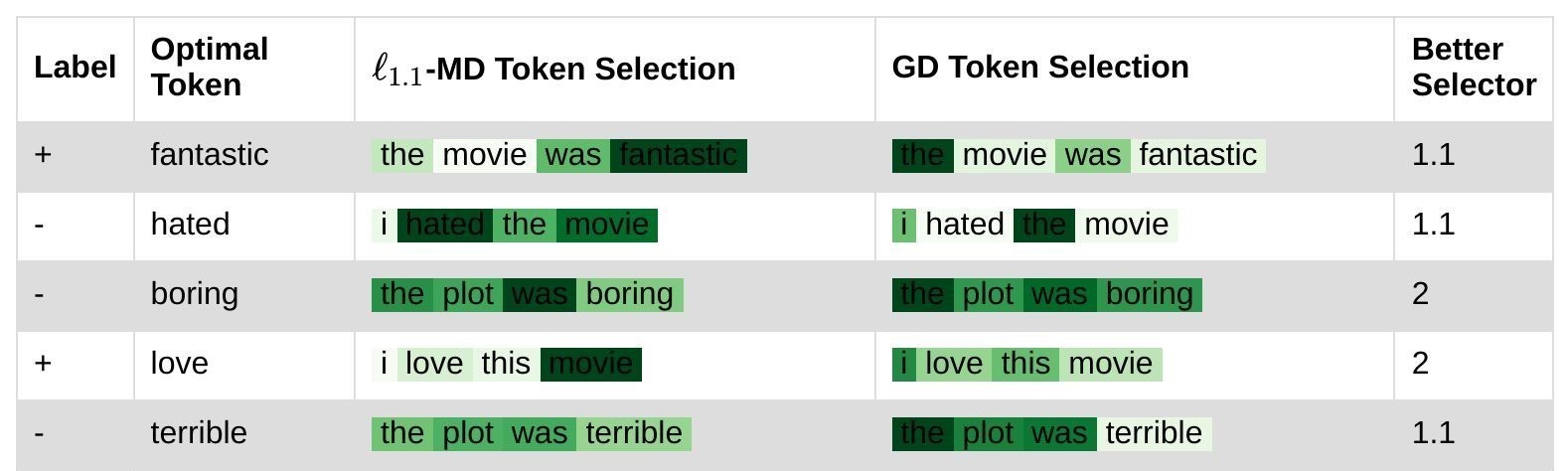}
    \caption{The attention map generated by the resulting models that were trained using $\ell_{1.1}$--\ref{eqn:md} and GD for five sample sentences. For three out of five of the sample sentences, the model trained using $\ell_{1.1}$--\ref{eqn:md} selects the optimal token better than the model trained using GD.}
    \label{img:attention_map}
\end{figure}
To investigate this further, we look at how the attention layers of the model select the tokens from simple reviews that GPT-4o generated and investigate how much the attention layer focuses on a particular token that truly determines whether the whole review was a positive one or a negative one. We chose these pivotal tokens using GPT-4o as well. We do this procedure to the model trained using $\ell_{1.1}$--\ref{eqn:md} and the GD and tabulate the full results in Appendix~\ref{app:exp:detail} (we provide five of the results in Figure \ref{img:attention_map}). We can see that the $\ell_{1.1}$--\ref{eqn:md} also outperforms the GD in token selection.

\begin{figure}[h]
\centering
\begin{tabular}{cccc}
\subfloat[$W_K$ parameters with $\ell_{1.1}$-\ref{eqn:md}]{\includegraphics[width =0.31\linewidth]{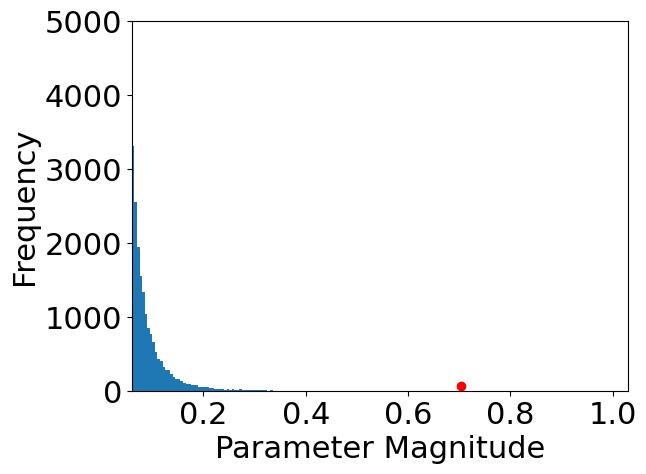}} 
\hspace{-.25cm}
&
\subfloat[$W_Q$ parameters with $\ell_{1.1}$-\ref{eqn:md}]{\includegraphics[width = 0.31\linewidth]{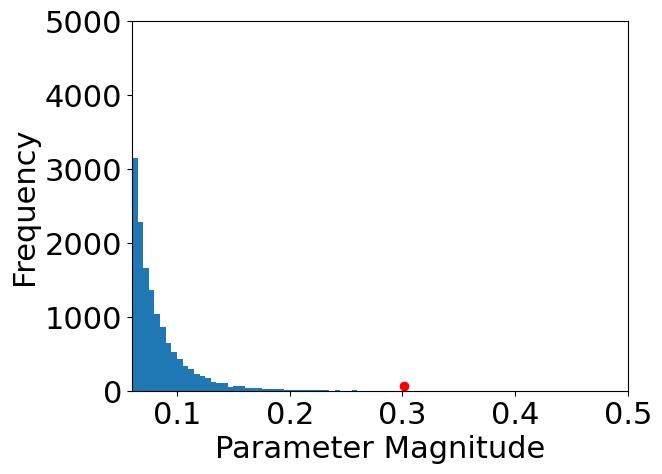}}
\hspace{-.3cm}
&
\subfloat[$W_V$ parameters with $\ell_{1.1}$-\ref{eqn:md}]{\includegraphics[width = 0.3\linewidth]{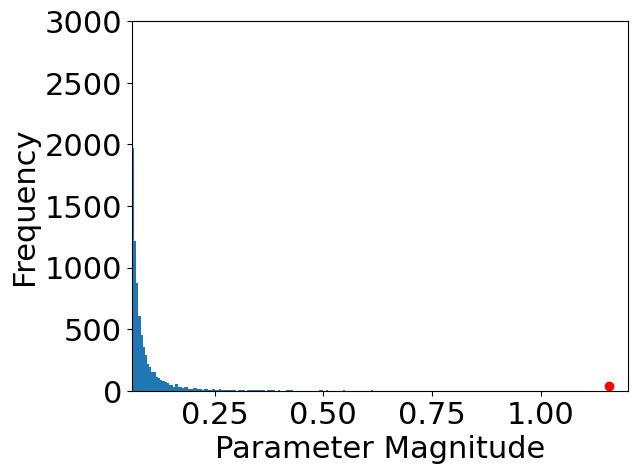}} \vspace{.3cm}\\
\subfloat[$W_K$ parameters with $\ell_{2}$-\ref{eqn:md}]{\includegraphics[width = 0.31\linewidth]{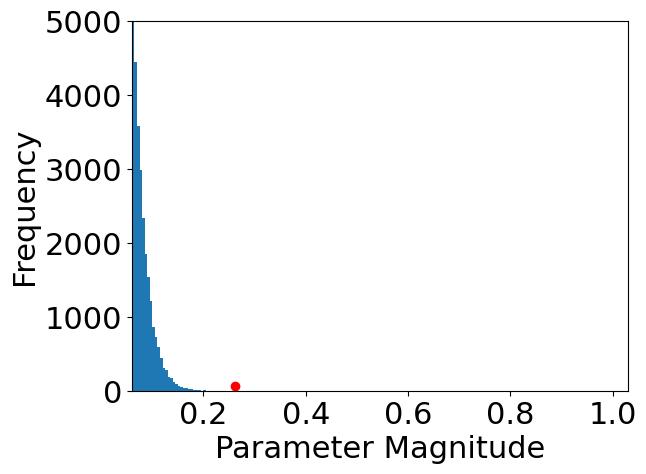}} &
\subfloat[$W_Q$ parameters with $\ell_{2}$-\ref{eqn:md}]{\includegraphics[width = 0.31\linewidth]{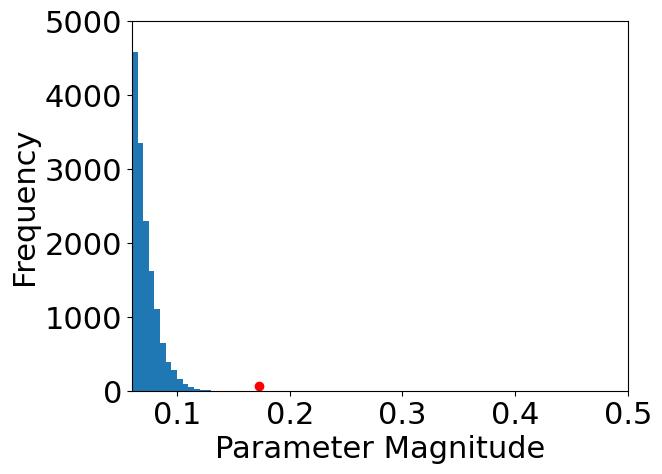}} &
\subfloat[$W_V$ parameters with $\ell_{2}$-\ref{eqn:md}]{\includegraphics[width = 0.3\linewidth]{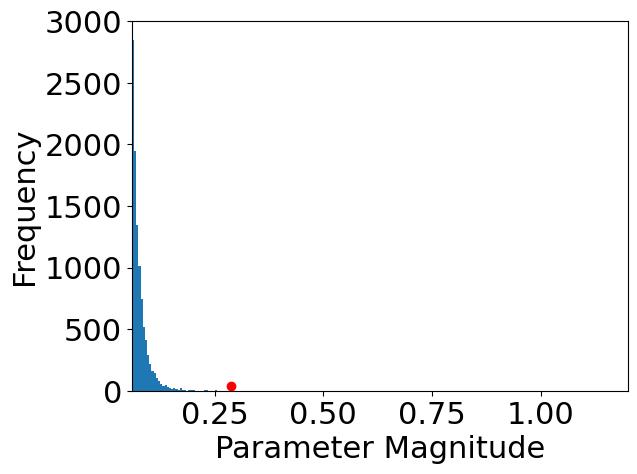}} \\
\end{tabular}
 \caption{Histogram of the absolute values of the $W_K,W_Q,$ and $W_V$ components of transformer models trained with $\ell_{1.1}$ and $\ell_{2}$-\ref{eqn:md} on the \textbf{Stanford Large Movie Review Dataset}. Only large parameters ($\geq 0.06$) are shown, with the maximum magnitude component marked by a red dot. The $\ell_{1.1}$-\ref{eqn:md} model has $18,206$ components in $W_K$, $13,964$ in $W_Q$, and $7,643$ in $W_V$ with magnitudes $\geq 0.06$, while the $\ell_2$-\ref{eqn:md} model has $27,224$ in $W_K$, $14,654$ in $W_Q$, and $10,127$ in $W_V$ with such magnitudes. These results imply that the $\ell_{1.1}$-\ref{eqn:md} algorithm yields sparser parameters and that it would have a stronger token selection ability.}
\label{figure:hist}
    \vspace{-0.3cm}
\end{figure}

Finally, we collect the training weights from the resulting models trained by $\ell_{1.1}$--\ref{eqn:md} and the GD and plot a histogram of their absolute values in Figure \ref{figure:hist}. Specifically, we take the histogram of the components of the key, query, and value matrices. The figures show that the resulting model that was trained using $\ell_{1.1}$--\ref{eqn:md} is sparser than the one trained using GD, which could hint at a potential explanation as to why $\ell_{1.1}$--\ref{eqn:md} can outperform the standard GD algorithm when it is used to train attention-based models.

\subsubsection{Comparison with Adam}\label{sec:exp:adam}
We compare the {$\ell_{1.1},\ell_{1.75},\ell_2,\ell_3$--\ref{eqn:md}, and Adam algorithms} in training a ViT model on the CIFAR-10 dataset. We specify the ViT model architecture in Appendix \ref{app:exp:adam}.

The resulting test accuracies for all algorithms are reported in Figure~\ref{experiment-adam}. Notably, all five optimization methods---spanning the $\ell_p$-MD family with $p\in\{1.1, 1.75, 2, 3\}$ and Adam---converge to approximately $82\%$ test accuracy on CIFAR-10, demonstrating that our $\ell_p$-MD framework achieves state-of-the-art performance competitive with widely-used adaptive optimizers. {The consistent performance across different values of $p$ confirms that the beneficial sparsity and interpretability properties induced by the $\ell_p$ geometry (particularly for $p$ close to 1) come at no cost to generalization capability.}

\begin{figure}[t]
\centering
\begin{tabular}{cc}
\subfloat[Test accuracy of ViT on CIFAR-10 trained with $\ell_{1.1}$, $\ell_{1.75}$, $\ell_2$, $\ell_3$-MD and Adam.\label{experiment-adam}]{\includegraphics[width=0.48\textwidth]{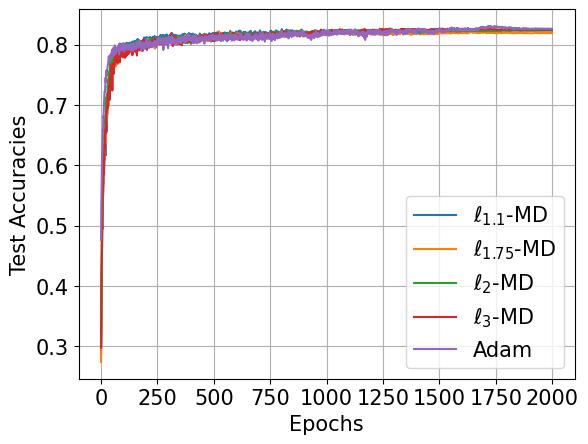}} 
&
\subfloat[Test accuracy of ViT on CIFAR-100 trained with $\ell_{1.1}$, $\ell_{1.75}$, $\ell_2$, $\ell_3$-MD and Adam.\label{fig:response-sec4-test-acc}]{\includegraphics[width=0.48\textwidth]{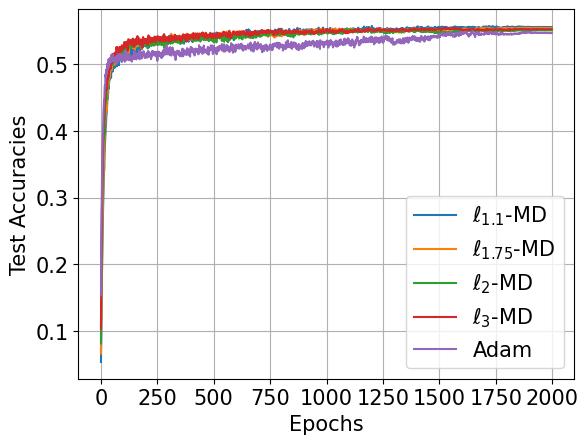}}
\end{tabular}
\caption{Test accuracy comparison across different optimizers on CIFAR-10 and -100.}
\label{fig:cifar-comparison}
\end{figure}

Furthermore, we examine the learned weight distributions by plotting the absolute values of attention layer parameters (key, query, and value matrices) from models trained with different optimizers in Figure~\ref{figure:hist-adam}. Consistent with our earlier observations in Figure~\ref{figure:hist}, $\ell_{1.1}$-MD produces markedly sparser attention weights compared to Adam, with a substantially higher concentration of near-zero parameters. {This sparsity pattern reveals that $\ell_{1.1}$-MD implicitly performs feature selection during training, concentrating representational capacity on the most discriminative attention components while driving less relevant weights toward zero. Such induced sparsity not only enhances model interpretability by identifying which attention mechanisms are critical for the task, but also enables more aggressive pruning and compression strategies without sacrificing performance---a practical advantage for deployment in resource-constrained environments.}

\begin{figure}[h]
\centering
\begin{tabular}{cccc}
\subfloat[$W_K$ parameters with $\ell_{1.1}$-\ref{eqn:md}]{\includegraphics[width =0.31\linewidth]{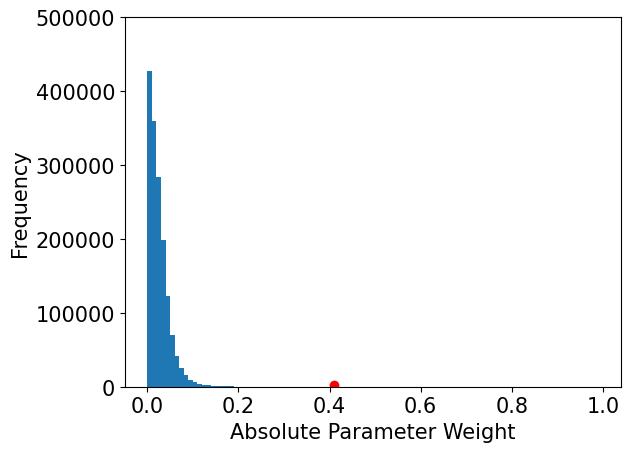}} 
\hspace{-.25cm}
&
\subfloat[$W_Q$ parameters with $\ell_{1.1}$-\ref{eqn:md}]{\includegraphics[width = 0.31\linewidth]{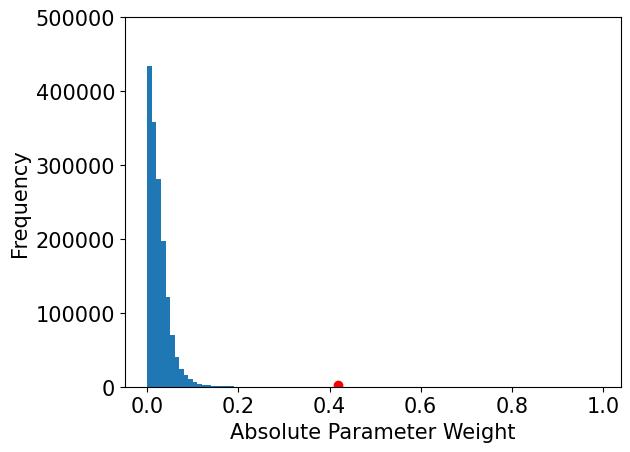}}
\hspace{-.3cm}
&
\subfloat[$W_V$ parameters with $\ell_{1.1}$-\ref{eqn:md}]{\includegraphics[width = 0.3\linewidth]{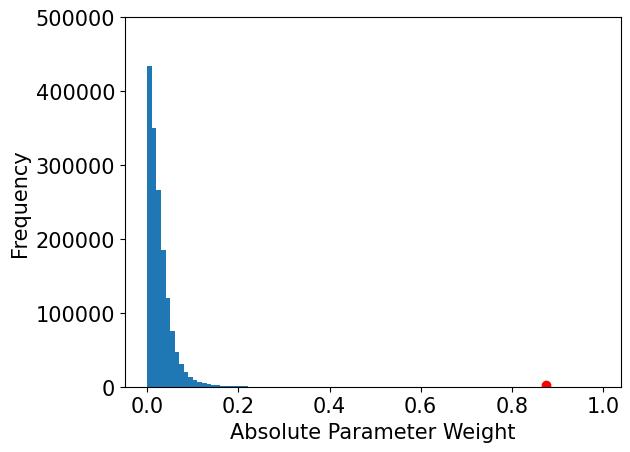}} \vspace{.3cm}\\
\subfloat[$W_K$ parameters with Adam]{\includegraphics[width = 0.31\linewidth]{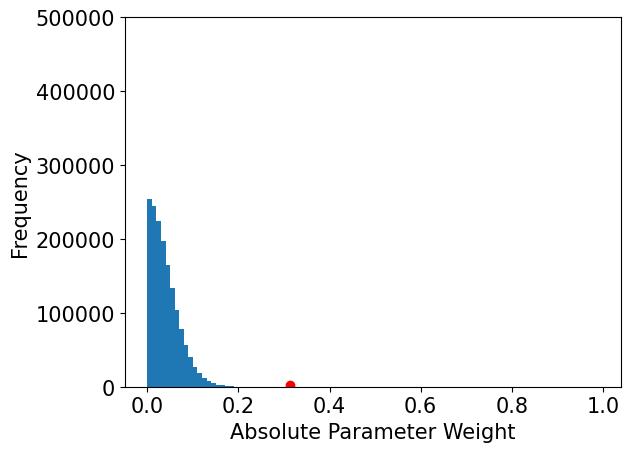}} &
\subfloat[$W_Q$ parameters with Adam]{\includegraphics[width = 0.31\linewidth]{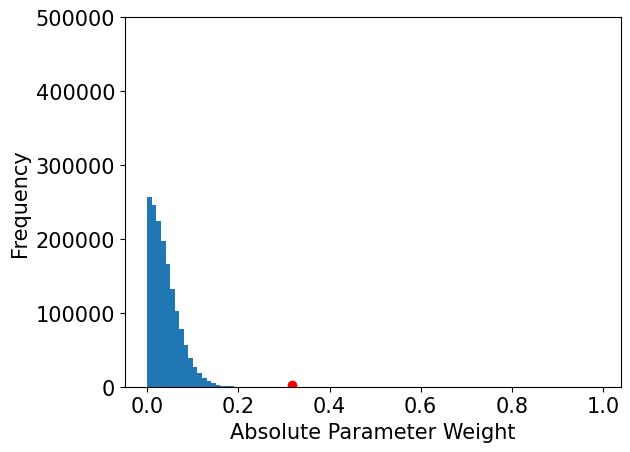}} &
\subfloat[$W_V$ parameters with Adam]{\includegraphics[width = 0.3\linewidth]{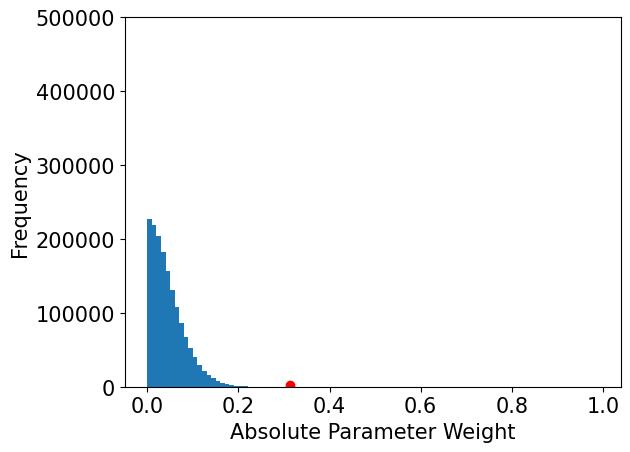}} \\
\end{tabular}
 \caption{Histogram of the absolute values of the $W_K,W_Q,$ and $W_V$ components of ViT models trained with $\ell_{1.1}$-\ref{eqn:md} and Adam on CIFAR-10. These results show that $\ell_{1.1}$-\ref{eqn:md} can be more explainable compared to Adam because it produces sparser parameters, which should induce better token selection.}
\label{figure:hist-adam}
\end{figure}

{

Similarly, we also compare these algorithms in training the same ViT architecture on CIFAR-100, a more complex dataset.

The results in Figure~\ref{fig:response-sec4-test-acc} show that our \ref{eqn:md} algorithms generally achieve test scores of around $55\%$, which are at a similar level to those of other popular training algorithms, such as SGD and Adam, on CIFAR-100, further demonstrating that our algorithms can compete with the state-of-the-art. Again, we plot histograms of the absolute values of the key, query, and value weights in the attention layers of the resulting models that were trained using Adam and $\ell_{1.1}$--\ref{eqn:md}. Figure~\ref{fig:response-sec4-hist} shows these histograms, and as we can see, the resulting model trained by $\ell_{1.1}$--\ref{eqn:md} has significantly more attention weight entries that are close to zero when compared to the model trained by Adam, making the attention weights sparser.

\begin{figure}[h]
\centering
\begin{tabular}{cccc}
\subfloat[$W_K$ parameters with $\ell_{1.1}$-\ref{eqn:md}]{\includegraphics[width =0.31\linewidth]{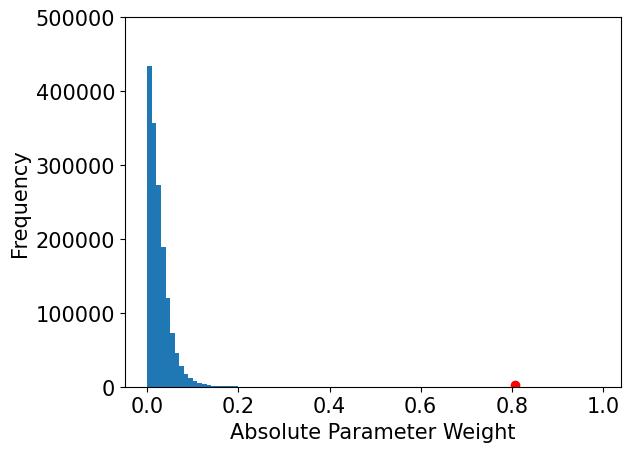}} 
\hspace{-.25cm}
&
\subfloat[$W_Q$ parameters with $\ell_{1.1}$-\ref{eqn:md}]{\includegraphics[width = 0.31\linewidth]{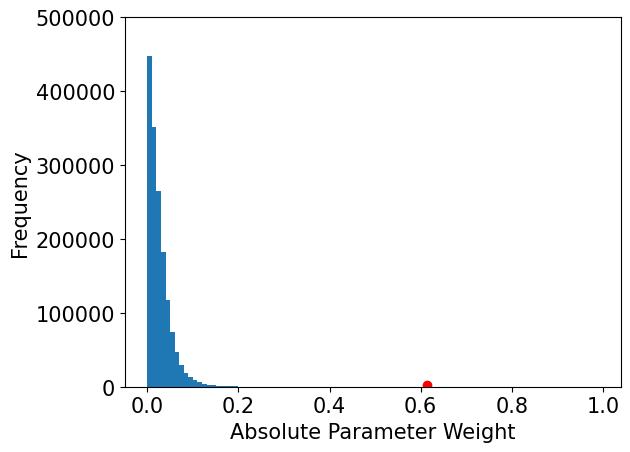}}
\hspace{-.3cm}
&
\subfloat[$W_V$ parameters with $\ell_{1.1}$-\ref{eqn:md}]{\includegraphics[width = 0.3\linewidth]{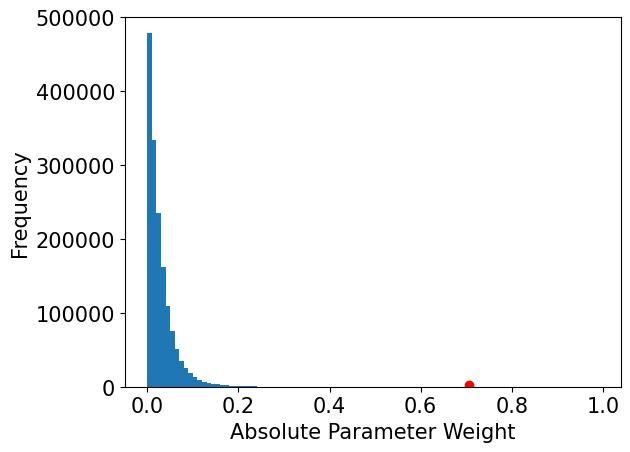}} \vspace{.3cm}\\
\subfloat[$W_K$ parameters with Adam]{\includegraphics[width = 0.31\linewidth]{Images/251013_cifar100_adam_wk_weights.jpg}} &
\subfloat[$W_Q$ parameters with Adam]{\includegraphics[width = 0.31\linewidth]{Images/251013_cifar100_adam_wq_weights.jpg}} &
\subfloat[$W_V$ parameters with Adam]{\includegraphics[width = 0.3\linewidth]{Images/251013_cifar100_adam_wv_weights.jpg}} \\
\end{tabular}
 \caption{Histogram of the absolute values of the entries in the $W_K,W_Q$, and $W_V$ attention weight matrices of the resulting models trained by $\ell_{1.1}$--\ref{eqn:md} and Adam on CIFAR-100. We can see that the weights from the model that was trained by $\ell_{1.1}$--\ref{eqn:md} is more sparse. The red dot represent the maximum absolute values of the respective weight matrices.}
\label{fig:response-sec4-hist}
\end{figure}
}

\section{Conclusion and Future Work}\label{sec:concl}
We explored the optimization dynamics and generalization performance of a family of \ref{eqn:md} 
algorithms for softmax attention mechanisms, focusing on \ref{alg:p:agd}, which generalizes GD by using the $p$-th power of the $\ell_p$-norm as the potential function. Our theoretical analysis and experiments show that \ref{alg:p:agd} converges to the solution of a generalized hard-margin SVM with an $\ell_p$-norm objective in classification tasks using a single-layer softmax attention model. This generalized SVM separates optimal from non-optimal tokens via linear constraints on token pairs. We also examined the joint problem under logistic loss with $\ell_p$-norm regularization and proved that \(W\) and \(v\) generated by $\ell_p$-norm regularization path converge to their respective generalized max-margin solutions. Finally, our numerical experiments on real data demonstrate that \ref{eqn:md} algorithms improve generalization over standard GD and excel in optimal token selection. 

{Our theoretical guarantees are established for single-head, one-layer attention models—a standard abstraction that balances analytical tractability with capturing the essential nonconvex structure of softmax attention. An important direction for future work is extending these convergence results to multi-head and multi-layer architectures, where each head solves separation problems in different subspaces and stacked layers progressively refine feature representations.}


\acks{The authors acknowledge the MIT SuperCloud and Lincoln Laboratory Supercomputing Center for providing computing resources that have contributed to the results reported within this paper. This work was supported in part by MathWorks, Jane Street, the MIT-IBM Watson AI Lab, the MIT-Amazon Science Hub, and the MIT-Google Program for Computing Innovation. The authors also thank Roey Magen for valuable feedback on the joint regularization path analysis.
}


\bibliography{jmlr_main}

\appendix



\newpage
\addtocontents{toc}{\protect\setcounter{tocdepth}{3}}
\tableofcontents

\begingroup


\section{Auxiliary Lemmas}

\subsection{Additional Notations}

Consider the following constants for the proofs, depending on the dataset \((X_i, Y_i, z_i)_{i=1}^n\), the parameter \(v\), and the locally optimal token \((\alpha_i)_{i=1}^n\):
\begin{subequations}\label{eqn:def:delta}
\begin{align}
\nonumber
\delta' &:= \frac{1}{2} \min_{i \in [n]} \min_{\tau \in \bar{\mathcal{T}}_i} \left( (X_{i \alpha_i} - X_{i \tau})^\top W^\alpha_\mathrm{mm} z_i - 1 \right) \\
&\leq \frac{1}{2} \min_{i \in [n]} \min_{t \in \mathcal{T}_i, \tau \in \bar{\mathcal{T}}_i} \left( (X_{it} - X_{i \tau})^\top W^\alpha_\mathrm{mm} z_i \right);    \\
&\delta := \min \{0.25, \delta'\}.
\end{align}
\end{subequations}
When $\bar{\mathcal{T}}_i=\emptyset$ for all $i\in[n]$ (i.e.~globally-optimal indices), we set $\delta'=\infty$ as all non-neighbor related terms will disappear. Further, recalling Definition~\ref{def:w:attsvm} and using $W^\alpha_\mathrm{mm}$---i.e., the minimizer of \eqref{eqn:w-svm}, we set
\begin{align}\label{eqn:def:A}
\nonumber
A' &:= \|W_\mathrm{mm}^\alpha\|_{p,p} \max_{i \in [n], t \in [T]} \|X_{it} z_i^\top\|_{\frac{p}{p-1},\frac{p}{p-1}};
\\
A &:= \max \{1, A'\}.
\end{align}
Recalling Definition~\ref{def:cone:ellp}, we provide the following initial radius \(\mu=\mu_0\) which will be used later in Lemma \ref{lem:obj:grad-corr-bound-1}:  
\begin{equation}\label{eqn:def:mu0}
\mu_0 := \left\{
\begin{aligned}
    &\frac{1}{p} \left( \frac{\delta}{8A} \right)^p & \textnormal{if } p \geq 2, \\
    &\frac{1}{p} \left( \frac{\delta(p-1)}{4A d^{\frac{2}{p}-1}} \right)^2 & \textnormal{otherwise}.
\end{aligned}
\right.
\end{equation}
Furthermore, define the following sums for \(W\):
\[
S_i(W) := \sum_{t \in \mathcal{T}_i} [\sigma(X_i W z_i)]_t,  \quad \textnormal{and} \quad  
Q_i(W) := \sum_{t \in \bar{\mathcal{T}}_i} [\sigma(X_i W z_i)]_t.
\]
For the samples \(i\) with non-empty supports \(\mathcal{T}_i\), let
\begin{align}
\gamma_i^{\text{gap}} := \gamma_{i \alpha_i} - \max_{t \in \mathcal{T}_i} \gamma_{it},
\quad \textnormal{and} \quad  
\bar{\gamma}_i^{\text{gap}} := \gamma_{i \alpha_i} - \min_{t \in \mathcal{T}_i} \gamma_{it}.
\end{align}
Furthermore, we define the \textit{global score gap} as
\begin{align}
\Gamma &:= \sup_{i \in [n], t, \tau \in [T]} |\gamma_{it} - \gamma_{i\tau}|.
\end{align}

\subsection{Lemma for Analyzing The $\ell_p$-Norm}

In this section of the Appendix, we provide some analysis on comparing the $\ell_p$-norm, the $\ell_p$ Bregman divergence, and the $\ell_2$-norm of matrices. Since the $\ell_2$-norm of matrices are much easier to analyze and use, like in the inner product Cauchy-Schwarz inequality, having this comparison is valuable when analyzing the \ref{alg:p:agd}.

\begin{lemma}\label{lem:norm:2p-bound}
    For any \(d \times d\) matrix \(W\), let \(w\) denote its vectorization. Then, 
\begin{align*}
\|w\|_p \in \left[d^{\frac{2}{p}-1}\|w\|_2, \|w\|_2\right]    
\end{align*}    
for \(p \geq 2\), and for \(1 < p \leq 2\), \(\|w\|_p\) is in a similar interval, with the two ends switched.
\end{lemma}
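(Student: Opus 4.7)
The statement is a standard finite-dimensional $\ell_p$–norm equivalence applied to the vectorization $w \in \mathbb{R}^{N}$ with $N = d^2$. The plan is to reduce the matrix claim to the well-known fact that for any vector $v \in \mathbb{R}^N$ and any $1 \leq q \leq r \leq \infty$,
\[
\|v\|_r \leq \|v\|_q \leq N^{\frac{1}{q}-\frac{1}{r}}\, \|v\|_r,
\]
and then specialize to $(q,r) = (2,p)$ for $p\geq 2$ and $(q,r)=(p,2)$ for $1<p\leq 2$.

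Concretely, I would first prove the monotonicity bound $\|v\|_r \leq \|v\|_q$ for $r \geq q$ by writing $\|v\|_r^r = \sum_i |v_i|^{r-q}|v_i|^q \leq \|v\|_\infty^{r-q}\|v\|_q^q \leq \|v\|_q^{r-q}\|v\|_q^q = \|v\|_q^r$, where we used $\|v\|_\infty \leq \|v\|_q$. For the reverse inequality I would apply the power-mean inequality to the nonnegative numbers $|v_i|$: for $q \leq r$,
\[
\Bigl(\tfrac{1}{N}\sum_i |v_i|^q\Bigr)^{1/q} \;\leq\; \Bigl(\tfrac{1}{N}\sum_i |v_i|^r\Bigr)^{1/r},
\]
which rearranges to $\|v\|_q \leq N^{\frac{1}{q}-\frac{1}{r}}\|v\|_r$. (An equivalent route is to apply Hölder's inequality with exponents $r/q$ and $r/(r-q)$ to $\sum_i |v_i|^q \cdot 1$.)

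With these two facts, the lemma follows by substitution. For $p \geq 2$, taking $q=2$, $r=p$ gives $\|w\|_p \leq \|w\|_2$ and $\|w\|_2 \leq N^{1/2-1/p}\|w\|_p$, i.e.\ $\|w\|_p \geq N^{1/p-1/2}\|w\|_2 = d^{2/p-1}\|w\|_2$; together these place $\|w\|_p$ in the stated interval $[d^{2/p-1}\|w\|_2,\ \|w\|_2]$. For $1<p\leq 2$, taking $q=p$, $r=2$ gives $\|w\|_2 \leq \|w\|_p$ and $\|w\|_p \leq N^{1/p-1/2}\|w\|_2 = d^{2/p-1}\|w\|_2$, yielding the reversed interval $[\|w\|_2,\ d^{2/p-1}\|w\|_2]$ asserted by the lemma.

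There is no real obstacle here—the only thing worth double-checking is the exponent bookkeeping: since $w$ has $N=d^2$ entries, the prefactor $N^{1/p-1/2}$ becomes $d^{2/p-1}$, which is exactly the factor in the statement. I would therefore write the proof as two short displays (one per direction) followed by the specialization to the two ranges of $p$.
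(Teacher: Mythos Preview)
Your proposal is correct and essentially matches the paper's proof: both use the power-mean inequality for the bound $\|w\|_q \leq N^{1/q-1/r}\|w\|_r$, and both handle the monotonicity direction by an elementary one-line argument (the paper via subadditivity of $t\mapsto t^{p/2}$, you via the $\|v\|_\infty\leq\|v\|_q$ trick). The only cosmetic difference is that you state the general $q\leq r$ inequality first and then specialize, whereas the paper works directly with $q=2$.
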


\begin{proof}
    Let $w_1,w_2,...,w_{d^2}$ be the entries of $w$. Therefore, for $p\geq2$,
    \begin{align*}
        \|w\|_p
        &=\sqrt[p]{\sum_{i=1}^{d^2}|w_i|^p}\\
        &=\sqrt[p]{\sum_{i=1}^{d^2}(|w_i|^2)^{p/2}},
    \end{align*}
    and because $\frac{p}{2}\geq 1$, we would have
    \begin{align*}
        \sqrt[p]{\sum_{i=1}^{d^2}(|w_i|^2)^{p/2}}
        &\leq\sqrt[p]{\left(\sum_{i=1}^{d^2}|w_i|^2\right)^{p/2}}\\
        &=\sqrt[p]{\|w\|_2^p}=\|w\|_2.
    \end{align*}
    Therefore, $\|w\|_p\leq\|w\|_2$ whenever $p\geq 2$. A similar argument will get us $\|w\|_p\geq\|w\|_2$ whenever $1<p\leq2$, so one end of the interval is solved for each case, now for the other end.
    
    Using the power-mean inequality, we can get that whenever $p\geq 2$,
    \[\sqrt[p]{\frac{1}{d^2}\sum_{i=1}^{d^2}|w_i|^p}\geq\sqrt{\frac{1}{d^2}\sum_{i=1}^{d^2}|w_i|^2},\]
    \[d^{-\frac{2}{p}}\|w\|_p\geq d^{-1}\|w\|_2,\]
    \[\|w\|_p\geq d^{\frac{2}{p}-1}\|w\|_2.\]
    Similarly, for $1<p\leq 2$,
    \[\|w\|_p\leq d^{\frac{2}{p}-1}\|w\|_2.\]
\end{proof}

\begin{lemma}\label{lem:norm:breg-bound}
    Let \( W_1, W_2 \in \mathbb{R}^{d \times d} \) be two matrices such that \( \|W_1\|_{p,p} = \|W_2\|_{p,p} = 1 \). Then, the following inequalities hold:
    \begin{enumerate}[label={\textnormal{{L\arabic*.}}}]
    \item For \( p \geq 2 \),
    \[
    D_\psi(W_1, W_2) \geq \frac{1}{p \times 2^p} \|W_1 - W_2\|_{p,p}^p,
    \]
    \item For \( p \in (1, 2) \),
    \[
    D_\psi(W_1, W_2) \geq \frac{(p-1)^2}{p} \|W_1 - W_2\|_{2,2}^2.
    \]
    \end{enumerate}
    Here, $D_\psi(\cdot,\cdot)$ denotes the Bregman divergence given in Definition~\ref{def:breg:d}.
\end{lemma}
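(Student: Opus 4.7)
The plan is to exploit the coordinate-separability of the potential. Writing $\psi(W) = \frac{1}{p}\sum_{i,j}|W_{ij}|^p = \sum_{i,j}\phi(W_{ij})$ with the scalar function $\phi(x) := |x|^p/p$, both the gradient and the Bregman divergence decompose entry-wise, giving
\[
D_\psi(W_1, W_2) = \sum_{i,j} D_\phi\bigl((W_1)_{ij},\,(W_2)_{ij}\bigr).
\]
It therefore suffices to prove appropriate scalar lower bounds on $D_\phi(a,b)$ and sum over entries.

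For the case $1 < p < 2$, the normalization $\|W_1\|_{p,p}=\|W_2\|_{p,p}=1$ forces $|(W_i)_{ij}|\le 1$ for every entry, and hence every point $x$ on the segment joining $(W_2)_{ij}$ to $(W_1)_{ij}$ satisfies $|x|\le 1$. Since $\phi''(x) = (p-1)|x|^{p-2}$ and $p-2<0$, the minimum of $\phi''$ on $[-1,1]$ is $p-1$ (attained at $|x|=1$). Taylor's theorem with integral remainder then yields $D_\phi(a,b) \ge \tfrac{p-1}{2}(a-b)^2$ for each entry; summing gives $D_\psi(W_1,W_2) \ge \tfrac{p-1}{2}\|W_1-W_2\|_{2,2}^2$. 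Since $\tfrac{(p-1)^2}{p} \le \tfrac{p-1}{2}$ for $p\in(1,2)$, the paper's claim with the weaker constant follows.

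For the case $p \ge 2$, I would prove the stronger and \emph{unconditional} scalar inequality $D_\phi(a,b) \ge \tfrac{1}{p\cdot 2^p}|a-b|^p$, valid for all $a,b\in\mathbb{R}$ (note both sides are $p$-homogeneous, so the normalization is irrelevant). Writing
\[
D_\phi(a,b) = (p-1)(a-b)^2 \int_0^1 (1-t)\,|b+t(a-b)|^{p-2}\,dt,
\]
I would split into two regimes. When $\max(|a|,|b|) \ge 2|a-b|$, the pair $a,b$ necessarily share a sign and the segment stays bounded below by $|a-b|$ in absolute value, so $|b+t(a-b)|^{p-2}\ge |a-b|^{p-2}$ pointwise and the integral yields $D_\phi(a,b)\ge\tfrac{p-1}{2}|a-b|^p$. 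Otherwise, $p$-homogeneity allows rescaling to $|a-b|=1$ with $|a|,|b|<2$; on this compact set $D_\phi$ is continuous and strictly positive, and a direct evaluation (for instance at the extremizer $a=-b=\tfrac12$, where $D_\phi = 1/2^{p-1}$) confirms a minimum well above $\tfrac{1}{p\cdot 2^p}$. Summing the resulting entry-wise inequality then yields the matrix bound.

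\textbf{Main obstacle.} The delicate part is the $p\ge 2$ regime: because $\phi''(x) = (p-1)|x|^{p-2}$ \emph{vanishes} at $x=0$, a single Taylor-style bound along the interpolation segment fails whenever the segment straddles the origin. The two-regime split addresses this cleanly: the ``same-sign / well-separated'' case is handled by the integrand bound, while the ``opposite-sign / near-zero'' case is reduced via $p$-homogeneity to a compact region where an explicit minimum is computed. Alternatively, one can invoke a standard $p$-uniform convexity inequality for $\|\cdot\|_p^p$ in $\ell^p$-spaces; either way, pinning down the explicit constant $1/(p\cdot 2^p)$ (which is far from sharp and hence leaves substantial slack) is the one place where care is required.
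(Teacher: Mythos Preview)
Your coordinate-wise reduction to the scalar function $\phi(x)=|x|^p/p$ matches the paper, and your treatment of the case $p\in(1,2)$ is correct and in fact cleaner than the paper's: the normalization forces every entry into $[-1,1]$, where $\phi''(x)=(p-1)|x|^{p-2}\ge p-1$, so the integral-remainder form of Taylor gives $D_\phi(a,b)\ge\tfrac{p-1}{2}(a-b)^2$ immediately. The paper instead fixes $\Delta=x-y$ and minimizes explicitly over the feasible $y$ in each sign sub-case, which is more laborious and yields the weaker constant $\tfrac{(p-1)^2}{p}$ you then discard anyway.

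For $p\ge 2$, however, your second regime has a genuine gap. The point $a=-b=\tfrac12$ is \emph{not} the extremizer on $\{|a-b|=1\}$: already for $p=3$ the derivative of $s\mapsto D_\phi(1-s,-s)$ at $s=\tfrac12$ equals $\tfrac12>0$, and the actual minimum sits at $s=1-\tfrac{1}{\sqrt 2}$ with value $\approx 0.195<0.25=1/2^{p-1}$. Compactness alone therefore gives only \emph{some} positive $p$-dependent constant, not the specific $1/(p\cdot 2^p)$ you need, and your ``direct evaluation'' does not verify the claimed lower bound. The paper closes this by splitting on sign rather than on magnitude: for same-sign pairs it shows (via convexity of $t\mapsto t^{p-1}$) that the minimum over fixed $\Delta$ is $\tfrac1p|\Delta|^p$, attained when one coordinate is zero; for opposite-sign pairs it uses the crude bound
\[
\tfrac1p|x|^p+\tfrac{p-1}p|y|^p+|x||y|^{p-1}\ \ge\ \tfrac1p\max(|x|,|y|)^p\ \ge\ \tfrac1{p\cdot 2^p}(|x|+|y|)^p,
\]
which is where the constant $1/(p\cdot 2^p)$ actually originates. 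Your fallback to a cited $p$-uniform-convexity inequality would close the gap, but the direct argument as written is incomplete.
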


\begin{proof}
    Let $W_1=(x_{ij})_{i,j\in[d]}$ and $W_2=(y_{ij})_{i,j\in[d]}$, then from Definition~\ref{def:breg:d}, we have
    \begin{align*}
        D_\psi(W_1,W_2)
        &=\frac{1}{p}\sum_{i,j\in[d]}|x_{ij}|^p-\frac{1}{p}\sum_{i,j\in[d]}|y_{ij}|^p-\sum_{i,j\in[d]}|y_{ij}|^{p-1}(x_{ij}-y_{ij})\operatorname{sign}(y_{ij})\\
        &=\sum_{i,j\in[d]}\left(\frac{1}{p}|x_{ij}|^p+\frac{p-1}{p}|y_{ij}|^p-|y_{ij}|^{p-1}|x_{ij}|\operatorname{sign}(x_{ij}y_{ij})\right).
    \end{align*}
    Therefore, it is enough to prove that whenever $x,y\in[-1,1]$, the expression
    \begin{align}\label{eq-bregman-individual}
        \frac{1}{p}|x|^p+\frac{p-1}{p}|y|^p-|x\|y|^{p-1}\operatorname{sign}(xy)
    \end{align}
    is at least $\frac{1}{p2^p}|x-y|^p$ if $p\geq2$, or is at least $\frac{(p-1)^2}{p}|x-y|^2$ if $p\in(1,2)$. We split the argument into two cases, the first is when the signs of $x$ and $y$ are the same, and the second for when they are not.

    \noindent\textbf{Case 1:} $\operatorname{sign}(xy)=1$, so both $x$ and $y$ have the same sign, WLOG both are non-negative. Let us fix the value $\Delta\in[-1,1]$ and find the minimum value of (\ref{eq-bregman-individual}) when we constraint $x$ and $y$ to be positive and $x-y=\Delta$. Therefore, that expression can be written as
    \[\frac{(y+\Delta)^p+(p-1)y^p}{p}-(y+\Delta)y^{p-1},\]
    the first derivative with respect to $y$ is
    \begin{align*}
        (y+\Delta)^{p-1}+(p-1)y^{p-1} &-y^{p-1}-(p-1)(y+\Delta)y^{p-2}\\
        &=(y+\Delta)^{p-1}-y^{p-1}-(p-1)\Delta y^{p-2}.    
    \end{align*}
    Since the function $t\mapsto t^{p-1}$ is convex for $p\geq 2$, and concave for $p\in(1,2)$, then that derivative is always non-negative when $p\geq 2$ and always negative when $p\in(1,2)$.

    \noindent\textbf{Sub-Case 1.1:} $p\geq 2$. In this subcase, (\ref{eq-bregman-individual}) reaches its minimum when $(x,y)=(\Delta,0)$ or $(0,-\Delta)$, depending on the sign of $\Delta$, plugging them in gets us the minimum, which is $\frac{1}{p}|\Delta|^p$ when $\Delta\geq0$ or $\frac{p-1}{p}|\Delta|^p$ otherwise.
    
    \noindent\textbf{Sub-Case 1.2:} $p\in(1,2)$. In this subcase, (\ref{eq-bregman-individual}) reaches its minimum when $(x,y)=(1,1-\Delta)$ if $\Delta$ is non-negative or $(1+\Delta,1)$ otherwise. When $\Delta$ is non-negative, the desired minimum is
    \begin{align*}
        \frac{1+(p-1)(1-\Delta)^p}{p}-(1-\Delta)^{p-1}
        &=\frac{1}{p}(1-(1-\Delta)^{p-1}-(p-1)\Delta(1-\Delta)^{p-1})\\
        &\geq\frac{1}{p}((p-1)\Delta-(p-1)\Delta(1-\Delta)^{p-1})\\
        &=\frac{(p-1)\Delta}{p}(1-(1-\Delta)^{p-1})\geq\frac{(p-1)^2}{p}\Delta^2.
    \end{align*}

    \noindent Combining the results from the subcases, we get that the expression in (\ref{eq-bregman-individual}) is lower-bounded by $\frac{1}{p}|x-y|^p$ when $p\geq2$, or $\frac{(p-1)^2}{p}|x-y|^2$ otherwise, which sufficiently satisfies the desired bounds for case 1.\\

    \noindent\textbf{Case 2:} $\operatorname{sign}(xy)=-1$, so $x$ and $y$ has opposite sign. The expression in (\ref{eq-bregman-individual}) can be simplified to
    \[\frac{1}{p}|x|^p+\frac{p-1}{p}|y|^p+|x||y|^{p-1},\]
    and we want to prove that it is at least $\frac{1}{p2^p}(|x|+|y|)^p$ when $p\geq 2$, or is at least $\frac{(p-1)^2}{p}(|x|+|y|)^2$ when $p\in(1,2)$. In the case that $p\geq 2$, one of $|x|$ or $|y|$ is at least $\frac{|x|+|y|}{2}$, so the above is at least $\frac{1}{p}\left(\frac{|x|+|y|}{2}\right)^p=\frac{1}{p2^p}(|x|+|y|)^p$. Otherwise,
    \begin{align*}
        \frac{1}{p}|x|^p+\frac{p-1}{p}|y|^p+|x|y|^{p-1}
        &=\frac{|x|(|x|^{p-1}+|y|^{p-1})+(p-1)|y|^{p-1}(|x|+|y|)}{p}\\
        &\geq\frac{(|x|+|y|)(|x|+(p-1)|y|^{p-1})}{p}\\
        &\geq\frac{(|x|+|y|)((p-1)|x|+(p-1)|y|)}{p}\\
        &=\frac{p-1}{p}(|x|+|y|)^2\geq\frac{(p-1)^2}{p}(|x|+|y|)^2.
    \end{align*}
    Therefore, we have proven the bound for this case.
\end{proof}

\begin{lemma}\label{lem:norm:xpyp}
For any $x\geq y\geq 0$, we have 
    \[x(x^{p-1}-y^{p-1})\geq\frac{p-1}{p}x^p-\frac{p-1}{p}y^p\geq y(x^{p-1}-y^{p-1}).\]
\end{lemma}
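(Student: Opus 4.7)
The plan is to reduce both inequalities to a single standard tool, namely the weighted AM--GM (equivalently Young's) inequality, and handle the boundary case $y=0$ by inspection. First I would expand the two products on the ends, $x(x^{p-1}-y^{p-1}) = x^{p} - x\,y^{p-1}$ and $y(x^{p-1}-y^{p-1}) = y\,x^{p-1}-y^{p}$, so that the chain of inequalities can be rewritten in a form where $x^p$ and $y^p$ appear symmetrically. Clearing denominators (multiplying through by $p$) and collecting terms, the left inequality $x(x^{p-1}-y^{p-1}) \geq \tfrac{p-1}{p}(x^{p}-y^{p})$ becomes
\[
x^{p} + (p-1)\,y^{p} \;\geq\; p\,x\,y^{p-1},
\]
while the right inequality $\tfrac{p-1}{p}(x^{p}-y^{p}) \geq y(x^{p-1}-y^{p-1})$ becomes
\[
(p-1)\,x^{p} + y^{p} \;\geq\; p\,y\,x^{p-1}.
\]

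Next I would apply weighted AM--GM with weights $\tfrac{1}{p}$ and $\tfrac{p-1}{p}$ (both positive since $p>1$, and summing to one) to the pair $(x^{p},\,y^{p})$, yielding
\[
\tfrac{1}{p}\,x^{p} + \tfrac{p-1}{p}\,y^{p} \;\geq\; (x^{p})^{1/p}\,(y^{p})^{(p-1)/p} \;=\; x\,y^{p-1},
\]
which, after multiplying by $p$, is exactly the first rearranged form. Swapping the two weights (so that $\tfrac{p-1}{p}$ multiplies $x^{p}$) gives $\tfrac{p-1}{p}\,x^{p} + \tfrac{1}{p}\,y^{p} \geq x^{p-1}\,y$, and rescaling recovers the second rearranged form. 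For the boundary case $y=0$ (with $p>1$ so that $y^{p-1}=0$), both inequalities collapse to $x^{p} \geq \tfrac{p-1}{p}\,x^{p} \geq 0$, which holds since $\tfrac{p-1}{p}\in(0,1)$.

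There is no real obstacle here beyond bookkeeping: the whole statement is a two-sided instance of Young's inequality, so the only care needed is in tracking signs and the direction of rearrangement when moving $(p-1)/p$ through the expressions. The utility of the lemma lies in providing a sharp two-sided comparison between $x^{p}-y^{p}$ and the cross-products $x\,(x^{p-1}-y^{p-1})$ and $y\,(x^{p-1}-y^{p-1})$, which is exactly what is needed to control iterate-by-iterate changes of $\|W(k)\|_{p,p}^{p}$ in the growth and convergence-rate analyses of \ref{alg:p:agd}.
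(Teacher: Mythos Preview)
Your proof is correct, and it takes a genuinely different route from the paper. You rearrange each inequality into the form $x^{p}+(p-1)y^{p}\geq p\,xy^{p-1}$ (respectively $(p-1)x^{p}+y^{p}\geq p\,yx^{p-1}$) and then recognize each as an instance of the weighted AM--GM/Young inequality with weights $\tfrac{1}{p}$ and $\tfrac{p-1}{p}$. The paper instead fixes $y$ and differentiates in $x$: since $\tfrac{d}{dx}\bigl(\tfrac{p-1}{p}x^{p}-\tfrac{p-1}{p}y^{p}\bigr)=(p-1)x^{p-1}$ while $\tfrac{d}{dx}\,y(x^{p-1}-y^{p-1})=(p-1)x^{p-2}y\leq (p-1)x^{p-1}$ for $x\geq y$, and the two functions agree at $x=y$, the right inequality follows by integration; the left is done the same way. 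Your argument is cleaner and explains \emph{why} the result holds (it is Young's inequality in disguise), while the paper's derivative argument is more elementary and matches the calculus style used in the neighboring lemmas. Both implicitly rely on $p>1$, which is the standing assumption in the paper.
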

\begin{proof}
For any $x\geq y\geq 0$, we have
    \[\frac{d}{dx}\left(\frac{p-1}{p}x^p-\frac{p-1}{p}y^p\right)=(p-1)x^{p-1},\]
    \[\frac{d}{dx}\bigl(y(x^{p-1}-y^{p-1})\bigr)=(p-1)x^{p-2}y\leq(p-1)x^{p-1}.\]
Thus, as $x$ increases, $\frac{p-1}{p}x^p-\frac{p-1}{p}y^p$ grows at least as fast as $y(x^{p-1}-y^{p-1})$. Since both expressions are equal at $x=y$, the right inequality follows. The left inequality can be proven analogously.
\end{proof}

\begin{lemma}\label{lem:norm:xpyp-2}
    For any $x \geq y \geq 0$, if $q \geq 1$, then
    \[x^q - y^q \leq qx^{q-1}(x-y),\]
    and if $0 < q < 1$, then
    \[x^q - y^q \leq qy^{q-1}(x-y).\]
\end{lemma}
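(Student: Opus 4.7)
The plan is to prove both inequalities by exploiting the convexity (for $q \geq 1$) and concavity (for $0 < q < 1$) of the function $f(t) = t^q$ on $[0, \infty)$. Specifically, when $q \geq 1$, $f$ is convex, so the supporting-line inequality at the right endpoint $x$ reads $f(y) \geq f(x) + f'(x)(y - x)$; rearranging yields the first bound $x^q - y^q \leq q x^{q-1}(x-y)$. When $0 < q < 1$, $f$ is concave on $(0, \infty)$, so the tangent-line-above inequality at the left endpoint $y$ reads $f(x) \leq f(y) + f'(y)(x - y)$, which rearranges directly to the second bound $x^q - y^q \leq q y^{q-1}(x - y)$.

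A fully equivalent and perhaps cleaner route would be via the mean value theorem: write $x^q - y^q = q \xi^{q-1}(x - y)$ for some $\xi \in [y, x]$, and then bound $\xi^{q-1}$. If $q \geq 1$ then $t \mapsto t^{q-1}$ is nondecreasing on $[0,\infty)$, so $\xi^{q-1} \leq x^{q-1}$, which gives the first inequality. If $0 < q < 1$ then $q - 1 < 0$, so $t \mapsto t^{q-1}$ is nonincreasing on $(0,\infty)$, so $\xi^{q-1} \leq y^{q-1}$, which gives the second inequality. I would likely present the MVT version because it handles both cases by the same mechanism and makes the role of the sign of $q-1$ transparent.

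Two small edge cases must be checked separately. First, when $x = y$, both sides of each inequality are zero, so the claim is trivial. Second, when $y = 0$: for $q \geq 1$ the first inequality reduces to $x^q \leq q x^q$, which is immediate since $q \geq 1$; for $0 < q < 1$ the right-hand side of the second inequality involves $0^{q-1}$, which is either interpreted as $+\infty$ (making the bound vacuous) or excluded by assuming $y > 0$, a restriction that is harmless for the downstream applications since it is invoked only with strictly positive arguments.

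There is no serious obstacle here: the whole lemma is a one-line consequence of convexity/concavity, and the only thing to be careful about is the direction of monotonicity of $t^{q-1}$ as the exponent sign flips at $q = 1$, which is precisely what produces the two different forms of the bound.
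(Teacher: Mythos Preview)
Your proposal is correct. Both the convexity/concavity argument and the mean value theorem version are valid and handle the two regimes cleanly; the edge cases you flag ($x=y$ and $y=0$) are dealt with appropriately.

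The paper takes a closely related but slightly more hands-on route: it fixes $y$, computes the $x$-derivatives of both sides, shows the derivative of the right-hand side dominates that of the left-hand side (using $q(q-1)x^{q-2}(x-y)\geq 0$ when $q\geq 1$, and $qx^{q-1}\leq qy^{q-1}$ when $0<q<1$), and then invokes equality at $x=y$ to conclude. This is of course just the differential form of the convexity/concavity argument you give, so the underlying idea is the same. Your MVT presentation is arguably cleaner because it unifies the two cases through the monotonicity of $t\mapsto t^{q-1}$ and avoids writing out the product-rule derivative of $qx^{q-1}(x-y)$; the paper's version has the minor advantage of not needing to think about the $y=0$ boundary in the $0<q<1$ case, since it never divides or evaluates $y^{q-1}$ in isolation.
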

\begin{proof}
    We compute the derivatives
    \[\frac{d}{dx}(x^q - y^q) = qx^{q-1},\]
    \[\frac{d}{dx}\bigl(qx^{q-1}(x-y)\bigr) = q(q-1)x^{q-2}(x-y) + qx^{q-1},\]
    \[\frac{d}{dx}\bigl(qy^{q-1}(x-y)\bigr) = qy^{q-1}.\]
    When $q \geq 1$, we have $q - 1 \geq 0$ and $x - y \geq 0$, so
    \[\frac{d}{dx}(x^q - y^q) \leq \frac{d}{dx}\bigl(qx^{q-1}(x-y)\bigr).\]
    Since both sides equal zero when $x = y$, the first inequality follows. For $0 < q < 1$, we have $q - 1 < 0$, so $x^{q-1} \leq y^{q-1}$ for $x \geq y > 0$, which gives $qx^{q-1} \leq qy^{q-1}$. A similar argument then establishes the second inequality.
\end{proof}

\subsection{Lemma for Analyzing ERM Objective and Its Gradient}

In this section of the Appendix, we analyze the objective function. We especially want to know about its gradient and the inner product of this gradient with the matrices of the cone set, as was mentioned before in the main body of the paper. The first one bounds the loss objective,

\begin{lemma}\label{lem:obj:l-bound}
    Under Assumption~\ref{assumption-loss}, $\mathcal{L}(W)$ is bounded from above by $\mathcal{L}_{\max}$ and from below by $\mathcal{L}_{\min}$, where $\mathcal{L}(W)$ denotes the objective of \eqref{eqn:erm} with fixed $v$, and $\mathcal{L}_{\max}$ and $\mathcal{L}_{\min}$ are finite dataset-dependent constants.
\end{lemma}

\begin{proof}
    It is enough to show the same thing for each of the loss contributions of each sample, $l_i(y_iv^\top X_i^\top\sigma(X_iWz_i))$. By Assumption \ref{assumption-loss}, we simply need to show that $y_iv^\top X_i^\top\sigma(X_iWz_i)$ is bounded by dataset-dependent bounds. However, $W$ only affects the softmax, so the above expression is bounded above by $\max_{t\in[T]}\gamma_{it}$ and bounded below by $\min_{t\in[T]}\gamma_{it}$, which are dataset dependent.
\end{proof}

\begin{lemma}\label{lem:obj:grad-derive}
    If we denote \( h_i := X_i W z_i \) and \( l_i' := l'(\gamma_i^\top \sigma(h_i)) \), then
    \[
    \nabla \mathcal{L}(W) = \frac{1}{n} \sum_{i=1}^n l_i' X_i^\top (\operatorname{diag}(\sigma(h_i)) - \sigma(h_i) \sigma(h_i)^\top) \gamma_i z_i^\top,
    \]
    where  $\mathcal{L}(W)$ denotes the objective of \eqref{eqn:erm} with fixed $v$.
\end{lemma}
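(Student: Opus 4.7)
The plan is a direct chain-rule computation, with the only non-routine ingredient being the Jacobian of the softmax. I would begin by rewriting each summand of $\mathcal{L}(W)$ in a form that makes the score vector explicit. Recall from Definition~\ref{def:token:score} that $\gamma_{it}=y_i v^\top X_{it}$, so stacking over $t$ gives $\gamma_i = y_i X_i v \in \mathbb{R}^T$. Hence
\[
y_i v^\top X_i^\top \sigma(X_i W z_i) = (y_i X_i v)^\top \sigma(h_i) = \gamma_i^\top \sigma(h_i),
\]
and with $l_i' := l'(\gamma_i^\top \sigma(h_i))$ the scalar chain rule yields
\[
\nabla_W \mathcal{L}(W) = \frac{1}{n}\sum_{i=1}^n l_i'\,\nabla_W\!\left[\gamma_i^\top \sigma(X_i W z_i)\right].
\]

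Next I would compute $\nabla_W[\gamma_i^\top \sigma(h_i)]$ coordinate-wise. By the standard identity, the Jacobian of the softmax at a point $h$ is the symmetric matrix $J_\sigma(h)=\operatorname{diag}(\sigma(h))-\sigma(h)\sigma(h)^\top$; this follows from $\partial \sigma_t/\partial h_s = \sigma_t(\mathbf{1}_{\{t=s\}}-\sigma_s)$ and can be verified in one line. Therefore, with respect to $h$, one has $\nabla_h[\gamma_i^\top \sigma(h)] = J_\sigma(h)\,\gamma_i = (\operatorname{diag}(\sigma(h))-\sigma(h)\sigma(h)^\top)\gamma_i$.

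To push this through to $W$, I would use $h_{it}=\sum_{a,b}[X_i]_{ta}W_{ab}[z_i]_b$, so that $\partial h_{it}/\partial W_{jk}=[X_i]_{tj}[z_i]_k$. Applying the chain rule entrywise gives
\[
\frac{\partial [\gamma_i^\top \sigma(h_i)]}{\partial W_{jk}} = \sum_{t=1}^T \big[J_\sigma(h_i)\gamma_i\big]_t [X_i]_{tj}[z_i]_k = \big[X_i^\top J_\sigma(h_i)\gamma_i\big]_j [z_i]_k,
\]
which in matrix form is the rank-at-most-one outer product $X_i^\top J_\sigma(h_i)\gamma_i\, z_i^\top$. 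Assembling the sum over $i$ with the factor $l_i'/n$ produces the claimed expression, completing the derivation.

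There is no real obstacle here: the symmetry and explicit form of $J_\sigma$ are classical, and the only care needed is bookkeeping when turning the mixed partials $\partial h_{it}/\partial W_{jk}$ into the outer-product form $X_i^\top(\cdot)z_i^\top$. One could alternatively write the proof via the Fr\'echet derivative directly: for any perturbation $\Delta\in\mathbb{R}^{d\times d}$, one has $\mathrm{d}h_i = X_i \Delta z_i$ and
\[
\mathrm{d}\big[\gamma_i^\top \sigma(h_i)\big] = \gamma_i^\top J_\sigma(h_i)\,X_i\Delta z_i = \operatorname{tr}\!\big(z_i\gamma_i^\top J_\sigma(h_i)X_i\,\Delta\big),
\]
so that the gradient, being the transpose of the coefficient of $\Delta$ in this trace, equals $X_i^\top J_\sigma(h_i)\gamma_i z_i^\top$; multiplying by $l_i'/n$ and summing gives the lemma.
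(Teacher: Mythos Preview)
Your proposal is correct and follows essentially the same approach as the paper: a coordinate-wise chain-rule computation through the softmax Jacobian $\operatorname{diag}(\sigma(h))-\sigma(h)\sigma(h)^\top$, followed by assembling the partials $\partial h_{it}/\partial W_{jk}=[X_i]_{tj}[z_i]_k$ into the outer-product form $X_i^\top(\cdot)z_i^\top$. Your presentation is arguably cleaner---you make the identification $\gamma_i=y_iX_iv$ explicit upfront and offer a tidy Fr\'echet-derivative alternative---but the underlying argument is the same.
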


\begin{proof}
    We first calculate the derivatives of each term in the sum of \( \mathcal{L}(W) \). The derivative of the \( i \)-th term for the \( W_{j_1 j_2} \) component is
    \begin{align*}
        \frac{\partial}{\partial W_{j_1 j_2}} l(y_i v^\top X_i^\top \sigma(X_i W z_i))
        &= l_i' \gamma_i^\top \frac{\partial}{\partial W_{j_1 j_2}} \sigma(X_i W z_i) \\
        &= l_i' \gamma_i^\top \nabla\sigma(h_i) X_{i,:,j_1}^\top z_{ij_2} \\
        &= l_i' X_{i,:,j_1} \nabla\sigma(h_i)^\top \gamma_i z_{ij_2}.
    \end{align*}
    Therefore, the derivative for the \( j_2 \)-th row of \( W \) is
    \[l_i' X_i^\top \nabla\sigma(h_i)^\top \gamma_i z_{ij_2}.\]
    Next, the full gradient for the \( i \)-th term equals
    \[l_i' X_i^\top \nabla\sigma(h_i)^\top \gamma_i z_i^\top.\]
    To finish the proof, we calculate the derivative of \( \sigma(h_i) \). The derivative of the \( j_1 \)-th component of \( \sigma(h_i) \) with respect to \( h_{ij_2} \) is
    \begin{align*}
        \frac{\partial}{\partial h_{ij_2}} \left( \frac{e^{h_{ij_1}}}{\sum_{l=1}^T e^{h_{il}}} \right) &= \frac{e^{h_{ij_1}} 1_{j_1 = j_2}}{\sum_{l=1}^T e^{h_{il}}} - \frac{e^{h_{ij_1}} e^{h_{ij_2}}}{\left(\sum_{l=1}^T e^{h_{il}}\right)^2} \\
        &= \sigma(h_i)_{j_1} 1_{j_1 = j_2} - \sigma(h_i)_{j_1} \sigma(h_i)_{j_2}.    
    \end{align*}
    Thus, the derivative of \( \sigma(h_i) \) is a matrix in \( \mathbb{R}^{T \times T} \) defined as   
    \[\operatorname{diag}(\sigma(h_i)) - \sigma(h_i) \sigma(h_i)^\top.\]
    Therefore, the full gradient is
    \[
        \frac{1}{n} \sum_{i=1}^n l_i' X_i^\top (\operatorname{diag}(\sigma(h_i)) - \sigma(h_i) \sigma(h_i)^\top) \gamma_i z_i^\top.
    \]
\end{proof}

\begin{lemma}\label{lem:obj:grad-l-bound}
    Under Assumption \ref{assumption-loss}, $\|\nabla \L(W)\|_{p,p}$ is bounded by a dataset-dependent constant \(L\). Furthermore, the entries of $\nabla\L(W)$ is also bounded by a dataset-dependent constant.
\end{lemma}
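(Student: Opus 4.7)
The plan is to bound each factor of the gradient expression from Lemma~\ref{lem:obj:grad-derive} by a dataset-dependent constant, and then combine these bounds. Concretely, write
\[
\nabla \mathcal{L}(W) \;=\; \frac{1}{n}\sum_{i=1}^n l_i'\,\bigl(X_i^\top M_i \gamma_i\bigr)\, z_i^\top,
\qquad M_i := \operatorname{diag}(\sigma(h_i)) - \sigma(h_i)\sigma(h_i)^\top,
\]
so that each summand is an outer product of the vector $u_i := l_i' X_i^\top M_i \gamma_i \in \mathbb{R}^d$ with $z_i \in \mathbb{R}^d$. The strategy is: bound $|l_i'|$, bound $M_i$, bound the data vectors, assemble.

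First, the argument of $l$ is $y_i v^\top X_i^\top \sigma(X_i W z_i)$. Because $\sigma(X_i W z_i)$ lies in the probability simplex regardless of $W$, this argument lies in the closed interval $[\min_{t}\gamma_{it},\max_{t}\gamma_{it}]$, which is uniformly contained in a compact set determined solely by the data and $v$. By Assumption~\ref{assumption-loss}, $l'$ is bounded on this compact set, so $|l_i'|\le B_{l'}$ for a dataset-dependent constant $B_{l'}$. Next, since $\sigma(h_i)$ is a probability vector, every entry of $M_i$ is bounded by $1$ in absolute value; in particular $\|M_i\|_{\mathrm{op}} \le \max_t \sigma(h_i)_t \le 1$. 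Finally, the data $(X_i, z_i)_{i=1}^n$ and the token scores $\gamma_i$ (which depend only on $v$ and $X_i$) have norms bounded by dataset-dependent constants.

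Combining these facts, the vector $u_i$ obeys
\[
\|u_i\|_2 \;\le\; |l_i'|\,\|X_i\|_{\mathrm{op}}\,\|M_i\|_{\mathrm{op}}\,\|\gamma_i\|_2 \;\le\; B_{l'}\,C_X\,C_\gamma \;=:\; C_u,
\]
for dataset-dependent constants. An outer product satisfies $\|u_i z_i^\top\|_{p,p} = \|u_i\|_p\,\|z_i\|_p$, and by Lemma~\ref{lem:norm:2p-bound} the $\ell_p$-norm of any $d$-vector is controlled by its $\ell_2$-norm up to a factor depending only on $d$ and $p$. Hence $\|u_i z_i^\top\|_{p,p} \le C_{d,p}\,C_u\,\|z_i\|_2$, and by the triangle inequality
\[
\|\nabla\mathcal{L}(W)\|_{p,p} \;\le\; \frac{1}{n}\sum_{i=1}^n \|u_i z_i^\top\|_{p,p} \;\le\; L,
\]
where $L$ is a dataset-dependent constant. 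For the entry-wise bound, each entry of the rank-one matrix $u_i z_i^\top$ is $[u_i]_j[z_i]_k$, which is bounded in absolute value by $\|u_i\|_\infty \|z_i\|_\infty \le \|u_i\|_2 \|z_i\|_2 \le C_u\,\|z_i\|_2$; averaging over $i$ and applying the triangle inequality gives the entry-wise bound.

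The only step requiring any care is the uniform boundedness of $l_i'$: this is not automatic from Assumption~\ref{assumption-loss} alone (which merely states that $l'$ is bounded on closed intervals), but follows because the softmax constrains the argument of $l$ to a fixed compact interval independently of $W$. Every other step is a routine norm manipulation.
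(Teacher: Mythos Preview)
Your proof is correct and follows essentially the same approach as the paper: use the gradient expression from Lemma~\ref{lem:obj:grad-derive}, observe that the softmax output lies in the simplex so the argument of $l$ stays in a fixed compact interval (allowing Assumption~\ref{assumption-loss} to bound $l'$), bound the softmax Jacobian and data factors by dataset-dependent constants, and combine. The paper compresses this into a two-line sketch, whereas you spell out the norm manipulations and the subtlety about Assumption~\ref{assumption-loss} explicitly; the underlying argument is the same.
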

\begin{proof}
    Using the expression in  Lemma \ref{lem:obj:grad-derive}, since $l'$ is bounded in a closed interval (Assumption \ref{assumption-loss}), the entries in $\sigma(h_i)$ is always between $0$ and $1$, and $\gamma_i^\top\sigma(h_i)$ is bounded, then the entries of $\nabla\L(W)$ is bounded by a dataset-dependent bounded, which directly implies this lemma statement.
\end{proof}

In the following lemma, we analyze the behaviors of the \eqref{eqn:w-svm} constraint \((X_{it} - X_{i\tau})^\top W z_i\) for all \(W \in S_{p,\mu_0}(W_{\mathrm{mm}}^\alpha)\) satisfying \(\|W\|_{p,p} = \|W_{\mathrm{mm}}^\alpha\|_{p,p}\), the result of which is a generalization of \cite[Equation (64)]{tarzanagh2023transformers} for a general $\ell_p$ norm.

\begin{lemma}\label{lem:obj:sftmax-prob-gap}
    Let $\a=(\a_i)_{i=1}^n$ be locally optimal tokens as per Definition \ref{def:token:loptimal}, and let \( W^\alpha_\mathrm{mm} \) be the \eqref{eqn:w-svm} solution. Let $(\T_i)_{i=1}^n$ be the index set of all  support tokens  per Definition \ref{def:token:loptimal}. Let $\bar{\T}_i=[T]-\T_i-\{\alpha_i\}$. For any $W\in S_{p,\mu_0}(W_\mathrm{mm}^\a)$ with $\mu_0$ defined in \eqref{eqn:def:mu0} and  $\|W\|_{p,p}=\|W_\mathrm{mm}^\a\|_{p,p}$, we have
    \begin{subequations}
    \begin{align}
        (X_{it}-X_{i\tau})^\top Wz_i&\geq\frac{3}{2}\d>0, \label{prob-gap-a}\\
        (X_{i\a_i}-X_{i\tau})^\top Wz_i&\geq1+\frac{3}{2}\d, \label{prob-gap-b}\\
        1+\frac{1}{2}\d\geq(X_{i\a_i}-X_{it})^\top Wz_i&\geq1-\frac{1}{2}\d,   \label{prob-gap-c}
    \end{align}
    \end{subequations}
    for all $t\in\T_i$ and $\tau\in\bar{\T}_i$.
\end{lemma}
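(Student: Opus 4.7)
The plan is to write $W = W^\alpha_{\mathrm{mm}} + \Delta$ and split the argument into two pieces: (i) evaluate each target quantity at $W^\alpha_{\mathrm{mm}}$ using the feasibility/support structure of \eqref{eqn:w-svm}, then (ii) show that the perturbation $\Delta$ contributes at most $\delta/2$ to each bilinear form appearing in the three inequalities. For part (i), KKT/feasibility for the \eqref{eqn:w-svm} solution gives $(X_{i\alpha_i}-X_{it})^\top W^\alpha_{\mathrm{mm}} z_i = 1$ for every support $t\in\mathcal{T}_i$, and by the very definition of $\delta'$ in \eqref{eqn:def:delta}, $(X_{i\alpha_i}-X_{i\tau})^\top W^\alpha_{\mathrm{mm}} z_i \geq 1+2\delta'\geq 1+2\delta$ for every $\tau\in\bar{\mathcal{T}}_i$. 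Subtracting these identities yields the third baseline $(X_{it}-X_{i\tau})^\top W^\alpha_{\mathrm{mm}} z_i \geq 2\delta$ for $t\in\mathcal{T}_i,\tau\in\bar{\mathcal{T}}_i$.

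For part (ii) I will estimate each bilinear error through Hölder's inequality in the $\|\cdot\|_{p,p}/\|\cdot\|_{p^*,p^*}$ duality with $p^*=p/(p-1)$:
\[
|(X_{is}-X_{is'})^\top \Delta z_i| = |\langle (X_{is}-X_{is'})z_i^\top,\Delta\rangle| \leq 2\max_{i,t}\|X_{it}z_i^\top\|_{p^*,p^*}\,\|\Delta\|_{p,p} = \frac{2A}{\|W^\alpha_{\mathrm{mm}}\|_{p,p}}\,\|\Delta\|_{p,p},
\]
where the final equality is the definition of $A$ in \eqref{eqn:def:A}. It therefore suffices to prove the key estimate $\|\Delta\|_{p,p}\leq \|W^\alpha_{\mathrm{mm}}\|_{p,p}\,\delta/(4A)$, which will force each error to be at most $\delta/2$.

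To obtain this key estimate, I exploit the assumption $W\in S_{p,\mu_0}(W^\alpha_{\mathrm{mm}})$ together with $\|W\|_{p,p}=\|W^\alpha_{\mathrm{mm}}\|_{p,p}$, so that the two normalized matrices feeding into Lemma~\ref{lem:norm:breg-bound} are exactly $W^\alpha_{\mathrm{mm}}/\|W^\alpha_{\mathrm{mm}}\|_{p,p}$ and $W/\|W^\alpha_{\mathrm{mm}}\|_{p,p}$. For $p\geq 2$ the first bound of Lemma~\ref{lem:norm:breg-bound} and the choice $\mu_0=\frac{1}{p}\bigl(\tfrac{\delta}{8A}\bigr)^p$ give $\|\Delta\|_{p,p}/\|W^\alpha_{\mathrm{mm}}\|_{p,p}\leq 2(p\mu_0)^{1/p}=\delta/(4A)$. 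For $p\in(1,2)$, the second bound of Lemma~\ref{lem:norm:breg-bound} with $\mu_0=\frac{1}{p}\bigl(\tfrac{\delta(p-1)}{4Ad^{2/p-1}}\bigr)^2$ controls $\|\Delta\|_{2,2}/\|W^\alpha_{\mathrm{mm}}\|_{p,p}\leq \delta/(4Ad^{2/p-1})$, and Lemma~\ref{lem:norm:2p-bound} converts this back to the $\ell_{p,p}$-norm via $\|\cdot\|_{p,p}\leq d^{2/p-1}\|\cdot\|_{2,2}$, giving the same bound $\|\Delta\|_{p,p}\leq\|W^\alpha_{\mathrm{mm}}\|_{p,p}\delta/(4A)$.

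Finally I combine (i) and (ii): \eqref{prob-gap-c} follows from the baseline equality $=1$ sandwiched by $\pm\delta/2$; \eqref{prob-gap-b} from the baseline $\geq 1+2\delta$ minus $\delta/2$; and \eqref{prob-gap-a} from $\geq 2\delta$ minus $\delta/2$, which stays strictly positive because $\delta\leq 1/4$. The main obstacle is the $p\in(1,2)$ branch, where the Bregman divergence only controls the $\ell_2$-perturbation of $\Delta$ and one has to pay the dimensional factor $d^{2/p-1}$ when converting back to $\ell_{p,p}$; this is precisely why the definition of $\mu_0$ in \eqref{eqn:def:mu0} carries the compensating $d^{2/p-1}$ in its denominator, and the delicate point of the proof is checking that the two pieces fit together so that the same clean bound $\delta/2$ emerges in both regimes of $p$.
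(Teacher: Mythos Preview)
Your proposal is correct and follows essentially the same route as the paper's proof: both use Lemma~\ref{lem:norm:breg-bound} (together with Lemma~\ref{lem:norm:2p-bound} in the $p\in(1,2)$ branch) to convert the Bregman-ball condition $W\in S_{p,\mu_0}(W^\alpha_{\mathrm{mm}})$ into the bound $\|W-W^\alpha_{\mathrm{mm}}\|_{p,p}\leq \|W^\alpha_{\mathrm{mm}}\|_{p,p}\,\delta/(4A)$, then apply H\"older in the $(\|\cdot\|_{p,p},\|\cdot\|_{p^*,p^*})$ pairing to obtain a $\delta/2$ perturbation on each bilinear form, and finally combine with the baseline values at $W^\alpha_{\mathrm{mm}}$ coming from the support structure and the definition of $\delta'$. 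The only cosmetic difference is that the paper bounds $|X_{it}^\top(W-W^\alpha_{\mathrm{mm}})z_i|\leq \delta/4$ for a single token and then takes the triangle inequality, whereas you apply the triangle inequality first and carry the factor $2$ through H\"older; both yield the same $\delta/2$ bound.
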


\begin{proof}
    Let 
    \begin{equation*}
        \bar{W} := \frac{W}{\|W\|_{p,p}} \quad \textnormal{and} \quad \bar{W}_\mathrm{mm}^\alpha := \frac{W_\mathrm{mm}^\alpha}{\|W_\mathrm{mm}^\alpha\|_{p,p}}.   
    \end{equation*}
    Using Lemma \ref{lem:norm:breg-bound} and the definition of $S_{p,\mu_0}(W_\mathrm{mm}^\a)$ in \eqref{eqn:def:spmu}, when $p \geq 2$,
    \begin{align*}
        \|\bar{W} - \bar{W}_\mathrm{mm}^\alpha\|_{p,p}^p &\leq  2^p p D_\psi(\bar{W}_\mathrm{mm}^\alpha, \bar{W}) \\
        &\leq 2^p p \mu_0 \\
        &= \left(\frac{\delta}{4A}\right)^p,    
    \end{align*}
    which implies that 
    \[\|\bar{W} - \bar{W}_\mathrm{mm}^\alpha\|_{p,p} \leq \frac{\delta}{4A}.\]
    When $p \in (1, 2)$, we can also use Lemmas \ref{lem:norm:2p-bound} and \ref{lem:norm:breg-bound} to obtain
    \begin{align*}
        \|\bar{W} - \bar{W}_\mathrm{mm}^\alpha\|_{p,p} &\leq d^{\frac{2}{p} - 1} \|\bar{W} - \bar{W}_\mathrm{mm}^\alpha\|_{2,2} \\
        &\leq d^{\frac{2}{p} - 1} \frac{\sqrt{p}}{p - 1} \sqrt{D_\psi(\bar{W}_\mathrm{mm}^\alpha, \bar{W})} \\
        &\leq d^{\frac{2}{p} - 1} \frac{\sqrt{p}}{p - 1} \sqrt{\mu_0} = \frac{\delta}{4A},
    \end{align*}
    where the last inequality uses the definition of $S_{p,\mu_0}(W_\mathrm{mm}^\a)$ in \eqref{eqn:def:spmu}.

    Therefore, either way, we have 
    \[
        \|W - W_\mathrm{mm}^\alpha\|_{p,p} \leq \frac{\delta}{4A} \|W_\mathrm{mm}^\alpha\|_{p,p}.
    \]
    We will proceed to show a bound on $(X_{it_1} - X_{it_2})^\top (W - W_\mathrm{mm}^\alpha) z_i$ for any $i \in [n]$ and any token indices $t_1, t_2 \in [T]$. To do that, let us focus on the term $X_{it_1}^\top (W - W_\mathrm{mm}^\alpha) z_i$ first,
    \begin{align*}
        \left|X_{it_1}^\top (W - W_\mathrm{mm}^\alpha) z_i\right| &= \left|\langle W - W_\mathrm{mm}^\alpha, X_{it_1} z_i^\top \rangle \right| \\
        &\leq \|W - W_\mathrm{mm}^\alpha\|_{p,p}  \cdot \|X_{it_1} z_i^\top\|_{\frac{p}{p-1},\frac{p}{p-1}} \\
        &\leq \frac{\delta}{4A} \|W_\mathrm{mm}^\alpha\|_{p,p} \cdot  \|X_{it_1} z_i^\top\|_{\frac{p}{p-1},\frac{p}{p-1}} \\
        &\leq \frac{\delta}{4A} \cdot A \\
        &= \frac{\delta}{4}.
    \end{align*}
    The first inequality above uses H\"older's Inequality. We now have
    \[
        \left|(X_{it_1} - X_{it_2})^\top (W - W_\mathrm{mm}^\alpha) z_i\right| \leq \frac{1}{2} \delta.
    \]
    
    To obtain the first inequality of the lemma in \eqref{prob-gap-a}, for all \( t \in \T_i \) and \( \tau \in \bar{\T}_i \), we have
    \begin{align*}
        (X_{it} - X_{i\tau})^\top Wz_i &\geq (X_{it} - X_{i\tau})^\top W_\mathrm{mm}^\alpha z_i + (X_{it} - X_{i\tau})^\top (W - W_\mathrm{mm}^\alpha) z_i \\
        &\geq 2\delta' - \frac{1}{2}\delta \geq \frac{3}{2}\delta.
    \end{align*}

    To get the second inequality in \eqref{prob-gap-b}, for all \( \tau \in \bar{\T}_i \), we have
    \begin{align*}
        (X_{i\alpha_i} - X_{i\tau})^\top Wz_i &\geq (X_{i\alpha_i} - X_{i\tau})^\top W_\mathrm{mm}^\alpha z_i + (X_{i\alpha_i} - X_{i\tau})^\top (W - W_\mathrm{mm}^\alpha) z_i \\
        &\geq 1 + 2\delta' - \frac{1}{2}\delta \geq 1 + \frac{3}{2}\delta.
    \end{align*}

    Finally, to get the last inequality in \eqref{prob-gap-c}, for all \( t \in \T_i \), we have
    \begin{align*}
        \left|(X_{i\alpha_i} - X_{it})^\top Wz_i - 1\right| &= \left|(X_{i\alpha_i} - X_{it})^\top W_\mathrm{mm}^\alpha z_i + (X_{i\alpha_i} - X_{it})^\top (W - W_\mathrm{mm}^\alpha) z_i - 1\right| \\
        &= |(X_{i\alpha_i} - X_{it})^\top (W - W_\mathrm{mm}^\alpha) z_i| \leq \frac{1}{2}\delta,
    \end{align*}
    which implies that
    \[
        1 + \frac{1}{2}\delta \geq (X_{i\alpha_i} - X_{it})^\top Wz_i \geq 1 - \frac{1}{2}\delta.
    \]
\end{proof}

The following two lemmas aim at bounding the correlation between the gradient and the attention matrix parameter, each of which is a generalization of \cite[Lemmas 13 and 14]{tarzanagh2023transformers} for the generalized $\ell_p$ norm.

\begin{lemma}\label{lem:obj:grad-corr-bound-1}
Suppose Assumption \ref{assumption-loss} holds. Let $\alpha=(\alpha_i)_{i=1}^n$ be locally optimal tokens as per Definition \ref{def:token:loptimal}, and let \( W^\alpha_\mathrm{mm} \) be the solution to \eqref{eqn:w-svm}. There exists a dataset-dependent constant
$R_\delta = \Theta(1/\delta)$ such that for all \(W, V \in C_{p,\mu_0,R_\delta}(W^\alpha_\mathrm{mm})\) with \(\|V\|_{p,p} = \|W^\alpha_\mathrm{mm}\|_{p,p}\), $\delta$ and  $\mu_0$  defined in \eqref{eqn:def:delta} and \eqref{eqn:def:mu0}, respectively,
\begin{align*}
-\langle \nabla\mathcal{L}(W), V \rangle = \Omega\left( \textnormal{exp} \left(-\frac{\|W\|_{p,p}}{\|W^\alpha_\mathrm{mm} \|_{p,p}}\left(1+\frac{1}{2}\delta\right)\right)\right) > 0.
\end{align*}
\end{lemma}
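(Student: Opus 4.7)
The plan is to start from Lemma~\ref{lem:obj:grad-derive}, introduce $s_i := \sigma(X_i W z_i)$, and use the zero-sum identity $\sum_t s_{it}(\gamma_{it} - \langle s_i, \gamma_i\rangle) = 0$ to shift the reference $X_{it}^\top V z_i$ by $X_{i\alpha_i}^\top V z_i$, producing
\[
-\langle \nabla \mathcal{L}(W), V\rangle = \frac{1}{n}\sum_{i=1}^n l_i'\sum_{t\neq\alpha_i} s_{it}\bigl(\gamma_{it}-\langle s_i,\gamma_i\rangle\bigr)\bigl(X_{i\alpha_i}-X_{it}\bigr)^\top V z_i,
\]
with $l_i'<0$ and $|l_i'|\geq c_{l,\min}>0$ uniformly on the compact range of softmax scores by Assumption~\ref{assumption-loss}. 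Writing $\rho := \|W\|_{p,p}/\|W_{\mathrm{mm}}^\alpha\|_{p,p}$, I then apply Lemma~\ref{lem:obj:sftmax-prob-gap} to both $V$ and the rescaled matrix $W/\rho$, which both have $\ell_{p,p}$-norm equal to $\|W_{\mathrm{mm}}^\alpha\|_{p,p}$ and lie in $S_{p,\mu_0}(W_{\mathrm{mm}}^\alpha)$. Rescaling the bounds on $W/\rho$ by $\rho$ gives $(X_{i\alpha_i}-X_{it})^\top V z_i \in [1-\delta/2,\, 1+\delta/2]$ and $(X_{i\alpha_i}-X_{it})^\top W z_i \in [\rho(1-\delta/2),\, \rho(1+\delta/2)]$ for $t\in\mathcal{T}_i$, while both exceed $1+3\delta/2$ and $\rho(1+3\delta/2)$ respectively for $t\in\bar{\mathcal{T}}_i$. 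In particular $\alpha_i$ carries the largest softmax logit, so $s_{i\alpha_i}\geq 1/T$, and the probability-ratio sandwich $s_{it}/s_{i\alpha_i}\in[e^{-\rho(1+\delta/2)},\, e^{-\rho(1-\delta/2)}]$ holds for $t\in\mathcal{T}_i$ while $s_{it}/s_{i\alpha_i}\leq e^{-\rho(1+3\delta/2)}$ for $t\in\bar{\mathcal{T}}_i$.

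The core of the argument is to show that the $t\in\mathcal{T}_i$ portion of the sum is positive of size $\Theta(e^{-\rho(1+\delta/2)})$, while the $t\in\bar{\mathcal{T}}_i$ portion is of smaller size $O(e^{-\rho(1+3\delta/2)})$. For the former, I decompose $\gamma_{it}-\langle s_i,\gamma_i\rangle=\sum_\tau s_{i\tau}(\gamma_{it}-\gamma_{i\tau})$; the $\tau=\alpha_i$ summand is bounded above by $-s_{i\alpha_i}\gamma_i^{\mathrm{gap}}$ thanks to local optimality (Definition~\ref{def:token:loptimal}), while every $\tau\neq\alpha_i$ summand has absolute value at most $s_{i\alpha_i}\Gamma\cdot e^{-\rho(1-\delta/2)}$ via the probability bounds above and the global score bound $\Gamma$. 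For $\rho$ at least a suitable dataset-dependent constant, the subordinate terms are absorbed, so $\gamma_{it}-\langle s_i,\gamma_i\rangle \leq -s_{i\alpha_i}\gamma_i^{\mathrm{gap}}/2 < 0$. The sign pattern $(l_i')(s_{it})(X_{i\alpha_i}-X_{it})^\top V z_i(\gamma_{it}-\langle s_i,\gamma_i\rangle)=(-)(+)(+)(-)>0$ is then secured, and combining $s_{it}\geq T^{-1}e^{-\rho(1+\delta/2)}$ with the feasibility of \eqref{eqn:w-svm} (which guarantees at least one $i$ has $\mathcal{T}_i\neq\emptyset$) yields a lower bound $C_1 e^{-\rho(1+\delta/2)}$ for a positive dataset-dependent $C_1$. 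For the $\bar{\mathcal{T}}_i$ portion, $|(X_{i\alpha_i}-X_{it})^\top V z_i|$ is bounded by a dataset-dependent constant via H\"older's inequality exactly as in the proof of Lemma~\ref{lem:obj:sftmax-prob-gap}, $|l_i'|$ and $|\gamma_{it}-\langle s_i,\gamma_i\rangle|\leq \Gamma$ are likewise bounded, and $s_{it}\leq e^{-\rho(1+3\delta/2)}$, producing an absolute upper bound $C_2 e^{-\rho(1+3\delta/2)}$.

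Combining the two pieces gives
\[
-\langle \nabla \mathcal{L}(W), V\rangle \;\geq\; e^{-\rho(1+\delta/2)}\bigl(C_1 - C_2\, e^{-\rho\delta}\bigr),
\]
and choosing $R_\delta$ proportional to $\|W_{\mathrm{mm}}^\alpha\|_{p,p}\cdot \delta^{-1}\log(2C_2/C_1)=\Theta(1/\delta)$ ensures $C_2 e^{-\rho\delta}\leq C_1/2$ whenever $\|W\|_{p,p}\geq R_\delta$, yielding the claimed $\Omega(e^{-\rho(1+\delta/2)})$ bound. The main obstacle is the sign control of $\gamma_{it}-\langle s_i,\gamma_i\rangle$ for $t\in\mathcal{T}_i$: local optimality constrains $\gamma_{i\alpha_i}$ only against support-token scores, so the a priori unconstrained scores at $\bar{\mathcal{T}}_i$-tokens could push this quantity to the wrong sign, and only the exponential suppression of $s_{i\tau}$ for $\tau\in\bar{\mathcal{T}}_i$ rescues the argument. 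Once that sign is locked in, the remainder is a routine exponential scale separation afforded by Lemma~\ref{lem:obj:sftmax-prob-gap}.
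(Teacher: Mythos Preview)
Your proof is correct and follows the same overall skeleton as the paper's: apply Lemma~\ref{lem:obj:sftmax-prob-gap} to both $V$ and the rescaled $W/\rho$, split the sum into support-token and non-support-token parts, show the former is positive of order $e^{-\rho(1+\delta/2)}$ while the latter is $O(e^{-\rho(1+3\delta/2)})$, and take $R_\delta=\Theta(1/\delta)$ to make the main term dominate.

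The one genuine difference is in the intermediate decomposition. The paper invokes \cite[Lemma~7]{tarzanagh2023transformers} to approximate $\tilde h_i^\top(\operatorname{diag}(s_i)-s_is_i^\top)\gamma_i$ by $\sum_{t\neq\alpha_i}(\tilde h_{i\alpha_i}-\tilde h_{it})s_{it}(\gamma_{i\alpha_i}-\gamma_{it})$ with an explicit error $2\Gamma A(1-s_{i\alpha_i})^2$, then bounds $1-s_{i\alpha_i}$ and $Q_i(W)$ separately. You instead keep the \emph{exact} zero-sum identity and work with $\gamma_{it}-\langle s_i,\gamma_i\rangle$ directly, controlling its sign by expanding $\langle s_i,\gamma_i\rangle=\sum_\tau s_{i\tau}\gamma_{i\tau}$ and using $s_{i\tau}\leq s_{i\alpha_i}e^{-\rho(1-\delta/2)}$ for $\tau\neq\alpha_i$ to absorb the subordinate terms into the dominant $-s_{i\alpha_i}\gamma_i^{\mathrm{gap}}$ piece. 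Your route is more self-contained (no external lemma needed) and makes the role of local optimality in securing the sign slightly more transparent; the paper's route packages the approximation error into a single $(1-s_{i\alpha_i})^2$ term which is convenient for bookkeeping. Both arrive at the same exponential scale separation and the same $R_\delta=\Theta(1/\delta)$ threshold.
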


\begin{proof}
    Let
    \begin{align*}
        h_i:=X_iWz_i, ~~\Tilde{h}_i:=X_iVz_i,~~l_i':=l'(\gamma_i^\top\sigma(h_i)),~~\textnormal{and}~~s_i=\sigma(h_i).    
    \end{align*}    
    Therefore,
    \begin{align*}
        \langle\nabla\L(W),V\rangle
        &=\frac{1}{n}\sum_{i=1}^nl_i'\langle X_i^\top(\operatorname{diag}(s_i)-s_is_i^\top)\gamma_iz_i^\top,V\rangle\\
        &=\frac{1}{n}\sum_{i=1}^nl_i'\langle (\operatorname{diag}(s_i)-s_is_i^\top)\gamma_i,X_iVz_i\rangle\\
        &=\frac{1}{n}\sum_{i=1}^nl_i'\langle (\operatorname{diag}(s_i)-s_is_i^\top)\gamma_i,\Tilde{h}_i\rangle\\
        &=\frac{1}{n}\sum_{i=1}^nl_i'\Tilde{h}_i^\top(\operatorname{diag}(s_i)-s_is_i^\top)\gamma_i,
    \end{align*}
    \begin{align}\label{lemma-correlation-lower-bound-main}
        -\langle\nabla\L(W),V\rangle=\frac{1}{n}\sum_{i=1}^n(-l_i')\Tilde{h}_i^\top(\operatorname{diag}(s_i)-s_is_i^\top)\gamma_i.
    \end{align}
    
    Just as it was in Lemma \ref{lem:obj:grad-l-bound}, $l_i'$ is bounded for any $i\in[n]$ by some bound that is dataset-dependent. Furthermore, because $l$ is decreasing, $-l'$ is always non-negative, so an easier approach is to lower-bound the following for each $i\in[n]$,
    \[\Tilde{h}_i^\top\operatorname{diag}(s_i)\gamma_i-\Tilde{h}_i^\top s_is_i^\top\gamma_i.\]
    Next, we can get for all $i\in[n]$ and $t\in[T]$  that
    \begin{align*}
        \Tilde{h}_{it}=X_{it}^\top Vz_i&=\langle X_{it}z_i^\top,V\rangle\\&\leq\|V\|_{p,p}\|X_{it}z_i^\top\|_\frac{p}{p-1}\\
        &\leq A,
    \end{align*}
    where $A$ is defined in \eqref{eqn:def:A}.
    
    Therefore, if we drop the $i$ notation and let $\a_i=1$, and use \cite[Lemma 7]{tarzanagh2023transformers},
    \[\left|\Tilde{h}_i^\top\operatorname{diag}(s_i)\gamma_i-\Tilde{h}_i^\top s_is_i^\top\gamma_i-\sum_{t=2}^T(\Tilde{h}_1-\Tilde{h}_t)s_t(\gamma_1-\gamma_t)\right|\leq 2\Gamma A(1-s_1)^2.\]
    Let us attempt to remove the non-support tokens from the sum above by bounding the sum of the term for the non-supports,
    \[\left|\sum_{t\in\bar{\T}}(\Tilde{h}_1-\Tilde{h}_t)s_t(\gamma_1-\gamma_t)\right|\leq2\max_{t\in[T]}\{|\Tilde{h}_t|\}Q(W)\Gamma\leq2AQ(W)\Gamma.\]
    Therefore,
    \[\left|\Tilde{h}_i^\top\operatorname{diag}(s_i)\gamma_i-\Tilde{h}_i^\top s_is_i^\top\gamma_i-\sum_{t\in\T}(\Tilde{h}_1-\Tilde{h}_t)s_t(\gamma_1-\gamma_t)\right|\leq 2\Gamma A((1-s_1)^2+Q(W)),\]
    which implies that     
    \[\Tilde{h}_i^\top\operatorname{diag}(s_i)\gamma_i-\Tilde{h}_i^\top s_is_i^\top\gamma_i\geq\sum_{t\in\T}(\Tilde{h}_1-\Tilde{h}_t)s_t(\gamma_1-\gamma_t)-2\Gamma A((1-s_1)^2+Q(W)).\]
    Using Lemma \ref{lem:obj:sftmax-prob-gap}, we have
    \begin{align}\label{lemma-correlation-lower-bound-sample}
        \Tilde{h}_i^\top\operatorname{diag}(s_i)\gamma_i-\Tilde{h}_i^\top s_is_i^\top\gamma_i\geq\left(1-\frac{1}{2}\d\right)\sum_{t\in\T}s_t(\gamma_1-\gamma_t)-2\Gamma A((1-s_1)^2+Q(W)).
    \end{align}
    
    To proceed, we can upper-bound $1-s_1$ and $Q(W)$. For bounding $1-s_1$, let $\tau>1$ be some index that maximizes $X_\tau^\top Wz$, so
    \begin{align*}
         1-s_1=\frac{\sum_{t=2}^Te^{X_t^\top Wz}}{\sum_{t=1}^Te^{X_t^\top Wz}}&\leq\frac{(T-1)e^{X_\tau^\top Wz}}{(T-1)e^{X_\tau^\top Wz}+e^{X_1^\top Wz}}\\
         &\leq\frac{T}{T+e^{(X_1-X_\tau)^\top Wz}}\\
         &\leq\frac{T}{T+e^{\frac{\|W\|_{p,p}}{\|W_\mathrm{mm}^\a \|_{p,p}}(1-\frac{1}{2}\d)}}\\
         &\leq\frac{T}{e^{\frac{\|W\|_{p,p}}{\|W_\mathrm{mm}^\a \|_{p,p}}(1-\frac{1}{2}\d)}},   
    \end{align*}
    with the second to last inequality using the third inequality Lemma \ref{lem:obj:sftmax-prob-gap}. 

    For ease of notation, denote 
    \begin{align}\label{eqn:Rprime}
    R' := \frac{\|W\|_{p,p}}{\|W_\mathrm{mm}^\alpha \|_{p,p}}.    
    \end{align}
     To upper bound \( Q(W) \), we use a method similar to that for \( 1 - s_1 \), but we utilize the second inequality of Lemma \ref{lem:obj:sftmax-prob-gap} instead of the first. This gives:
        \[Q(W)\leq\frac{T}{T+e^{(1+\frac{3}{2}\d)R'}}\leq\frac{T}{e^{(1+\frac{3}{2}\d)R'}}.\]
        
    Therefore, we have
    \begin{align}\label{eq-prob-bound}
    \nonumber
            2\Gamma A((1-s_1)^2+Q(W))&\leq2\Gamma A \left(\frac{T^2}{e^{(2-\d)R'}}+\frac{T}{e^{(1+\frac{3}{2}\d)R'}}\right)\\
            &\leq\frac{2\Gamma AT(T+1)}{e^{(1+\frac{3}{2}\d)R'}}.
    \end{align}
    
    Now it is time to lower-bound the sum on the right side of Equation (\ref{lemma-correlation-lower-bound-sample}). When the set of supports is empty, that sum is zero. However, if it is not empty,
    \[\sum_{t\in\T}s_t(\gamma_1-\gamma_t)\geq S(W)\gamma^{\textnormal{gap}}.\]
    
    If we let $\tau\in\T$ be the support index that minimizes $X_\tau^\top Wz$, then
    \begin{align*}
        S(W)=\frac{\sum_{t\in\T}e^{X_t^\top Wz}}{\sum_{t=1}^Te^{X_t^\top Wz}}\geq\frac{e^{X_\tau^\top Wz}}{Te^{X_1^\top Wz}}&=\frac{1}{Te^{(X_1-X_\tau)^\top Wz}}\\
        &\geq\frac{1}{Te^{(1+\frac{1}{2}\d)R'}},    
    \end{align*}
    with the last inequality coming from the third inequality of Lemma \ref{lem:obj:sftmax-prob-gap}. 

    Therefore,
    \[\sum_{t\in\T}s_t(\gamma_1-\gamma_t)\geq\frac{\gamma^{\textnormal{gap}}}{Te^{(1+\frac{1}{2}\d)R'}}>0.\]

    Using Equation (\ref{lemma-correlation-lower-bound-sample}), we get that if the support index set is empty,
    \[\Tilde{h}_i^\top\operatorname{diag}(s_i)\gamma_i-\Tilde{h}_i^\top s_is_i^\top\gamma_i\geq-\frac{2\Gamma AT(T+1)}{e^{(1+\frac{3}{2}\d)R'}},\]
    otherwise,
    \[\Tilde{h}_i^\top\operatorname{diag}(s_i)\gamma_i-\Tilde{h}_i^\top s_is_i^\top\gamma_i\geq\frac{\gamma^{\textnormal{gap}}}{Te^{(1+\frac{1}{2}\d)R'}}\left(1-\frac{1}{2}\d\right)-\frac{2\Gamma AT(T+1)}{e^{(1+\frac{3}{2}\d)R'}}.\]

    Plugging everything back into Equation (\ref{lemma-correlation-lower-bound-main}), and considering that some samples will have non-empty support index sets, we have:
    \begin{align}\label{eqn:omega:expression}
    \nonumber
    -\langle \mathcal{L}(W), V \rangle &\geq -\frac{\min_{i\in\T_i}\{\gamma^{\text{gap}}_i\}}{nTe^{(1+\frac{1}{2}\delta)R'}}\left(1-\frac{1}{2}\delta\right)\max_{i=1}^n \{l_i'\} \\
    & \quad + \frac{2\Gamma AT(T+1)}{e^{(1+\frac{3}{2}\delta)R'}} \sum_{i=1}^n l_i' = \Omega\left(e^{-(1+\frac{1}{2}\delta)R'}\right).
    \end{align}
    Let 
    \begin{align}\label{eqn:barL:def}
    \bar{L} := \frac{\sum_{i=1}^n l_i'}{\max_{i=1}^n \{l_i'\}}.
    \end{align}
    Note that using Assumption~\ref{assumption-loss}, $\bar{L}$ is positive and bounded. Hence, using \eqref{eqn:barL:def} and  \eqref{eqn:omega:expression}, the term $-\langle \mathcal{L}(W), V \rangle$ is positive when
    \[
    R' \geq \frac{1}{\delta} \log\left(\frac{2\Gamma \bar{L} A T^2 (T+1)n}{\min_{i\in\T_i}\{\gamma^{\text{gap}}_i\}\left(1 - \frac{1}{2}\delta\right)} \right),
    \]
    or equivalently, from \eqref{eqn:Rprime}, we have
    \[
    \|W\|_{p,p} \geq \frac{\|W^\alpha_{\mathrm{mm}}\|_{p,p}}{\delta} \log\left(\frac{2\Gamma \bar{L} A T^2 (T+1)}{\min_{i\in\T_i}\{\gamma^{\text{gap}}_i\}\left(1 - \frac{1}{2}\delta\right)}\right).
    \]
\end{proof}

\noindent Finally, we introduce the following lemma to help understand the correlation between the gradient of the objective and the parameter.

\begin{lemma}\label{lem:obj:grad-corr-bound-2}
Suppose Assumption \ref{assumption-loss} holds. Let $\alpha=(\alpha_i)_{i=1}^n$ be locally optimal tokens as per Definition \ref{def:token:loptimal}, let \( W^\alpha_\mathrm{mm} \) be the \eqref{eqn:w-svm} solution, and let $R_\d$ be the constant from Lemma \ref{lem:obj:grad-corr-bound-1}. For any choice of $\pi\in(0,1)$, there exists \( R_\pi \) that depends on $\pi$ defined as
\begin{equation*}
R_\pi:=\max\left\{R_\d,\Theta\left(\frac{1}{\pi\delta}\log\frac{\delta}{\pi}\right)\right\},    
\end{equation*}
such that for all $W \in C_{p,\mu_0,R_\pi}(W_\mathrm{mm}^\alpha )$,
\begin{equation*}
\left\langle\nabla\mathcal{L}(W),\frac{W}{\|W\|_{p,p}}\right\rangle\geq(1+\pi)\left\langle\nabla\mathcal{L}(W),\frac{W_\mathrm{mm}^\alpha }{\|W_\mathrm{mm}^\alpha \|_{p,p}}\right\rangle.    
\end{equation*}
\end{lemma}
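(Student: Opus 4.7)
My plan is to reuse the computation in the proof of Lemma~\ref{lem:obj:grad-corr-bound-1} and compare the two correlations term by term. Write $V_1 := W/\|W\|_{p,p}$, $V_2 := W_\mathrm{mm}^\alpha/\|W_\mathrm{mm}^\alpha\|_{p,p}$, and $R' := \|W\|_{p,p}/\|W_\mathrm{mm}^\alpha\|_{p,p}$. Lemma~\ref{lem:obj:grad-derive} and the derivation in the proof of Lemma~\ref{lem:obj:grad-corr-bound-1} yield
\[
-\langle \nabla \mathcal{L}(W), V\rangle \;=\; \tfrac{1}{n}\sum_{i=1}^n(-l_i')\,(X_iVz_i)^\top\bigl(\operatorname{diag}(s_i)-s_is_i^\top\bigr)\gamma_i
\]
for any matrix $V$, and Lemma~\ref{lem:obj:grad-corr-bound-1} shows this quantity is strictly positive for both $V\in\{V_1,V_2\}$ once $\|W\|_{p,p}\ge R_\delta$. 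Since both sides of the stated inequality are then negative, the claim is equivalent to $-\langle \nabla \mathcal{L}(W), V_1\rangle \le (1+\pi)\bigl(-\langle \nabla \mathcal{L}(W), V_2\rangle\bigr)$. I would then expand each inner term as $(X_iVz_i)^\top(\operatorname{diag}(s_i)-s_is_i^\top)\gamma_i = \sum_{t\ne\alpha_i}s_{it}\,(X_{i\alpha_i}-X_{it})^\top Vz_i\,(\gamma_{i\alpha_i}-\gamma_{it})+\varepsilon_i^V$ with $|\varepsilon_i^V|=O(\Gamma A(1-s_{i\alpha_i})^2)$, as already done in Lemma~\ref{lem:obj:grad-corr-bound-1}, and split the $t$-sum into support tokens $\mathcal{T}_i$ and non-support tokens $\bar{\mathcal{T}}_i$.

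For the support terms, Lemma~\ref{lem:obj:sftmax-prob-gap} applied to $\tilde W := \|W_\mathrm{mm}^\alpha\|_{p,p}V_1$ bounds $(X_{i\alpha_i}-X_{it})^\top V_1z_i\in[1-\delta/2,1+\delta/2]/\|W_\mathrm{mm}^\alpha\|_{p,p}$, while $(X_{i\alpha_i}-X_{it})^\top V_2z_i=1/\|W_\mathrm{mm}^\alpha\|_{p,p}$ exactly, and for the non-support terms the gap $(X_{i\alpha_i}-X_{i\tau})^\top\tilde Wz_i\ge 1+3\delta/2$ gives $s_{i\tau}\le Te^{-(1+3\delta/2)R'}$ versus a dominant support softmax weight of order $e^{-(1+\delta/2)R'}$. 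Thus both the non-support contribution and the quadratic remainder $\varepsilon_i^V$ are exponentially suppressed by $e^{-\delta R'}$ relative to the leading support term. The loose form of the comparison then yields a ratio bound of $(1+\delta/2)+Ce^{-\delta R'}R'$ for a dataset-dependent $C$, which settles the regime $\pi\ge\delta/2$; the regime $\pi<\delta/2$ is what sets the scale of $R_\pi$.

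To close the gap for small $\pi$, I would use the $\ell_{p,p}$-minimum-norm property of $W_\mathrm{mm}^\alpha$: because $\tilde W$ has norm $\|W_\mathrm{mm}^\alpha\|_{p,p}$ yet is typically not equal to $W_\mathrm{mm}^\alpha$, SVM optimality forces $\min_{(i,t)}(X_{i\alpha_i}-X_{it})^\top\tilde Wz_i \le 1$, so the $V_1$-margin of the dominant softmax support token $t_i^\star:=\argmin_{t\in\mathcal{T}_i}(X_{i\alpha_i}-X_{it})^\top Wz_i$ cannot exceed the $V_2$-margin on at least one sample. Since any other support token $t$ whose $W$-margin exceeds that of $t_i^\star$ by $\eta$ contributes a softmax weight smaller by $e^{-\eta}$, one obtains a sharper bound in which the $(1+\delta/2)$ slack is replaced by a sample-wise error that decays like $R'e^{-\delta R'}$; balancing against the desired slack $\pi$ yields $R' = \Theta\bigl(\tfrac{1}{\pi\delta}\log\tfrac{\delta}{\pi}\bigr)$, matching the stated $R_\pi$. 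The main obstacle I anticipate is precisely this last step: the SVM-optimality inequality is a global statement (a single minimum across all $(i,t)$, or a $\lambda$-weighted KKT sum), whereas the softmax correlation requires per-sample control; carefully tracking which samples saturate the SVM bound and estimating the weight ratios $s_{it}/s_{it_i^\star}$ to absorb the $\delta/2$ slack into the exponential tail is the delicate step that ultimately pins down the $1/\pi$ pre-factor in $R_\pi$.
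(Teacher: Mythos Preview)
Your setup and decomposition match the paper's: both rewrite the target inequality as
\[
\sum_{i}(-l_i')\Bigl[(1+\pi)\,\tilde h_i^\top(\operatorname{diag}(s_i)-s_is_i^\top)\gamma_i-\bar h_i^\top(\operatorname{diag}(s_i)-s_is_i^\top)\gamma_i\Bigr]\ge 0,
\]
expand via \cite[Lemma~7]{tarzanagh2023transformers}, and control the non-support and quadratic remainders by $e^{-(1+3\delta/2)R'}$. You also correctly isolate the decisive ingredient: SVM optimality of $W_\mathrm{mm}^\alpha$ forces $\min_{(i,t)}(X_{i\alpha_i}-X_{it})^\top\tilde Wz_i\le 1$ whenever $\|\tilde W\|_{p,p}=\|W_\mathrm{mm}^\alpha\|_{p,p}$.

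Where your plan diverges from the paper---and where it remains incomplete---is the execution of the small-$\pi$ regime. You propose to fix a per-sample ``dominant'' support token $t_i^\star$ and then absorb the $\delta/2$ slack through softmax weight ratios $s_{it}/s_{it_i^\star}$. As you yourself flag, SVM optimality is a single global minimum, not a per-sample bound, so this route forces you to compare a positive contribution from one $(i,t)$ pair against negative contributions from all others; the bookkeeping you sketch (``sample-wise error that decays like $R'e^{-\delta R'}$'') does not obviously produce the $1/\pi$ prefactor, and the argument is not closed.

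The paper avoids this difficulty entirely by a different split. After reducing to
\[
\sum_{(i,t):\,t\in\mathcal T_i}(-l_i')\bigl(1+\pi-(\bar h_{i\alpha_i}-\bar h_{it})\bigr)s_{it}(\gamma_{i\alpha_i}-\gamma_{it})\;+\;\text{(error of order }e^{-(1+3\delta/2)R'}\text{)}\;\ge 0,
\]
it partitions the support pairs not by $\mathcal T_i$ versus $\bar{\mathcal T}_i$ but by the \emph{value} of the $\bar W$-margin $m_{it}:=\bar h_{i\alpha_i}-\bar h_{it}$ relative to the thresholds $1$ and $1+\pi$:
\[
\mathcal I_1=\{m_{it}\le 1\},\qquad \mathcal I_2=\{1<m_{it}\le 1+\pi\},\qquad \mathcal I_3=\{m_{it}>1+\pi\}.
\]
SVM optimality makes $\mathcal I_1$ nonempty, and on $\mathcal I_1$ the coefficient $(1+\pi-m_{it})\ge\pi$ while $s_{it}\ge T^{-1}e^{-R'}$; the set $\mathcal I_2$ contributes nonnegatively by construction; and on $\mathcal I_3$ one has $s_{it}\le e^{-(1+\pi)R'}$, so its total negative contribution is down by a factor $e^{-\pi R'}$ relative to $\mathcal I_1$. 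Balancing $\pi\,e^{-R'}$ against $\delta\,e^{-(1+\pi)R'}$ (and against the $e^{-(1+3\delta/2)R'}$ error) gives directly $R'\ge \tfrac{1}{\min\{\pi,3\delta/2\}}\log(C\delta/\pi)$, which is the stated $R_\pi$. The point is that thresholding at $1+\pi$ rather than at $1+\delta/2$ is what converts the exponential gap into exactly the $\pi$ you need, with no per-sample dominant-token analysis required.
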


\begin{proof}
    Let 
    \begin{equation}\label{eqn:nota:hs}
    \begin{split}
         h_i &:= X_iWz_i, \quad \Tilde{h}_i := X_iW_\mathrm{mm}^\alpha z_i, \quad l_i' := l'(\gamma_i^\top\sigma(h_i)),\\
        s_i &:= \sigma(h_i), \quad \bar{W} := \frac{\|W_\mathrm{mm}^\alpha \|_{p,p}W}{\|W\|_{p,p}}, \quad \text{and} \quad \bar{h}_i := X_i\bar{W}z_i.       
    \end{split}
    \end{equation}
    By decomposing $\L(W)$ into its sum and using Lemma~\ref{lem:obj:grad-derive}, the main inequality is equivalent to the following,
    \begin{align*}
        \sum_{i=1}^n(-l_i')\langle X_i^\top(\operatorname{diag}(s_i)&-s_i s_i^\top)\gamma_i z_i^\top,\bar{W}\rangle\\ & \leq (1+\pi)\sum_{i=1}^n(-l_i')\langle X_i^\top(\operatorname{diag}(s_i)-s_i s_i^\top)\gamma_i z_i^\top, W_\mathrm{mm}^\alpha \rangle, 
    \end{align*}    
    which implies that 
    \begin{align*}
        \sum_{i=1}^n(-l_i')\langle (\operatorname{diag}(s_i)&-s_i s_i^\top)\gamma_i, X_i \bar{W} z_i\rangle \\& \leq (1+\pi)\sum_{i=1}^n(-l_i')\langle (\operatorname{diag}(s_i)-s_i s_i^\top)\gamma_i, X_i W_\mathrm{mm}^\alpha z_i\rangle.
    \end{align*}
    Using \eqref{eqn:nota:hs}, we get
    \begin{align*}
        \sum_{i=1}^n(-l_i')\langle (\operatorname{diag}(s_i)-s_i s_i^\top)\gamma_i, \bar{h}_i\rangle & \leq (1+\pi)\sum_{i=1}^n(-l_i')\langle (\operatorname{diag}(s_i)-s_i s_i^\top)\gamma_i, \Tilde{h}_i\rangle, 
    \end{align*}
    which gives
    \begin{align*}
        \sum_{i=1}^n(-l_i')\bar{h}_i^\top(\operatorname{diag}(s_i)-s_i s_i^\top)\gamma_i & \leq (1+\pi)\sum_{i=1}^n(-l_i')\Tilde{h}_i^\top(\operatorname{diag}(s_i)-s_i s_i^\top)\gamma_i.
    \end{align*}
    Hence, 
    \begin{align*}
        \sum_{i=1}^n(-l_i') & \left[(1+\pi)\left(\Tilde{h}_i^\top \operatorname{diag}(s_i) \gamma_i - \Tilde{h}_i^\top s_i s_i^\top \gamma_i\right) - \left(\bar{h}_i^\top \operatorname{diag}(s_i) \gamma_i - \bar{h}_i^\top s_i s_i^\top \gamma_i\right)\right] \geq 0.
    \end{align*}
    
    Using a similar technique as the one we used to prove Lemma \ref{lem:obj:grad-corr-bound-1},
    \begin{align*}
        \Big|\Tilde{h}_i^\top\operatorname{diag}(s_i)\gamma_i-\Tilde{h}_i^\top s_is_i^\top\gamma_i&-\sum_{t\in\T_i}(\Tilde{h}_{i\a_i}-\Tilde{h}_{it})s_{it}(\gamma_{i\a_i}-\gamma_{it})\Big|\\
        &\leq 2\Gamma A((1-s_{i\a_i})^2+Q_i(W)).
    \end{align*}
    Similarly,
    \begin{align*}
      \Big|\bar{h}_i^\top\operatorname{diag}(s_i)\gamma_i-\bar{h}_i^\top s_is_i^\top\gamma_i&-\sum_{t\in\T_i}(\bar{h}_{i\a_i}-\bar{h}_{it})s_{it}(\gamma_{i\a_i}-\gamma_{it})\Big|\\
      &\leq2\Gamma A((1-s_{i\a_i})^2+Q_i(W)).  
    \end{align*}
    Therefore, it is enough to prove that
    \begin{align}
    \begin{split}
        \sum_{i=1}^n(-l_i')&\left((1+\pi)\left(\sum_{t\in\T_i}(\Tilde{h}_{i\a_i}-\Tilde{h}_{it})s_{it}(\gamma_{i\a_i}-\gamma_{it})-2\Gamma A((1-s_{i\a_i})^2+Q_i(W))\right)\right.\\
        &\left.-\left(\sum_{t\in\T_i}(\bar{h}_{i\a_i}-\bar{h}_{it})s_{it}(\gamma_{i\a_i}-\gamma_{it})+2\Gamma A((1-s_{i\a_i})^2+Q_i(W))\right)\right).
    \end{split}
    \end{align}
    Using the fact that $\pi<1$ and using Equation (\ref{eq-prob-bound}), we get another lower-bound
    \begin{align}\label{lemma-14-reformulated}
        \sum_{i=1}^n\sum_{t\in\T_i}(-l_i')(1+\pi-(\bar{h}_{i\a_i}-\bar{h}_{it}))s_{it}(\gamma_{i\a_i}-\gamma_{it})+\frac{6\Gamma AT(T+1)}{e^{(1+\frac{3}{2}\d)R'}}\sum_{i=1}^nl_i',
    \end{align}
    with $R'$ again being $\frac{\|W\|_{p,p}}{\|W_\mathrm{mm}^\a\|_{p,p}}$. Next, we analyze the softmax probability $s_{it}$, and lower and upper-bound them in terms of $R'$ and $\bar{h}_{i\a_i}-\bar{h}_{it}$. For the lower-bound,
    \begin{align*}
        s_{it}=\frac{e^{\bar{h}_{it}R'}}{\sum_{\tau\in[T]}e^{\bar{h}_{i\tau}R'}}&\geq\frac{e^{\bar{h}_{it}R'}}{Te^{\bar{h}_{i\a_i}R'}}\\
        &=\frac{1}{T}e^{-(\bar{h}_{i\a_i}-\bar{h}_{it})R'}.   
    \end{align*}
    For the upper-bound,
    \begin{align*}
        s_{it}=\frac{e^{\bar{h}_{it}R'}}{\sum_{\tau\in[T]}e^{\bar{h}_{i\tau}R'}}&\leq\frac{e^{\bar{h}_{it}R'}}{e^{\bar{h}_{i\a_i}R'}}\\
        &=e^{-(\bar{h}_{i\a_i}-\bar{h}_{it})R'}.     
    \end{align*} 
    
In both bounds, the main inequality derivation stems from the fact that $\bar{h}_{i\a_i} > \bar{h}_{i\tau}$ for all $\tau \in [T]$, which we obtain from Lemma \ref{lem:obj:sftmax-prob-gap}. Now, we analyze the left double-summation in Equation (\ref{lemma-14-reformulated}). To analyze the sum, let $\mathcal{I}$ be the subset of $[n] \times [T]$ that contains all $(i,t)$ such that $t \in \T_i$. Furthermore, let
    \begin{align*}
        \mathcal{I}_1 &:= \left\{ (i,t) \in \mathcal{I} \mid \bar{h}_{i\a_i} - \bar{h}_{it} \leq 1 \right\}, \\
        \mathcal{I}_2 &:= \left\{ (i,t) \in \mathcal{I} \mid 1 < \bar{h}_{i\a_i} - \bar{h}_{it} \leq 1 + \pi \right\}, \\
        \mathcal{I}_3 &:= \left\{ (i,t) \in \mathcal{I} \mid \bar{h}_{i\a_i} - \bar{h}_{it} > 1 + \pi \right\}.
    \end{align*}

Therefore, we can split the sum above into the sum over $\mathcal{I}_1$, $\mathcal{I}_2$, and $\mathcal{I}_3$. The set $\mathcal{I}_1$ in particular must be non-empty because $\|\bar{W}\|_{p,p} = \|W_\mathrm{mm}^\a\|_{p,p}$, meaning that one of the constraints in the \ref{eqn:w-svm} problem must either be fulfilled exactly or violated.

The sum over $\mathcal{I}_1$ must be positive and is at least
    \[-\frac{\pi}{T}\min_{i\in\mathcal{I}_1}\{\gamma_i^{gap}\}e^{-R'}\max_{i=1}^n\{l_i'\}.\]
    The sum over $\mathcal{I}_2$ must be non-negative, and the sum over $\mathcal{I}_3$ is negative can be bounded from below using Lemma \ref{lem:obj:sftmax-prob-gap}
    \[\frac{1}{2}\d\max_{i\in\mathcal{I}_3}\{\bar{\gamma}^{gap}_i\}Te^{-(1+\pi)R'}\sum_{i=1}^nl_i'.\]

Putting things together into Equation (\ref{lemma-14-reformulated}), we get that we want the following to be non-negative
\begin{align*}
-\frac{\pi}{T}\min_{i\in\mathcal{I}_1}\{\gamma_i^{gap}\}e^{-R'}\max_{i=1}^n\{l_i'\}&+\frac{1}{2}\d\max_{i\in\mathcal{I}_3}\{\bar{\gamma}^{gap}_i\}Te^{-(1+\pi)R'}\sum_{i=1}^nl_i'\\
&+6\Gamma AT(T+1)e^{-(1+\frac{3}{2}\d)R'}\sum_{i=1}^nl_i'.    
\end{align*}
This can be achieved when
\[R'\geq\frac{1}{\min\{\pi,\frac{3}{2}\d\}}\log\left(\frac{\frac{1}{2}\d\max_{i\in\mathcal{I}_3}\{\bar{\gamma}^{gap}_i\}T^2+6\Gamma AT^2(T+1)}{\pi\min_{i\in\mathcal{I}_1}\{\gamma_i^{gap}\}\max_{i=1}^n\{l_i'\}}\sum_{i=1}^nl_i'\right),\]
or equivalently,
\[\|W\|_{p,p}\geq\frac{\|W^\a_\mathrm{mm}\|_{p,p}}{\min\{\pi,\frac{3}{2}\d\}}\log\left(\frac{\frac{1}{2}\d\max_{i\in\mathcal{I}_3}\{\bar{\gamma}^{gap}_i\}T^2+6\Gamma AT^2(T+1)}{\pi\min_{i\in\mathcal{I}_1}\{\gamma_i^{gap}\}\max_{i=1}^n\{l_i'\}}\sum_{i=1}^nl_i'\right),\]
which means that such dataset dependent $R_\pi$ exists.
\end{proof}
\subsection{Lemma for Analyzing \ref{alg:p:agd}}

We introduce the lemmas for analyzing \ref{alg:p:agd}. The first we prove is Lemma \ref{lem:alg:md-corr-bound}, which describes the lower bound of the $W$ parameter at every iterate.

\begin{lemma}\label{lem:alg:md-corr-bound}
    Suppose Assumption \ref{assumption-loss} holds. For the sequence $\{W(k)\}_{k \geq 0}$ generated by \ref{alg:p:agd}, we have
    \[
        \|W(k+1)\|_{p,p}^{p-1} \geq \|W(k)\|_{p,p}^{p-1} + \frac{\eta}{\|W(k)\|_{p,p}} \langle -\nabla \mathcal{L}(W(k)), W(k) \rangle.
    \]
\end{lemma}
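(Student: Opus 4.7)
The plan is to recognize that \ref{alg:p:agd} is exactly the mirror-descent update associated with the potential $\psi(W)=\frac{1}{p}\|W\|_{p,p}^{p}$, i.e., the coordinatewise update in \ref{alg:p:agd} rewrites as
\begin{equation*}
\nabla\psi(W(k+1)) \;=\; \nabla\psi(W(k)) - \eta\,\nabla\mathcal{L}(W(k)),
\end{equation*}
since $[\nabla\psi(W)]_{ij} = |W_{ij}|^{p-1}\operatorname{sign}(W_{ij})$ and the algorithm first forms this quantity in $[W(k)]^{+}$ and then inverts the mirror map via $x \mapsto |x|^{1/(p-1)}\operatorname{sign}(x)$. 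So my first step is to record this equivalent form explicitly.

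Next I would collect two elementary identities for the $\ell_p$ geometry. First, by the definition of the dual norm exponent $q=p/(p-1)$,
\begin{equation*}
\|\nabla\psi(W)\|_{q,q}^{q} \;=\; \sum_{i,j} |W_{ij}|^{(p-1)q} \;=\; \sum_{i,j}|W_{ij}|^{p} \;=\; \|W\|_{p,p}^{p},
\end{equation*}
so $\|\nabla\psi(W)\|_{q,q} = \|W\|_{p,p}^{p-1}$. Second, $\langle \nabla\psi(W), W\rangle = \sum_{i,j}|W_{ij}|^{p-1}\operatorname{sign}(W_{ij})\cdot W_{ij} = \|W\|_{p,p}^{p}$.

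The core step is a single application of Hölder's inequality to the dual pair $(p,q)$:
\begin{equation*}
\langle \nabla\psi(W(k+1)),\, W(k)\rangle \;\leq\; \|\nabla\psi(W(k+1))\|_{q,q}\,\|W(k)\|_{p,p} \;=\; \|W(k+1)\|_{p,p}^{p-1}\,\|W(k)\|_{p,p}.
\end{equation*}
Substituting the mirror-descent form of the update on the left-hand side gives
\begin{equation*}
\|W(k)\|_{p,p}^{p} + \eta\,\langle -\nabla\mathcal{L}(W(k)),\, W(k)\rangle \;\leq\; \|W(k+1)\|_{p,p}^{p-1}\,\|W(k)\|_{p,p},
\end{equation*}
and dividing through by $\|W(k)\|_{p,p}$ (assuming $W(k)\neq 0$, which will be ensured along the trajectories of interest since we work inside $C_{p,\mu,R}(W^{\alpha}_{\mathrm{mm}})$) yields the claimed inequality.

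There is no real obstacle here: the whole argument is a one-line Hölder bound once the mirror-map identities are in place. The only care needed is algebraic bookkeeping with the dual exponent $q=p/(p-1)$ and verifying the two $\psi$-identities above; both are direct from the coordinate formula for $\nabla\psi$.
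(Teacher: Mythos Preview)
Your proof is correct and follows essentially the same approach as the paper: recognize the mirror-descent form $\nabla\psi(W(k+1))=\nabla\psi(W(k))-\eta\nabla\mathcal{L}(W(k))$, take the inner product with $W(k)$, and bound $\langle\nabla\psi(W(k+1)),W(k)\rangle$ via H\"older. The only cosmetic difference is that you package the H\"older step through the dual-norm identity $\|\nabla\psi(W)\|_{q,q}=\|W\|_{p,p}^{p-1}$, whereas the paper writes out the sums explicitly before applying H\"older; the content is identical.
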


\begin{proof}
    With \(\psi(W) = \frac{1}{p} \|W\|_{p,p}\), the derivative \(\nabla\psi(\cdot)\) is computed as follows:
    \[
        \nabla\psi(W) = (\operatorname{sign}(W_{ij}) |W_{ij}|^{p-1})_{1 \leq i,j \leq d}.
    \]
    Thus, we have
    \[
        \langle \nabla\psi(W), W \rangle = \sum_{i,j} \operatorname{sign}(W_{ij}) |W_{ij}|^{p-1} W_{ij} = \|W\|_{p,p}^p.
    \]
    Using this fact, we take the inner product of both sides of \eqref{eq-bregman-md} with \(W(k)\):
    \[
        \langle \nabla\psi(W(k+1)), W(k) \rangle = \langle \nabla\psi(W(k)), W(k) \rangle + \eta \langle -\nabla\mathcal{L}(W(k)), W(k) \rangle,
    \]
    \begin{align}\label{eq-inner-prod-md}
        \langle \nabla\psi(W(k+1)), W(k) \rangle = \|W(k)\|_{p,p}^p + \eta \langle -\nabla\mathcal{L}(W(k)), W(k) \rangle.
    \end{align}
    The left side of the above equation is upper-bounded by
    \[
        \sum_{i,j} \operatorname{sign}(W_{ij}(k+1)) |W_{ij}(k+1)|^{p-1} W_{ij}(k) \leq \sum_{i,j} |W_{ij}(k+1)|^{p-1} |W_{ij}(k)|.
    \]
    Using Hölder's inequality:
    \begin{align*}
        \sum_{i,j} |W_{ij}(k+1)|^{p-1} |W_{ij}(k)|
        &\leq \Big(\sum_{i,j} (|W_{ij}(k+1)|^{p-1})^{\frac{p}{p-1}}\Big)^{\frac{p-1}{p}} \Big(\sum_{i,j} |W_{ij}(k)|^p\Big)^{\frac{1}{p}} \\
        &= \|W(k+1)\|_{p,p}^{p-1} \|W(k)\|_{p,p}.
    \end{align*}
    Combining this result with \eqref{eq-inner-prod-md}, we get:
    \[
        \|W(k+1)\|_{p,p}^{p-1} \geq \|W(k)\|_{p,p}^{p-1} + \frac{\eta}{\|W(k)\|_{p,p}} \langle -\nabla\mathcal{L}(W(k)), W(k) \rangle.
    \]
\end{proof}

Next, we show several tools for analyzing the algorithm further and for analyzing the Bregman divergence. The following four specifically are from \citet{sun2022mirror}, \citet{azizan2018stochastic}, and \citet{bregman1967relaxation}, and so the proofs are omitted.


\begin{lemma}\label{thm:key-iden}
For any $W \in \mathbb{R}^{d \times d}$, the following identities hold for \ref{eqn:md}:
\begin{align}
\nonumber 
D_\psi(W,W(k)) &= D_\psi(W,W(k+1)) + D_{\psi-\eta\mathcal{L}}(W(k+1),W(k)) \\
&\quad - \eta\langle\nabla\mathcal{L}(W(k)),W-W(k)\rangle - \eta\mathcal{L}(W(k)) + \eta\mathcal{L}(W(k+1)).\label{eq-bregman-md}
\end{align}
\end{lemma}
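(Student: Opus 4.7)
The identity to prove is a standard three-point identity for mirror descent, so the plan is purely algebraic: expand every Bregman divergence by its definition in \eqref{eqn:def:df} and then eliminate $\nabla\psi(W(k+1))$ using the equivalent form of the \ref{eqn:md} update recalled right after \eqref{eqn:md}, namely $\nabla\psi(W(k+1)) = \nabla\psi(W(k)) - \eta\nabla\mathcal{L}(W(k))$. Nothing about the particular potential $\psi = \tfrac{1}{p}\|\cdot\|_{p,p}^p$ enters here; the claim holds for any differentiable $\psi$ such that the \ref{eqn:md} step is well-defined.

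The plan is to work from the right-hand side back to the left. First I would write out
\begin{align*}
D_\psi(W,W(k+1)) &= \psi(W) - \psi(W(k+1)) - \langle\nabla\psi(W(k+1)), W - W(k+1)\rangle, \\
D_{\psi-\eta\mathcal{L}}(W(k+1),W(k)) &= \psi(W(k+1)) - \psi(W(k)) - \eta\mathcal{L}(W(k+1)) + \eta\mathcal{L}(W(k)) \\
&\quad - \langle \nabla\psi(W(k)) - \eta\nabla\mathcal{L}(W(k)),\, W(k+1) - W(k)\rangle.
\end{align*}
Then I would substitute $\nabla\psi(W(k+1)) = \nabla\psi(W(k)) - \eta\nabla\mathcal{L}(W(k))$ in the first line so that both gradient terms refer to the same base point $W(k)$.

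Next I would add the two expansions. The $\pm\psi(W(k+1))$ terms cancel, the $\langle\nabla\psi(W(k)),\cdot\rangle$ terms combine via the telescoping identity $(W - W(k+1)) + (W(k+1) - W(k)) = W - W(k)$, and similarly for the $\eta\langle\nabla\mathcal{L}(W(k)),\cdot\rangle$ terms. What remains is
\begin{align*}
\psi(W) - \psi(W(k)) - \langle\nabla\psi(W(k)), W - W(k)\rangle
+ \eta\langle\nabla\mathcal{L}(W(k)), W - W(k)\rangle \\
+ \eta\mathcal{L}(W(k)) - \eta\mathcal{L}(W(k+1)),
\end{align*}
and recognizing the first line as $D_\psi(W,W(k))$ yields the claimed identity after moving the extra terms to the other side. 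There is no real obstacle; the only thing to be careful about is the bookkeeping of signs when moving the $-\eta\langle\nabla\mathcal{L}(W(k)), W-W(k)\rangle$ and the $\eta(\mathcal{L}(W(k+1)) - \mathcal{L}(W(k)))$ contributions to match the statement. I would conclude with one sentence noting that the identity is purely formal and does not require convexity of $\psi-\eta\mathcal{L}$, although that convexity (controlled by a small enough $\eta$, as flagged in the remark after Theorem~\ref{thm-norm}) is what will later make $D_{\psi-\eta\mathcal{L}}(W(k+1),W(k))\geq 0$ and thus convert this identity into a useful descent inequality.
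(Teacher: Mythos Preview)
Your proposal is correct; the algebraic expansion of the two Bregman divergences together with the substitution $\nabla\psi(W(k+1))=\nabla\psi(W(k))-\eta\nabla\mathcal{L}(W(k))$ cancels exactly as you describe and yields \eqref{eq-bregman-md}. The paper does not supply its own proof for this lemma---it cites \cite{sun2022mirror} and \cite{azizan2018stochastic} and omits the argument---so there is nothing to compare against, but your derivation is precisely the standard three-point computation those references carry out.
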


\begin{lemma}\label{lem:alg:corr-bound-1}
Suppose Assumptions \ref{assumption-loss} hold and \(\eta\) is small enough. For the sequence $\{W(k)\}_{k \geq 0}$ generated by \ref{alg:p:agd}, we have
    \begin{align}\label{eqn:lemma-10}
        \nonumber 
         \frac{p-1}{p}\|W(k+1)\|_{p,p}^p-\frac{p-1}{p}\|W(k)\|_{p,p}^p&+\eta\L(W(k+1))-\eta\L(W(k))\\
         &\leq\langle-\eta\nabla\L(W(k)),W(k)\rangle.   
    \end{align}    
\end{lemma}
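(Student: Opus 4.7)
The plan is to specialize the key identity in Lemma~\ref{thm:key-iden} at the point $W = 0$, and then bound the resulting Bregman remainder. For the potential $\psi(W) = \frac{1}{p}\|W\|_{p,p}^p$, its gradient is $[\nabla\psi(W)]_{ij} = |W_{ij}|^{p-1}\operatorname{sign}(W_{ij})$, so $\langle\nabla\psi(W), W\rangle = \|W\|_{p,p}^p$. A direct computation of the Bregman divergence from $0$ then gives
\[
D_\psi(0, W) = \psi(0) - \psi(W) - \langle\nabla\psi(W), 0 - W\rangle = -\tfrac{1}{p}\|W\|_{p,p}^p + \|W\|_{p,p}^p = \tfrac{p-1}{p}\|W\|_{p,p}^p.
\]
This computation is the first ingredient; it gives closed-form expressions for $D_\psi(0, W(k))$ and $D_\psi(0, W(k+1))$ in the units in which the lemma's left-hand side is expressed.

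Next, I would substitute $W = 0$ into \eqref{eq-bregman-md}. Using the identity above at both $W(k)$ and $W(k+1)$, and observing that $\langle\nabla\L(W(k)), 0 - W(k)\rangle = -\langle\nabla\L(W(k)), W(k)\rangle$, the key identity rearranges into the exact equality
\begin{align*}
\tfrac{p-1}{p}\|W(k+1)\|_{p,p}^p - \tfrac{p-1}{p}\|W(k)\|_{p,p}^p &+ \eta\L(W(k+1)) - \eta\L(W(k)) \\
&= -\eta\langle\nabla\L(W(k)), W(k)\rangle - D_{\psi - \eta\L}(W(k+1), W(k)).
\end{align*}
The desired inequality \eqref{eqn:lemma-10} therefore reduces to showing $D_{\psi - \eta\L}(W(k+1), W(k)) \geq 0$.

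The remaining work is to verify this non-negativity, which is the main obstacle. By the very definition of a Bregman-type remainder, $D_{\psi - \eta\L}(\cdot, \cdot) \geq 0$ holds as soon as $\psi - \eta\L$ is convex on a region containing both iterates. Under Assumption~\ref{assumption-loss} together with the smoothness of softmax, $\nabla\L$ is Lipschitz on bounded regions (Lemma~\ref{lem:obj:grad-l-bound}), while $\psi$ is strictly convex with Hessian bounded away from zero on any bounded set where the iterates live. Choosing $\eta$ sufficiently small so that $\eta$ times the Lipschitz constant of $\nabla\L$ is dominated by the local strong convexity modulus of $\psi$ makes $\psi - \eta\L$ convex along the iterate path; this is precisely the small-stepsize hypothesis flagged in the remark after Theorem~\ref{thm-norm}. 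Once convexity is in hand, $D_{\psi-\eta\L}(W(k+1), W(k))$ is non-negative, and dropping it from the exact identity above yields the claimed bound.

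In summary, the proof is short: combine the Lemma~\ref{thm:key-iden} identity evaluated at $W = 0$ with the explicit formula for $D_\psi(0, W)$; the only non-trivial point is that the stepsize condition secures convexity of $\psi - \eta\L$, making the displayed equality an inequality in the right direction.
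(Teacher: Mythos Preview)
Your argument is correct and is exactly the standard route: specialize the identity of Lemma~\ref{thm:key-iden} at $W=0$, use $D_\psi(0,W)=\tfrac{p-1}{p}\|W\|_{p,p}^p$, and drop the non-negative remainder $D_{\psi-\eta\L}(W(k+1),W(k))$. The paper does not give its own proof here---it simply cites \cite{sun2022mirror,azizan2018stochastic}---and what you wrote is the proof those references contain.

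One small caveat on your justification of the remainder's sign: for $p\neq 2$ the Hessian of $\psi$ is \emph{not} bounded away from zero on bounded sets (it degenerates at coordinates $W_{ij}=0$ when $p>2$, and is not even twice differentiable there when $1<p<2$), so the ``strong convexity dominates Lipschitz gradient'' heuristic does not directly apply. This is harmless for the lemma as stated, because the paper's remark after Theorem~\ref{thm-norm} explicitly \emph{defines} the small-stepsize condition to be that $\psi-\eta\L$ is convex along the iterate path; your reduction to $D_{\psi-\eta\L}\geq 0$ then follows immediately from that hypothesis, and you correctly identify this.
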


\begin{lemma}\label{lem:alg:decreasing-obj}
    Suppose Assumptions \ref{assumption-loss} hold.  Consider the sequence \(W(k)\) generated by Algorithm \ref{alg:p:agd}. Given that the step size $\eta$ is sufficiently small, then the ERM objective $\L(W(k))$ is decreasing in $k$.
\end{lemma}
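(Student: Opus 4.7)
The plan is to derive the decrease directly from the key identity in Lemma~\ref{thm:key-iden} by choosing the free parameter $W$ cleverly, and then to invoke the convexity hypothesis on $\psi-\eta\mathcal{L}$ that the paper has already flagged in its stepsize remark.

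First, I will instantiate Lemma~\ref{thm:key-iden} at $W = W(k)$. Since $D_\psi(W(k),W(k)) = 0$ and the inner product term $\langle \nabla\mathcal{L}(W(k)), W(k)-W(k)\rangle$ vanishes identically, the identity collapses to
\begin{align*}
0 = D_\psi\bigl(W(k),W(k+1)\bigr) + D_{\psi-\eta\mathcal{L}}\bigl(W(k+1),W(k)\bigr) - \eta\mathcal{L}(W(k)) + \eta\mathcal{L}(W(k+1)),
\end{align*}
which I rearrange to
\begin{align*}
\eta\bigl(\mathcal{L}(W(k)) - \mathcal{L}(W(k+1))\bigr) = D_\psi\bigl(W(k),W(k+1)\bigr) + D_{\psi-\eta\mathcal{L}}\bigl(W(k+1),W(k)\bigr).
\end{align*}

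Next, I argue that both Bregman terms on the right are nonnegative. The first is nonnegative because $\psi(\cdot) = \tfrac{1}{p}\|\cdot\|_{p,p}^p$ is strictly convex for $p>1$, so $D_\psi \geq 0$ unconditionally. The second is nonnegative precisely when $\psi - \eta\mathcal{L}$ is convex at $W(k)$ relative to $W(k+1)$, which is exactly the small-stepsize condition articulated in the paper's remark after Theorem~\ref{thm-norm}: for $\eta$ sufficiently small, the Hessian of $\psi$ dominates $\eta$ times the Hessian of $\mathcal{L}$ along the iterate path, so $\psi - \eta\mathcal{L}$ is locally convex on the set containing $\{W(k)\}$. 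Since $\eta > 0$, dividing yields $\mathcal{L}(W(k+1)) \leq \mathcal{L}(W(k))$, which is the claim.

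The main subtlety, and the only place where any care is required, is the qualitative nature of the ``small enough'' assumption on $\eta$. Because $\psi$ fails to be uniformly strongly convex on $\mathbb{R}^{d\times d}$ (its Hessian degenerates near zero when $p>2$ and blows up when $p<2$), one cannot just pick a single global stepsize from a closed-form Lipschitz-smoothness bound; instead, the convexity of $\psi-\eta\mathcal{L}$ needs to hold only on the trajectory, which is precisely how the remark frames it. Beyond invoking that remark, the derivation above needs no additional ingredients and does not require Assumption~\ref{assumption-loss} beyond what was already used to formulate $\nabla\mathcal{L}$ and to make the bounded-gradient estimate of Lemma~\ref{lem:obj:grad-l-bound} available if one wants an explicit stepsize threshold.
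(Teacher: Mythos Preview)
Your proof is correct and is essentially the standard argument that the paper is implicitly invoking: the paper omits the proof of this lemma and attributes it (together with Lemmas~\ref{thm:key-iden} and~\ref{lem:alg:corr-bound-1}) to \citet{sun2022mirror} and \citet{azizan2018stochastic}, where the monotone decrease is derived exactly as you do---by instantiating the key identity at $W=W(k)$ and using the nonnegativity of the two Bregman terms under the small-stepsize convexity condition. Your discussion of why the stepsize condition must be phrased relative to the iterate trajectory (rather than globally) is a nice clarification that matches the paper's own remark after Theorem~\ref{thm-norm}.
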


\begin{lemma}\citep[Bregman Divergences Cosine Law]{bregman1967relaxation}\label{lem:alg:cosine-law}
    For any $w,w',w''$ that are all vectors or matrices with the same dimensionalities, we have
    \[D_\psi(w,w')=D_\psi(w,w'')+D_\psi(w'',w')-\langle\nabla\psi(w')-\nabla\psi(w''),w-w''\rangle.\]
\end{lemma}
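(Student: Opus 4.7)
The plan is to prove this identity by direct expansion using the definition of Bregman divergence given in \eqref{eqn:def:df}, namely $D_\psi(W,V) = \psi(W) - \psi(V) - \langle \nabla\psi(V), W - V\rangle$. Since $\psi$ is differentiable everywhere and the identity involves only algebraic combinations of $\psi$ and its gradient at the three points, no convexity or regularity beyond differentiability is required. The identity is purely algebraic.

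Concretely, first I would write out the three Bregman divergences appearing in the statement:
\begin{align*}
D_\psi(w,w') &= \psi(w) - \psi(w') - \langle \nabla\psi(w'), w - w'\rangle,\\
D_\psi(w,w'') &= \psi(w) - \psi(w'') - \langle \nabla\psi(w''), w - w''\rangle,\\
D_\psi(w'',w') &= \psi(w'') - \psi(w') - \langle \nabla\psi(w'), w'' - w'\rangle.
\end{align*}
Next I would sum the last two expressions and subtract the inner-product correction term $\langle \nabla\psi(w') - \nabla\psi(w''),\, w - w''\rangle$. The $\psi(w'')$ contributions cancel, and the $\nabla\psi(w'')$ inner-product terms also cancel after combining with the correction. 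What remains on the right-hand side is $\psi(w) - \psi(w')$ together with the inner products $-\langle \nabla\psi(w'), w''-w'\rangle - \langle \nabla\psi(w'), w - w''\rangle$, which collapse via linearity into $-\langle \nabla\psi(w'), w - w'\rangle$, giving exactly $D_\psi(w,w')$.

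Since the manipulation is a one-line cancellation, there is no genuine obstacle; the only thing to be careful about is bookkeeping of signs when grouping the three $\nabla\psi(w')$ and $\nabla\psi(w'')$ terms. The result holds for any strictly convex differentiable $\psi$ on $\mathbb{R}^{d\times d}$ (and in fact for any differentiable $\psi$, since strict convexity is not used anywhere in the derivation), which is why the lemma is stated as a generic property of Bregman divergences without restriction.
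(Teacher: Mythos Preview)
Your proposal is correct; the identity follows exactly by the direct expansion and cancellation you describe. The paper itself omits the proof as well-known, so your argument is the standard one and there is nothing further to compare.
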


\noindent The following is adapted from \citet[Equation (12)]{sun2022mirror} for the case of our attention model. Our proof is quite similar, except that we use our version of the gradient correlation lemma.

\begin{lemma}\label{lem:alg:corr-bound-2}
    Suppose Assumptions \ref{assumption-loss} hold.  Consider the sequence \(W(k)\) generated by Algorithm \ref{alg:p:agd}. For any $\pi\in(0,1)$, if $W(k)\in C_{p,\mu_0,R_\pi}(W^\a_\mathrm{mm})$, with $R_\pi$ being the constant from Lemma \ref{lem:obj:grad-corr-bound-2}, then for a small enough step size $\eta$,
    \begin{align}\label{eq-divergence-update}
    \begin{split}
        \langle\nabla\psi(W(k+1))-\nabla\psi(W(k)),\bar{W}_\mathrm{mm}^\a\rangle&\geq\frac{1}{1+\pi}(\|W(k+1)\|_{p,p}^{p-1}-\|W(k)\|_{p,p}^{p-1})\\&+\frac{\eta}{\|W(k)\|_{p,p}}(\L(W(k+1))-\L(W(k))).
    \end{split}
    \end{align}
\end{lemma}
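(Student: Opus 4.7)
The approach is to chain four earlier results: the dual form of the mirror descent update, the gradient--correlation bound of Lemma~\ref{lem:obj:grad-corr-bound-2}, the descent identity of Lemma~\ref{lem:alg:corr-bound-1}, and the elementary norm inequality of Lemma~\ref{lem:norm:xpyp}; a final invocation of Lemma~\ref{lem:alg:decreasing-obj} then removes an unwanted factor on the loss term. Throughout I write $\bar{W}_\mathrm{mm}^\a := W^\a_\mathrm{mm}/\|W^\a_\mathrm{mm}\|_{p,p}$.

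First I would rewrite the LHS. The mirror map update gives $\nabla\psi(W(k+1))-\nabla\psi(W(k)) = -\eta\nabla\L(W(k))$, so the LHS equals $-\eta\langle\nabla\L(W(k)),\bar{W}_\mathrm{mm}^\a\rangle$. Because $W(k) \in C_{p,\mu_0,R_\pi}(W^\a_\mathrm{mm})$, Lemma~\ref{lem:obj:grad-corr-bound-2} applies and gives $\langle\nabla\L(W(k)),W(k)/\|W(k)\|_{p,p}\rangle \geq (1+\pi)\langle\nabla\L(W(k)),\bar{W}_\mathrm{mm}^\a\rangle$. Lemma~\ref{lem:obj:grad-corr-bound-1} (which is available since $R_\pi\geq R_\delta$) makes both inner products strictly negative, so multiplying by $-1$ preserves the ordering and yields
\begin{equation*}
-\eta\langle\nabla\L(W(k)),\bar{W}_\mathrm{mm}^\a\rangle \;\geq\; \frac{\eta}{(1+\pi)\|W(k)\|_{p,p}}\,\langle -\nabla\L(W(k)), W(k)\rangle.
\end{equation*}

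Next I would lower-bound $\eta\langle-\nabla\L(W(k)),W(k)\rangle$ using Lemma~\ref{lem:alg:corr-bound-1}, obtaining $\eta\langle-\nabla\L(W(k)),W(k)\rangle \geq \tfrac{p-1}{p}(\|W(k+1)\|_{p,p}^p-\|W(k)\|_{p,p}^p)+\eta(\L(W(k+1))-\L(W(k)))$. Dividing by $\|W(k)\|_{p,p}$ and applying Lemma~\ref{lem:norm:xpyp} with $y=\|W(k)\|_{p,p}$, $x=\|W(k+1)\|_{p,p}$ (its conclusion $y(x^{p-1}-y^{p-1})\leq\tfrac{p-1}{p}(x^p-y^p)$ is symmetric in whether $x\geq y$ or $y\geq x$, as one sees by swapping the roles of $x$ and $y$) converts the $\ell_p^p$ gap into an $\ell_p^{p-1}$ gap. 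Combining with the previous display gives
\begin{equation*}
-\eta\langle\nabla\L(W(k)),\bar{W}_\mathrm{mm}^\a\rangle \;\geq\; \frac{1}{1+\pi}\Bigl[\|W(k+1)\|_{p,p}^{p-1}-\|W(k)\|_{p,p}^{p-1}+\frac{\eta}{\|W(k)\|_{p,p}}(\L(W(k+1))-\L(W(k)))\Bigr].
\end{equation*}

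The final move is to strip the $1/(1+\pi)$ off the loss term. By Lemma~\ref{lem:alg:decreasing-obj}, taking $\eta$ small enough forces $\L(W(k+1))-\L(W(k))\leq 0$, and multiplying a non-positive number by $1/(1+\pi)<1$ makes it less negative, i.e., larger. Substituting $\tfrac{1}{1+\pi}\cdot\tfrac{\eta}{\|W(k)\|_{p,p}}(\L(W(k+1))-\L(W(k))) \geq \tfrac{\eta}{\|W(k)\|_{p,p}}(\L(W(k+1))-\L(W(k)))$ delivers the stated bound. The main subtleties I anticipate are (i) keeping track of signs when inverting the Lemma~\ref{lem:obj:grad-corr-bound-2} inequality, since both sides of the starting inequality are negative, and (ii) noticing that the asymmetric presence of $1/(1+\pi)$ in the statement is permissible precisely because monotone descent lets me relax the bound on the loss term only; the rest is bookkeeping.
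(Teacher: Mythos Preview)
Your proposal is correct and follows essentially the same chain as the paper's proof: mirror update $\to$ Lemma~\ref{lem:obj:grad-corr-bound-2} $\to$ Lemma~\ref{lem:alg:corr-bound-1} $\to$ Lemma~\ref{lem:norm:xpyp} $\to$ Lemma~\ref{lem:alg:decreasing-obj}. The only notable difference is that the paper first invokes Lemmas~\ref{lem:obj:grad-corr-bound-1} and~\ref{lem:alg:md-corr-bound} to establish $\|W(k+1)\|_{p,p}\geq\|W(k)\|_{p,p}$ before applying Lemma~\ref{lem:norm:xpyp}, whereas your symmetry remark sidesteps this (correctly); conversely, your parenthetical that ``multiplying by $-1$ preserves the ordering'' is misstated---the passage from Lemma~\ref{lem:obj:grad-corr-bound-2} to your first display is just dividing by $1+\pi>0$ and multiplying by $-\eta$, and needs no sign information from Lemma~\ref{lem:obj:grad-corr-bound-1}.
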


\begin{proof}
    Let $\bar{W}^\a_\mathrm{mm}=\frac{W^\a_\mathrm{mm}}{\|W^\a_\mathrm{mm}\|_{p,p}}$. Using the \ref{alg:p:agd} algorithm equation,
    \[\langle\nabla\psi(W(k+1))-\nabla\psi(W(k)),\bar{W}_\mathrm{mm}^\a\rangle=\langle-\eta\nabla\mathcal{L}(W(k)),\bar{W}^\a_\mathrm{mm}\rangle.\]
    Then, using Lemma \ref{lem:obj:grad-corr-bound-2}, we get that
    \[\langle-\eta\nabla\mathcal{L}(W(k)),\bar{W}^\a_\mathrm{mm}\rangle\geq\frac{1}{(1+\pi)\|W(k)\|_{p,p}}\langle-\eta\nabla\mathcal{L}(W(k)),W(k)\rangle,\]
    and using Lemma \ref{lem:alg:corr-bound-1}, we get that this is lower-bounded by
    \[\frac{p-1}{p(1+\pi)\|W(k)\|_{p,p}}(\|W(k+1)\|_{p,p}^p-\|W(k)\|_{p,p}^p)+\frac{\eta}{(1+\pi)\|W(k)\|_{p,p}}(\mathcal{L}(W(k+1))-\mathcal{L}(W(k))).\]
    By Lemma \ref{lem:obj:grad-corr-bound-1}, $\langle-\eta\nabla\mathcal{L}(W(k)),W(k)\rangle>0$, so by Lemma \ref{lem:alg:md-corr-bound}, $\|W(k+1)\|_{p,p}\geq\|W(k)\|_{p,p}$. Therefore, we can use Lemma \ref{lem:norm:xpyp} to get that the above is lower-bounded by
    \[\frac{1}{1+\pi}(\|W(k+1)\|_{p,p}^{p-1}-\|W(k)\|_{p,p}^{p-1})+\frac{\eta}{(1+\pi)\|W(k)\|_{p,p}}(\mathcal{L}(W(k+1))-\mathcal{L}(W(k))).\]
    From Lemma \ref{lem:alg:decreasing-obj}, we get that we can lower-bound the above further using the right hand side of (\ref{eq-divergence-update}).
\end{proof}

With all these lemmas in hand, we provide the following Lemma \ref{lemma-stay-in-cone}. The first part of the proof follows similarly to that of \citet[Theorem 13]{sun2022mirror}, but specifically using our results on the gradient and correlation bounds for the simple attention model. It is then followed by the use of our bounds on the initialization to bound the directional Bregman divergence.

\begin{lemma}\label{lemma-stay-in-cone}
    Suppose Assumptions \ref{assumption-loss} holds and that the step size $\eta$ is sufficiently small.
    For any $\mu\in(0,\mu_0]$ and any locally optimal tokens $(\a_i)_{i=1}^n$ as per Definition \ref{def:token:loptimal}, there exists constants $R_\mu$ and $\mu'\in(0,\mu]$ that depends on the dataset and $\mu$ such that if $C_1$ is the wider cone $C_{p,\mu,R_\mu}(W^\alpha_\mathrm{mm} )$ and $C_2$ is the thinner cone $C_{p,\mu',R_\mu}(W^\alpha_\mathrm{mm} )$, then if $W(0)\in C_2$, then $W(k)\in C_1$ for all positive indices $k$.
\end{lemma}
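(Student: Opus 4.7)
The plan is an induction on $k$: with $R_\mu := \max\{R_\delta, R_\pi\}$ (drawn from Lemmas~\ref{lem:obj:grad-corr-bound-1} and \ref{lem:obj:grad-corr-bound-2}) for a small tuning parameter $\pi \in (0,1)$, and with $\mu' \leq \mu$ to be determined, I would show that both defining conditions of $C_1$---namely the norm bound $\|W(k)\|_{p,p} \geq R_\mu$ and the directional bound $D_\psi(\bar{W}_\mathrm{mm}^\alpha, \bar{W}(k)) \leq \mu$, where $\bar{U} := U/\|U\|_{p,p}$---are preserved from step $k$ to step $k+1$ whenever the initial iterate lies in the thinner cone $C_2$.

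The norm half is nearly automatic. For $W(k) \in C_1 \subseteq C_{p,\mu_0,R_\delta}(W_\mathrm{mm}^\alpha)$, Lemma~\ref{lem:obj:grad-corr-bound-1} yields $\langle -\nabla\mathcal{L}(W(k)), W(k)\rangle > 0$, so Lemma~\ref{lem:alg:md-corr-bound} forces $\|W(k+1)\|_{p,p}^{p-1} \geq \|W(k)\|_{p,p}^{p-1} \geq R_\mu^{p-1}$, preserving the radius condition. For the angular half I would telescope the per-step inequality of Lemma~\ref{lem:alg:corr-bound-2} from $0$ to $k-1$, obtaining
\begin{align*}
\langle \nabla\psi(W(k)) - \nabla\psi(W(0)), \bar{W}_\mathrm{mm}^\alpha\rangle \geq \tfrac{1}{1+\pi}\bigl(\|W(k)\|_{p,p}^{p-1} - \|W(0)\|_{p,p}^{p-1}\bigr) + \eta \sum_{j=0}^{k-1}\tfrac{\mathcal{L}(W(j+1))-\mathcal{L}(W(j))}{\|W(j)\|_{p,p}}.
\end{align*}
The loss-difference sum is uniformly bounded in absolute value by $\eta(\mathcal{L}_{\max}-\mathcal{L}_{\min})/R_\mu$ (Lemmas~\ref{lem:obj:l-bound} and \ref{lem:alg:decreasing-obj}), while H\"older's inequality gives $\langle \nabla\psi(W(k)), \bar{W}_\mathrm{mm}^\alpha\rangle \leq \|W(k)\|_{p,p}^{p-1}$. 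Dividing through by $\|W(k)\|_{p,p}^{p-1}$ then bounds the cosine deficit $1 - \langle \nabla\psi(W(k))/\|W(k)\|_{p,p}^{p-1}, \bar{W}_\mathrm{mm}^\alpha\rangle$ by $\pi/(1+\pi) + O(1/R_\mu)$ plus an initial-condition term controlled by $\mu'$ via Lemma~\ref{lem:norm:breg-bound}. The strict convexity direction of Lemma~\ref{lem:norm:breg-bound} then converts this small cosine deficit into the desired estimate $D_\psi(\bar{W}_\mathrm{mm}^\alpha, \bar{W}(k)) \leq \mu$.

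The principal obstacle is precisely this renormalization step: $D_\psi$ is not scale-invariant for $p \neq 2$, so one cannot directly telescope an already-normalized divergence. Instead one must telescope the scale-aware correlation, then divide by the current $\ell_{p,p}$-norm and appeal to the $\ell_p$-strong-convexity estimates of Lemma~\ref{lem:norm:breg-bound} to translate back into a divergence bound. Lemma~\ref{lem:norm-growth-bound} plays a supporting role by guaranteeing that $\|W(k+1)\|_{p,p}$ does not outpace $\|W(k)\|_{p,p}$ so rapidly as to spoil the per-step renormalization. Finally, $\pi$ is chosen small (hence $R_\pi$ and $R_\mu$ large) and $\mu'$ small in terms of $\mu$, $R_\mu$, and the dataset constants, so that the aggregated cosine deficit remains below the threshold that maps, through Lemma~\ref{lem:norm:breg-bound}, to directional Bregman divergence at most $\mu$.
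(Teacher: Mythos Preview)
Your overall architecture---induction on $k$, norm monotonicity from Lemmas~\ref{lem:obj:grad-corr-bound-1} and \ref{lem:alg:md-corr-bound}, telescoping Lemma~\ref{lem:alg:corr-bound-2}, then dividing by $\|W(k)\|_{p,p}^{p-1}$---is exactly the paper's approach. However, you are missing a key identity that makes the final ``renormalization step'' trivial rather than delicate: because $\psi=\tfrac{1}{p}\|\cdot\|_{p,p}^p$ has a $(p-1)$-homogeneous gradient and $\langle\nabla\psi(U),U\rangle=\|U\|_{p,p}^p$, for any unit-$\ell_{p,p}$ matrices one has
\[
D_\psi(\bar{W}_\mathrm{mm}^\alpha,\bar{W}(k))
\;=\;1-\langle\nabla\psi(\bar{W}(k)),\bar{W}_\mathrm{mm}^\alpha\rangle
\;=\;1-\frac{\langle\nabla\psi(W(k)),\bar{W}_\mathrm{mm}^\alpha\rangle}{\|W(k)\|_{p,p}^{p-1}}.
\]
In other words, the quantity you call the ``cosine deficit'' \emph{is} the directional Bregman divergence---no conversion via Lemma~\ref{lem:norm:breg-bound} is needed. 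The paper simply rearranges the telescoped inequality into
\[
D_\psi(\bar{W}_\mathrm{mm}^\alpha,\bar{W}(K))\;\leq\;\frac{\|W(0)\|_{p,p}^{p-1}}{\|W(K)\|_{p,p}^{p-1}}\,D_\psi(\bar{W}_\mathrm{mm}^\alpha,\bar{W}(0))+\frac{\pi}{1+\pi}+\frac{\eta(\mathcal{L}(W(0))-\mathcal{L}(W(K)))}{R_\mu^{\,p}}\;\leq\;\mu'+\pi+\frac{C\eta}{R_\mu^{\,p}},
\]
and then picks $\mu'=\mu/3$, $\pi<\mu/3$, and $R_\mu$ large enough to handle the last term.

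Your proposed detour through Lemma~\ref{lem:norm:breg-bound} is problematic on its own terms: that lemma gives only \emph{lower} bounds on $D_\psi$, so it cannot be used to convert a small cosine deficit into an \emph{upper} bound on $D_\psi$. Likewise, Lemma~\ref{lem:norm-growth-bound} is not needed here (it enters only in the rate analysis of Lemma~\ref{lem:rate:bregman-asym}). Once you use the identity above, your sketch becomes correct and essentially identical to the paper's proof.
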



\begin{proof}
    Let $\pi$ be some positive real number that we determine later, and let $R_\pi$ be as described in Lemma \ref{lem:obj:grad-corr-bound-2}.

For the proof, we use induction with the assumption that $W(k) \in C_{p,\mu,R_\pi}(W_\mathrm{mm}^\alpha)$ for all $k = 0, \ldots, K-1$. We aim to find the correct $\mu'$ and $R_\mu$ such that $W(K) \in C_{p,\mu,R_\pi}(W_\mathrm{mm}^\alpha) $.
    
    Denote $\bar{W}(k):=\frac{W(k)}{\|W(k)\|_{p,p}}$, so
    \begin{align*}
        D_\psi(\bar{W}_\mathrm{mm}^\a,\bar{W}(k))
        &=\frac{1}{p}\|\bar{W}_\mathrm{mm}^\a\|_{p,p}-\frac{1}{p}\|\bar{W}(k)\|_{p,p}-\langle\nabla\psi(\bar{W}(k)),\bar{W}_\mathrm{mm}^\a-\bar{W}(k)\rangle\\
        &=1-\langle\nabla\psi(\bar{W}(k)),\bar{W}_\mathrm{mm}^\a\rangle.
    \end{align*}
    So now, let us analyze the term $\langle\nabla\psi(\bar{W}(K)),\bar{W}_\mathrm{mm}^\a\rangle$ using the inductive hypothesis on $k=0,1,...,K-1$. Lemma \ref{lem:alg:corr-bound-2} tells us that
    \begin{align}
    \begin{split}
        \langle\nabla\psi(W(k+1))-\nabla\psi(W(k)),\bar{W}_\mathrm{mm}^\a\rangle&\geq\frac{\|W(k+1)\|_{p,p}^{p-1}-\|W(k)\|_{p,p}^{p-1}}{(1+\pi)}\\&+\frac{\eta}{\|W(k)\|_{p,p}}(\L(W(k+1))-\L(W(k))).
    \end{split}
    \end{align}
    
    Since this is true for all $k=0,1,...,K-1$, and since $\|W(k)\|_{p,p}$ is increasing in $k$ from Lemma \ref{lem:alg:md-corr-bound} and \ref{lem:obj:grad-corr-bound-1}, we can sum all the above inequalities and get the following,
    \begin{align*}
    \langle\nabla\psi(W(K))-\nabla\psi(W(0)),\bar{W}_\mathrm{mm}^\a\rangle&\geq\frac{\|W(K)\|_{p,p}^{p-1}-\|W(0)\|_{p,p}^{p-1}}{(1+\pi)}\\
    &+\frac{\eta}{\|W(0)\|_{p,p}}(\L(W(K))-\L(W(0))).    
    \end{align*}
    Rearranging this, we get
    \begin{align*}
        \|W(K)\|_{p,p}^{p-1}-\langle\nabla\psi(W(K)),\bar{W}_\mathrm{mm}^\a\rangle
        &\leq\|W(0)\|_{p,p}^{p-1}-\langle\nabla\psi(W(0)),\bar{W}_\mathrm{mm}^\a\rangle\\&+\frac{\pi}{1+\pi}(\|W(K)\|_{p,p}^{p-1}-\|W(0)\|_{p,p}^{p-1})\\&+\frac{\eta}{\|W(0)\|_{p,p}}(\L(W(0))-\L(W(K))).
    \end{align*}
    Dividing by $\|W(K)\|_{p,p}^{p-1}$, we get
    \begin{align}\label{eq-iterate-conv-ineq}
    \begin{split}
        D_\psi(\bar{W}_\mathrm{mm}^\a,\bar{W}(K))
        &\leq\frac{\|W(0)\|_{p,p}^{p-1}}{\|W(K)\|_{p,p}^{p-1}}D_\psi(\bar{W}_\mathrm{mm}^\a,\bar{W}(0))+\frac{\pi}{1+\pi}\left(1-\frac{\|W(0)\|_{p,p}^{p-1}}{\|W(K)\|_{p,p}^{p-1}}\right)\\&+\frac{\eta}{\|W(K)\|_{p,p}^{p-1}\|W(0)\|_{p,p}}(\L(W(0))-\L(W(K)))\\
        &\leq\mu'+\pi+\frac{\eta(\L(W(0))-\L(W(K)))}{R_\mu^p}.
    \end{split}
    \end{align}
    Therefore, we can simply choose $\mu'=\frac{1}{3}\mu$, $\pi$ be any real number below $\frac{1}{3}\mu$, and have $R_\mu$ big enough so that $\frac{\eta(\L(W(0))-\L(W(K)))}{R_\mu^p}\leq\frac{1}{3}\mu$ and $R_\mu\geq R_\pi$, such $R_\mu$ exists because $\L$ is bounded.
\end{proof}

\subsection{Lemma for Analyzing Rate of Convergence}

We begin with the proof of Lemma \ref{lem:norm-growth-bound}.

\begin{proof}
If \(\|W(k+1)\|_{p,p} \leq \|W(k)\|_{p,p}\), the inequality holds trivially. Thus, we assume \(\|W(k+1)\|_{p,p} > \|W(k)\|_{p,p}\).

First, for each \(i,j \in [d]\), the update rule of Algorithm \ref{alg:p:agd} gives

\[
|W_{i,j}(k+1)| \leq \sqrt[p-1]{|W_{i,j}(k)|^{p-1} + \eta |\nabla \mathcal{L}(W(k))_{i,j}|}.
\]
Thus, we have
\[
\max_{i,j \in [d], \, k' \in \{k, k+1\}} |W_{i,j}(k')| \leq \max_{i,j \in [d]} \left( |W_{i,j}(k)|^{p-1} + \eta |\nabla \mathcal{L}(W(k))_{i,j}| \right)^{1/(p-1)}.
\]
Now, we bound the difference \(\|W(k+1)\|_{p,p} - \|W(k)\|_{p,p}\). First, by Lemma \ref{lem:norm:xpyp-2},
\[
\|W(k+1)\|_{p,p} - \|W(k)\|_{p,p} \leq p \|W(k)\|_{p,p}^{1-p} \left( \|W(k+1)\|_{p,p}^p - \|W(k)\|_{p,p}^p \right).
\]
Since \(\|W(k)\|_{p,p}^p = \sum_{i,j \in [d]} |W_{i,j}(k)|^p\), this becomes
\[
\|W(k+1)\|_{p,p} - \|W(k)\|_{p,p} \leq p \|W(k)\|_{p,p}^{1-p} \sum_{i,j \in [d]} \left( |W_{i,j}(k+1)|^p - |W_{i,j}(k)|^p \right).
\]
Next, we bound the sum using the absolute difference:
\[
\sum_{i,j \in [d]} \left( |W_{i,j}(k+1)|^p - |W_{i,j}(k)|^p \right) \leq \sum_{i,j \in [d]} \left| |W_{i,j}(k+1)|^p - |W_{i,j}(k)|^p \right|.
\]
By Lemma \ref{lem:norm:xpyp}, we further bound this as
\begin{align*}
\left| |W_{i,j}(k+1)|^p - |W_{i,j}(k)|^p \right| & \leq \frac{p}{p-1} \left| |W_{i,j}(k+1)|^{p-1} - |W_{i,j}(k)|^{p-1} \right|\\  \cdot 
 &\max \left\{ |W_{i,j}(k+1)|, |W_{i,j}(k)| \right\}.    
\end{align*}
Multiplying through by \(p \|W(k)\|_{p,p}^{1-p}\), we get
\begin{align*}
 \|W(k+1)\|_{p,p} - \|W(k)\|_{p,p} &\leq \frac{p^2 \|W(k)\|_{p,p}^{1-p}}{p-1} \sum_{i,j \in [d]} \left| |W_{i,j}(k+1)|^{p-1} - |W_{i,j}(k)|^{p-1} \right| \\
 &\cdot \max \left\{ |W_{i,j}(k+1)|, |W_{i,j}(k)| \right\}.   
\end{align*}
Using Algorithm \ref{alg:p:agd}'s update rule, we bound the difference in the sum as
\[
\left| |W_{i,j}(k+1)|^{p-1} - |W_{i,j}(k)|^{p-1} \right| \leq \eta |\nabla \mathcal{L}(W(k))_{i,j}|,
\]
so the expression becomes
\begin{align*}
    \|W(k+1)\|_{p,p} - \|W(k)\|_{p,p} &\leq \frac{p^2 \eta \|W(k)\|_{p,p}^{1-p}}{p-1} \sum_{i,j \in [d]} |\nabla \mathcal{L}(W(k))_{i,j}| \\
    &\cdot \max \left\{ |W_{i,j}(k+1)|, |W_{i,j}(k)| \right\}.
\end{align*}
Since 
\begin{align*}
 \max \{ |W_{i,j}(k+1)|, |W_{i,j}(k)| \} &\leq \max_{i,j \in [d]} \left( |W_{i,j}(k)|^{p-1} + \eta |\nabla \mathcal{L}(W(k))_{i,j}| \right)^{1/(p-1)} \\
 &=  \mathcal{O}(\|W(k)\|_{p,p}), \quad \textnormal{and}\\
 \sum_{i,j} |\nabla \mathcal{L}(W(k))_{i,j}| &= \|\nabla \mathcal{L}(W(k))\|_{1,1},
\end{align*}
we obtain
\[
\|W(k+1)\|_{p,p} - \|W(k)\|_{p,p} \leq \frac{p^2 \eta~ \mathcal{O}(\|W(k)\|_{p,p})} {(p-1) \|W(k)\|_{p,p}^{p-1}} \|\nabla \mathcal{L}(W(k))\|_{1,1}.
\]
Finally, by Lemma \ref{lem:obj:grad-l-bound}, which gives a constant bound on the gradient, we conclude
\[
\|W(k+1)\|_{p,p} - \|W(k)\|_{p,p} \leq \mathcal{O} \left( \|W(k)\|_{p,p}^{2-p} \right).
\]
\end{proof}

\begin{lemma}\label{lem:rate:bregman-asym}
Suppose Assumptions \ref{assumption-loss} holds.
    Let $R_\d$ be from Lemma \ref{lem:obj:grad-corr-bound-1}, let $c$ be from Lemma \ref{lem:alg:corr-bound-2}, let $\mu'$ and $R_\mu$ be from Lemma \ref{lemma-stay-in-cone} when $\mu=\mu_0$, and let $R:=\max\{R_\mu,R_\d,1\}$. For any $W\in\R^{d\times d}$, denote $\bar{W}:=W/\|W\|_{p,p}$. If the initialization $W(0)$ is in $C_{p,\mu',R}(W_\mathrm{mm}^\a)$, then for a sufficiently small step size $\eta$, the sequence $\{W(k)\}_{k \geq 0}$ generated by \ref{alg:p:agd} satisfies
\begin{equation}\label{eqn:dwk:rate}
D_\psi(\bar{W}^\alpha_\mathrm{mm}, \bar{W}(k)) = \left\{
\begin{aligned}
    &\mathcal{O}\left(\frac{\log\|W(k)\|_{p,p}}{\|W(k)\|_{p,p}}\right) & \textnormal{if } p > 2, \\
    &\mathcal{O}\left(\frac{(\log\|W(k)\|_{p,p})^2}{\|W(k)\|_{p,p}}\right) & \textnormal{if } p = 2, \\
    &\mathcal{O}\left(\frac{1}{\|W(k)\|_{p,p}^{p-1}}\right) & \textnormal{otherwise}.
\end{aligned}  
\right.
\end{equation}
\end{lemma}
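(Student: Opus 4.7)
The plan is to iterate the one-step progress inequality of Lemma~\ref{lem:alg:corr-bound-2} with a step-dependent slack $\pi_k$ that shrinks as $\|W(k)\|_{p,p}$ grows, and then approximate the resulting discrete sum by a Riemann--Stieltjes integral. First, invoking Lemma~\ref{lemma-stay-in-cone} with $\mu=\mu_0$ together with the hypothesis $W(0)\in C_{p,\mu',R}(W^\alpha_\mathrm{mm})$ yields $W(k)\in C_{p,\mu_0,R_\mu}(W^\alpha_\mathrm{mm})$ for all $k\geq 0$, so the directional Bregman divergence is bounded by $\mu_0$ uniformly and $\|W(k)\|_{p,p}$ is nondecreasing (via Lemmas~\ref{lem:alg:md-corr-bound} and~\ref{lem:obj:grad-corr-bound-1}). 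I then set $\pi_k:=c_0\log\|W(k)\|_{p,p}/\|W(k)\|_{p,p}$ for a dataset-dependent constant $c_0\geq 1/\delta$; a direct computation with the formula for $R_\pi$ in Lemma~\ref{lem:obj:grad-corr-bound-2} gives $R_{\pi_k}=\Theta(\|W(k)\|_{p,p}/(c_0\delta))\leq\|W(k)\|_{p,p}$, so $W(k)\in C_{p,\mu_0,R_{\pi_k}}(W^\alpha_\mathrm{mm})$ and Lemma~\ref{lem:alg:corr-bound-2} applies at each step with slack $\pi_k$.

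Using $D_\psi(\bar{W}^\alpha_\mathrm{mm},\bar{W}(K))=1-\langle\nabla\psi(W(K)),\bar{W}^\alpha_\mathrm{mm}\rangle/\|W(K)\|_{p,p}^{p-1}$, it suffices to lower bound this inner product. Telescoping the step-wise inequality from Lemma~\ref{lem:alg:corr-bound-2} over $k=0,\ldots,K-1$, applying $1/(1+\pi_k)\geq 1-\pi_k$ to the norm term, and using monotonicity of $\|W(k)\|_{p,p}$ together with the boundedness of $\mathcal{L}$ (Lemma~\ref{lem:obj:l-bound}) to handle the loss term gives
\[
\langle\nabla\psi(W(K)),\bar{W}^\alpha_\mathrm{mm}\rangle\geq \|W(K)\|_{p,p}^{p-1}-\mu_0\|W(0)\|_{p,p}^{p-1}-\sum_{k=0}^{K-1}\pi_k\Delta_k-\frac{\eta(\mathcal{L}_{\max}-\mathcal{L}_{\min})}{\|W(0)\|_{p,p}},
\]
where $\Delta_k:=\|W(k+1)\|_{p,p}^{p-1}-\|W(k)\|_{p,p}^{p-1}$. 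Dividing through by $\|W(K)\|_{p,p}^{p-1}$, the two non-sum terms contribute $\mathcal{O}(\|W(K)\|_{p,p}^{-(p-1)})$, which is already within the target rate for every $p$, so the task reduces to bounding the weighted sum.

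Combining Lemma~\ref{lem:norm-growth-bound} with a mean-value estimate gives the uniform bound $\Delta_k=\mathcal{O}(1)$, so $\sum_k\pi_k\Delta_k$ is a Riemann sum (in the variable $y=\|W(k)\|_{p,p}^{p-1}$) for the integral $c_0\int_{\|W(0)\|_{p,p}^{p-1}}^{\|W(K)\|_{p,p}^{p-1}}\log(y^{1/(p-1)})/y^{1/(p-1)}\,dy$, with discretization error $\mathcal{O}(1)$. The change of variables $y=x^{p-1}$ reduces this to $c_0(p-1)\int_{\|W(0)\|_{p,p}}^{\|W(K)\|_{p,p}}x^{p-3}\log x\,dx$, which evaluates (up to lower-order terms) to $\Theta(\|W(K)\|_{p,p}^{p-2}\log\|W(K)\|_{p,p})$ for $p>2$, $\Theta((\log\|W(K)\|_{p,p})^2)$ for $p=2$, and a bounded constant for $p<2$; dividing by $\|W(K)\|_{p,p}^{p-1}$ yields exactly the three rates claimed in \eqref{eqn:dwk:rate}. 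The main technical obstacle is the sum-to-integral discretization error: for $p\geq 2$ the integral grows and dominates it, but for $p<2$ the integral and the $\mathcal{O}(1)$ error are of the same constant order, so one must combine them by a careful comparison (rather than a strict domination) to arrive at the sharp $\mathcal{O}(\|W(K)\|_{p,p}^{-(p-1)})$ bound.
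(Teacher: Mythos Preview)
Your proposal is correct and follows essentially the same route as the paper: choose the step-dependent slack $\pi_k\propto\log\|W(k)\|_{p,p}/\|W(k)\|_{p,p}$, telescope the one-step inequality of Lemma~\ref{lem:alg:corr-bound-2}, and control the resulting sum by an integral approximation, with the identity $D_\psi(\bar W^\alpha_\mathrm{mm},\bar W(K))=1-\langle\nabla\psi(W(K)),\bar W^\alpha_\mathrm{mm}\rangle/\|W(K)\|_{p,p}^{p-1}$ tying the inner-product bound to the divergence. The one noteworthy difference is that you integrate in the variable $y=\|W(k)\|_{p,p}^{p-1}$ (whose increments are uniformly $\mathcal O(1)$ by Lemma~\ref{lem:norm-growth-bound} and a mean-value bound), which lets the $p<2$ case fall out directly as a convergent integral plus an $\mathcal O(1)$ discretization error; the paper instead integrates in $x=\|W(k)\|_{p,p}$ and, because those increments can be as large as $\mathcal O(\|W(k)\|_{p,p}^{2-p})$ when $p<2$, resorts to a more delicate partition of the index set to reach the same $\mathcal O(1)$ bound.
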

\begin{proof}
    By setting $\pi=\min\{\frac{c\log\|W(k)\|_{p,p}}{\delta\|W(k)\|_{p,p}},1\}$, we have $\|W(k)\|_{p,p}\geq R_\pi$, so can use the result of Lemma \ref{lem:alg:corr-bound-2} on index $k$, so rearranging that result, we get
    \begin{align*}
        \|W(k+1)\|_{p,p}^{p-1}-\langle\nabla\psi(W(k+1)),\bar{W}_\mathrm{mm}^\a\rangle
        &\leq\|W(k)\|_{p,p}^{p-1}-\langle\nabla\psi(W(k)),\bar{W}_\mathrm{mm}^\a\rangle\\
        &+\frac{\pi}{1+\pi}(\|W(k+1)\|_{p,p}^{p-1}-\|W(k)\|_{p,p}^{p-1})\\
        &+\frac{\eta}{\|W(k)\|_{p,p}}(\L(W(k))-\L(W(k+1))).
    \end{align*}
    
    From Lemma \ref{lem:obj:grad-corr-bound-1} and Lemma \ref{lem:alg:md-corr-bound}, $\|W(k)\|_{p,p}$ is increasing, so focusing on the second line, we can use Lemma \ref{lem:norm:xpyp-2} and get
    \begin{align*}
        \frac{\pi}{1+\pi}(\|W(k+1)\|_{p,p}^{p-1}-\|W(k)\|_{p,p}^{p-1})
        &\leq\pi(\|W(k+1)\|_{p,p}^{p-1}-\|W(k)\|_{p,p}^{p-1})\\
        &\leq\frac{cp}{\d\|W(k)\|_{p,p}}\max\{\|W(k)\|_{p,p}^{p-2},\|W(k+1)\|_{p,p}^{p-2}\}\\&\times\log\|W(k)\|_{p,p}\\&\times(\|W(k+1)\|_{p,p}-\|W(k)\|_{p,p}).
    \end{align*}
    Let
    \begin{align*}
        \Delta(k)&=\frac{cp}{\d\|W(k)\|_{p,p}}\max\{\|W(k)\|_{p,p}^{p-2},\|W(k+1)\|_{p,p}^{p-2}\}\log\|W(k)\|_{p,p},
    \end{align*}
    so we can get that
    \begin{align*}
    \begin{split}
        \|W(K)\|_{p,p}^{p-1}-\langle\nabla\psi(W(K)),\bar{W}^\a_\mathrm{mm}\rangle
        &\leq\|W(0)\|_{p,p}^{p-1}-\langle\nabla\psi(W(0)),\bar{W}^\a_\mathrm{mm}\rangle\\&+\sum_{k=0}^{k-1}\Delta(k)(\|W(k+1)\|_{p,p}-\|W(k)\|_{p,p})\\&+\frac{\eta}{\|W(K)\|_{p,p}}(\mathcal{L}(W(0))-\mathcal{L}(W(K))),
    \end{split}
    \end{align*}
    \begin{align}
    \begin{split}\label{eq:lem-rate-div}
        \|W(K)\|_{p,p}^{p-1}D_\psi(\bar{W}^\a_\mathrm{mm},\bar{W}(K))
        &\leq\|W(0)\|_{p,p}^{p-1}D_\psi(\bar{W}^\a_\mathrm{mm},\bar{W}(0))\\&+\sum_{k=0}^{k-1}\Delta(k)(\|W(k+1)\|_{p,p}-\|W(k)\|_{p,p})\\&+\frac{\eta}{\|W(K)\|_{p,p}}(\mathcal{L}(W(0))-\mathcal{L}(W(K))).
    \end{split}
    \end{align}

    We can deduce from Lemmas \ref{lem:alg:md-corr-bound} and \ref{lem:obj:grad-corr-bound-1} that $\|W(k)\|_{p,p}$ is increasing in $k$, and with Lemma \ref{lem:norm-growth-bound}, we can upperbound the increase. Therefore, to approximate that sum above, we can use an integral approximation by "integrating" over the $\|W(k)\|_{p,p}$ terms.

    Now let us investigate the different cases for $p$. When $p>2$, we have
    \[\|W(k+1)\|_{p,p}\leq\|W(k)\|_{p,p}+\mathcal{O}(1)\text{ and}\]
    \begin{align*}
        \Delta(k)=\frac{cp}{\d\|W(k)\|_{p,p}}(\|W(k)\|_{p,p}+\mathcal{O}(1))^{p-2}\log\|W(k)\|_{p,p}.
    \end{align*}
    We can see that for any constant $C$

\begin{align*}
\frac{d}{dx}(x+C)^{p-2}\log x&=(p-2)(x+C)^{p-3}\log x+\frac{(x+C)^{p-2}}{x}\\
&=\Omega\left(\frac{(x+C)^{p-2}}{x}\log x\right)    
\end{align*}
 for all $x>0$, so we can use an integral approximation to get that
    \begin{align*}
        \sum_{k=0}^{K-1}\Delta(k)(\|W(k+1)\|_{p,p}-\|W(k)\|_{p,p})
        &=\mathcal{O}\left(\int_{\|W(0)\|_{p,p}}^{\|W(K)\|_{p,p}}\frac{cp}{\d x}(x+\mathcal{O}(1))^{p-2}\log xdx\right)\\
        &=\mathcal{O}(\|W(K)\|^{p-2}\log\|W(K)\|_{p,p}).
    \end{align*}
    
    When $p=2$, we have
    \[\|W(k+1)\|_{p,p}\leq\|W(k)\|_{p,p}+\mathcal{O}(1)\text{ and}\]
    \begin{align*}
        \Delta(k)=\frac{cp}{\d\|W(k)\|_{p,p}}\log\|W(k)\|_{p,p}.
    \end{align*}
    We can see that
    \[\frac{d}{dx}(\log x)^2=\frac{2}{x}(\log x),\]
    so we can get that using an integral approximation,
\begin{align*}
\sum_{k=0}^{K-1}\Delta(k)(\|W(k+1)\|_{p,p}-\|W(k)\|_{p,p})&=\mathcal{O}\left(\int_{\|W(0)\|_{p,p}}^{\|W(K)\|_{p,p}}\frac{cp}{\d x}\log xdx\right)\\
&=\mathcal{O}((\log\|W(K)\|_{p,p})^2).    
\end{align*}
When $p<2$, we have
    \[\|W(k+1)\|_{p,p}\leq\|W(k)\|_{p,p}+\mathcal{O}(\|W(k)\|_{p,p}^{2-p}), \quad \textnormal{and}\]
    \begin{align*}
        \Delta(k)(\|W(k+1)\|_{p,p}&-\|W(k)\|_{p,p})\\
        &=cp\d^{-1}\|W(k)\|_{p,p}^{p-3}\log\|W(k)\|_{p,p}(\|W(k+1)\|_{p,p}-\|W(k)\|_{p,p}).
    \end{align*}
    Now let $A, B,$ and $C$ be partition of the index set $\{0,1,...,k-1\}$ such that $A$ contains the indices such that $\|W(k+1)\|_{p,p}\leq\|W(k)\|_{p,p}+1$, $B$ contains the indices such that $1<\|W(k+1)\|_{p,p}-\|W(k)\|_{p,p}\leq\|W(k)\|_{p,p}^{1-p/2}$, and $C$ be the other indices. Using an integral approximation,
    \[\sum_{k\in A}\Delta(k)(\|W(k+1)\|_{p,p}-\|W(k)\|_{p,p})=\mathcal{O}(1).\]
    Furthermore,
    \[\sum_{k\in B}\Delta(k)(\|W(k+1)\|_{p,p}-\|W(k)\|_{p,p})\leq\sum_{k\in B}\mathcal{O}(\|W(k)\|^{-2+p/2}\log\|W(k)\|_{p,p}),\]
    \[\sum_{k\in C}\Delta(k)(\|W(k+1)\|_{p,p}-\|W(k)\|_{p,p})\leq\sum_{k\in C}\mathcal{O}(\|W(k)\|^{-1}\log\|W(k)\|_{p,p}).\]
    Since $\{\|W(k)\|_{p,p}:k\in B\}$ have all pair of elements being at least $1$ apart, the sum over $k\in B$ must be $\mathcal{O}(1)$. Similarly, if we order the elements of $C$ in increasing order $k_1,...,k_l$, $\|W(k_{i+1})\|_{p,p}\geq\|W(k_i)\|_{p,p}+\|W(k_i)\|_{p,p}^{1-p/2}$, so $\|W(k_i)\|_{p,p}=\Omega(i^{2/p})$. This makes the sum over $k\in C$ also be $\mathcal{O}(1)$, so
    \[\sum_{k=0}^{K-1}\Delta(k)(\|W(k+1)\|_{p,p}-\|W(k)\|_{p,p})=\mathcal{O}(1).\]

    Combining the above cases with Equation (\ref{eq:lem-rate-div}), we get that
    \[\|W(K)\|_{p,p}^{p-1}D_\psi(\bar{W}^\a_\mathrm{mm},\bar{W}(K))=\begin{cases}
        O(\|W(K)\|_{p,p}^{p-2}\log\|W(K)\|_{p,p}) & \text{if }p>2,\\
        O((\log\|W(K)\|_{p,p})^2) & \text{ if }p=2,\\
        O(1) & \text{otherwise}
    \end{cases},\]
Dividing both sides by $\|W(K)\|_{p,p}^{p-1}$ gives \eqref{eqn:dwk:rate}.
\end{proof}

\begin{lemma}\label{lem:rate:norm-asym}
    Suppose Assumption \ref{assumption-loss} holds. Let $\mu'$ be that from Lemma \ref{lemma-stay-in-cone} if $\mu=\mu_0$, and let $R$ the maximum of the $R_\mu$ from Lemma \ref{lemma-stay-in-cone} and $R_\d$ from Lemma \ref{lem:obj:grad-corr-bound-1}. Let  $\{W(k)\}_{k \geq 0}$ be the sequence generated by \ref{alg:p:agd}. If the initialization $W(0)$ is in $C_{p,\mu',R}(W_\mathrm{mm}^\a)$, then with a small enough step size $\eta$, we have the following for each $k\geq0$,
    \begin{align*}
        \|W(k)\|_{p,p}=\Omega(\log k).    
    \end{align*}
\end{lemma}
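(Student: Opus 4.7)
The strategy is to derive a per-step lower bound on the growth of $a_k := \|W(k)\|_{p,p}^{p-1}$ that depends only exponentially on $\|W(k)\|_{p,p}$, and then extract a logarithmic lower bound by telescoping and a contradiction argument.

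First, I would apply Lemma~\ref{lemma-stay-in-cone} with $\mu = \mu_0$ to obtain constants $\mu' \in (0, \mu_0]$ and $R_\mu > 0$; picking $R \geq \max\{R_\mu, R_\delta\}$ in the hypothesis of the lemma guarantees that the iterates $W(k)$ remain in $C_{p,\mu_0, R}(W^\alpha_{\mathrm{mm}})$ for every $k \geq 0$. In particular, $\|W(k)\|_{p,p} \geq R_\delta$ and $W(k)$ has a direction close to $W^\alpha_{\mathrm{mm}}$ throughout training, which makes both Lemma~\ref{lem:obj:grad-corr-bound-1} and Lemma~\ref{lem:alg:md-corr-bound} applicable at every iteration.

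Second, I would set $V_k := (\|W^\alpha_{\mathrm{mm}}\|_{p,p}/\|W(k)\|_{p,p})\, W(k)$, which shares the direction of $W(k)$ and has norm $\|W^\alpha_{\mathrm{mm}}\|_{p,p}$, so Lemma~\ref{lem:obj:grad-corr-bound-1} yields
\[
-\langle \nabla \mathcal{L}(W(k)), V_k\rangle \;=\; \Omega\!\left(\exp\!\left(-\tfrac{\|W(k)\|_{p,p}}{\|W^\alpha_{\mathrm{mm}}\|_{p,p}}(1+\tfrac{\delta}{2})\right)\right).
\]
By linearity, $\langle-\nabla\mathcal{L}(W(k)), W(k)\rangle = (\|W(k)\|_{p,p}/\|W^\alpha_{\mathrm{mm}}\|_{p,p})\langle-\nabla\mathcal{L}(W(k)), V_k\rangle$. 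Substituting into Lemma~\ref{lem:alg:md-corr-bound} cancels the factor $\|W(k)\|_{p,p}$ and produces the central recurrence
\[
\|W(k+1)\|_{p,p}^{p-1} - \|W(k)\|_{p,p}^{p-1} \;\geq\; c_1 \exp(-c_2 \|W(k)\|_{p,p}),
\]
for dataset-dependent constants $c_1, c_2 > 0$ with $c_2 = (1+\delta/2)/\|W^\alpha_{\mathrm{mm}}\|_{p,p}$. Since the right-hand side is strictly positive, this also confirms that $\|W(k)\|_{p,p}$ is non-decreasing.

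Third, writing $a_k := \|W(k)\|_{p,p}$ and using monotonicity $a_k \leq a_K$ for $k \leq K$, I would telescope to obtain
\[
a_K^{p-1} \;\geq\; a_0^{p-1} + c_1 K \exp(-c_2 a_K).
\]
Fix any $\gamma \in (0, 1/c_2)$ and suppose, for contradiction, that $a_K \leq \gamma \log K$ for some arbitrarily large $K$. Then the above inequality would force $(\gamma \log K)^{p-1} \geq c_1 K^{1-c_2\gamma}$, which is impossible for large $K$ because the right side grows polynomially while the left side grows only poly-logarithmically. Hence $a_K > \gamma \log K$ for all sufficiently large $K$, establishing $\|W(k)\|_{p,p} = \Omega(\log k)$.

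The main technical obstacle I expect is the bookkeeping in Step 2: carefully verifying that $V_k$ satisfies the hypotheses of Lemma~\ref{lem:obj:grad-corr-bound-1} (direction inside $S_{p,\mu_0}$ via cone-stability, and norm equal to $\|W^\alpha_{\mathrm{mm}}\|_{p,p}$ so the lemma applies cleanly) and that the cancellation between $\langle -\nabla\mathcal{L}, W(k)\rangle$ and the $1/\|W(k)\|_{p,p}$ prefactor in Lemma~\ref{lem:alg:md-corr-bound} is genuinely clean so that only $\exp(-c_2 a_k)$—and no further shrinking polynomial factor—remains. Once this clean exponential-in-$a_k$ recurrence is secured, the final logarithmic lower bound is routine.
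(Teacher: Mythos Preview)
Your proposal is correct and follows essentially the same route as the paper: both use cone-stability (Lemma~\ref{lemma-stay-in-cone}) together with Lemma~\ref{lem:obj:grad-corr-bound-1} applied at $V_k=(\|W^\alpha_{\mathrm{mm}}\|_{p,p}/\|W(k)\|_{p,p})W(k)$ and Lemma~\ref{lem:alg:md-corr-bound} to obtain the key recurrence $\|W(k+1)\|_{p,p}^{p-1}-\|W(k)\|_{p,p}^{p-1}\geq c_1 e^{-c_2\|W(k)\|_{p,p}}$. The only minor difference is the final extraction of the $\Omega(\log k)$ bound: the paper introduces stopping times $k_i$ at which the norm increments by one and shows $k_{i+1}-k_i=e^{O(\|W(k_i)\|_{p,p})}$, whereas you telescope and argue by contradiction---both are equally valid elementary closures of the same recurrence.
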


\begin{proof}
    For each $k\geq0$, Lemma \ref{lem:alg:md-corr-bound}  gives
    \[\|W(k+1)\|_{p,p}^{p-1}\geq\|W(k)\|_{p,p}^{p-1}+\frac{\eta}{\|W(k)\|_{p,p}}\langle-\nabla\L(W(k)),W(k)\rangle.\]
    Lemma \ref{lemma-stay-in-cone} gives us that $W(k)\in C_{p,\mu,R}(W_\mathrm{mm}^\a)$ for each $k\geq0$, so by Lemma \ref{lem:obj:grad-corr-bound-1},
    \[\frac{\eta}{\|W(k)\|_{p,p}}\langle-\nabla\L(W(k)),W(k)\rangle=\Omega\left(e^{-\frac{\|W(k)\|_{p,p}}{\|W_\mathrm{mm}^\a\|_{p,p}}(1+\frac{1}{2}\d)}\right),\]
    so there exists dataset dependent constants $R_1,R_2>0$ such that
    \[\frac{\eta}{\|W(k)\|_{p,p}}\langle-\nabla\L(W(k)),W(k)\rangle\geq R_1e^{-R_2\|W(k)\|_{p,p}},\]
    so for each $k\geq0$,
    \[\|W(k+1)\|_{p,p}^{p-1}\geq\|W(k)\|_{p,p}^{p-1}+R_1e^{-R_2\|W(k)\|_{p,p}}.\]
    Set $k_0=0$, and let $k_{i+1}$ be the lowest indices such that $\|W(k_{i+1})\|_{p,p}\geq\|W(k_i)\|_{p,p}+1$ for all index $i\geq0$. Therefore,
    \[k_{i+1}-k_i\leq\frac{(\|W(k_i)\|_{p,p}+1)^{p-1}-\|W(k_i)\|_{p,p}^{p-1}}{R_1e^{-R_2(\|W(k_i)\|_{p,p}+1)}}=e^{O(\|W(k_i)\|_{p,p})}.\]
    Thuse, $k_i$ grows exponentially, which implies
    \[\|W(k)\|_{p,p}=\Omega(\log k).\]
\end{proof}
\section{Proof of Theorem~\ref{thm:rp:w}}
\begin{proof}
The proof is similar to the proof of \citet[Theorem 1]{ataee2024max}. Specifically, we need to show that \( f(X) = v^\top X^\top \S(XW) \) satisfies the assumptions of \cite[Lemma 14]{ataee2024max}, where the nonlinear head is replaced by the linear term \( v \). This holds independently of the choice of algorithm or the attention SVM solution. Thus, we omit the details and refer to the proof of \citet[Theorem 1]{ataee2024max}.
\end{proof}

\section{Proof of Theorem~\ref{thm-norm}}
\begin{proof}
This is a direct implication of Lemma \ref{lem:rate:norm-asym}.
\end{proof}


\section{Proof of Theorem~\ref{thm-direction}}
\begin{proof}
This is a direct consequence of Theorem \ref{thm-rate}.    
\end{proof}

\section{Proof of Theorem~\ref{thm-rate}}
\begin{proof}
Let \(R\) be the one from Lemma \ref{lem:rate:bregman-asym}. Given \(W(0) \in C_{p,\mu,R}(W^\alpha_\mathrm{mm})\), by Lemma~\ref{lem:rate:bregman-asym}, we have

\[
D_\psi(\bar{W}^\alpha_\mathrm{mm}, \bar{W}(k)) = \left\{
\begin{aligned}
    &\mathcal{O}\left(\frac{\log\|W(k)\|_{p,p}}{\|W(k)\|_{p,p}}\right) & \text{if } p > 2, \\
    &\mathcal{O}\left(\frac{(\log\|W(k)\|_{p,p})^2}{\|W(k)\|_{p,p}}\right) & \text{if } p = 2, \\
    &\mathcal{O}\left(\frac{1}{\|W(k)\|_{p,p}^{p-1}}\right) & \text{otherwise}.
\end{aligned}
\right.
\]

From Lemma~\ref{lem:rate:norm-asym}, we know that

\[
\|W(k)\|_{p,p} = \Omega(\log k).
\]

The derivative \(\frac{d}{dx}\left(\frac{\log x}{x}\right) = \frac{1-\log x}{x^2}\) is negative when \(x > e\), so \(\frac{\log x}{x}\) is decreasing when \(x > e\). Similarly, \(\frac{(\log x)^2}{x}\) is decreasing when \(x > e^2\).

Thus when $p>2$, for a large enough $k$,
\begin{subequations}
\begin{align}\label{eqn:dpsi:rate:case:a}
D_\psi(\bar{W}^\a_\mathrm{mm},\bar{W}(k))=O\left(\frac{\log\log k}{\log k}\right).    
\end{align}
Similarly, when $p=2$, for a large enough $k$,
\begin{align}\label{eqn:dpsi:rate:case:b}
D_\psi(\bar{W}^\a_\mathrm{mm},\bar{W}(k))=O\left(\frac{(\log\log k)^2}{\log k}\right).    
\end{align}
Finally, when $1<p<2$,
\begin{align}\label{eqn:dpsi:rate:case:c}
D_\psi(\bar{W}^\a_\mathrm{mm},\bar{W}(k))=O\left(\frac{1}{(\log k)^{p-1}}\right).    
\end{align}
\end{subequations}
    
\end{proof}

\section{On the Convergence of the $\ell_p$ Regularization Path for Joint $W$ and $v$}\label{sec:app:joint}
In this section, we extend the results of Theorem \ref{thm:rp:w} to the case of joint optimization of head $v$ and attention weights $W$ using a logistic loss function.

\begin{assumption}\label{assum:label:shrink}
Let \(\Gamma, \Gamma' > 0\) denote the label margins when solving \eqref{eqn:vp:svm} with $X_{i\alpha_i}$ and its replacement with $X^\top_{i} \sigma(X_{i}Wz_i)$, for all $i \in [n]$, respectively. There exists $\nu>0$ such that for all $i \in [n]$ and $W \in \mathbb{R}^{d \times d}$,
\begin{align*}
\Gamma - \Gamma' \geq \nu \cdot (1 - s_{i\alpha_i}), \quad \textnormal{where}\quad s_{i\alpha_i}=[\S(X_{i} W z_i)]_{\alpha_i}. 
\end{align*}
\end{assumption}
Assumption \ref{assum:label:shrink} is similar to \cite{ataee2024max} and highlights that selecting optimal tokens is key to maximizing the classifier's label margin. When attention features, a weighted combination of all tokens, are used, the label margin shrinks based on how much attention is given to the optimal tokens. The term $\nu \cdot (1 - s_{i\alpha_i})$ quantifies this minimum shrinkage. If the attention mechanism fails to focus on these tokens (i.e., low $s_{i\alpha_i}$), the margin decreases, reducing generalization. This assumption implies that optimal performance is achieved when attention converges on the most important tokens, aligning with the max-margin attention SVM solution.

Similar to how we provided the characterization of convergence for the regularization path of \ref{alg:p:agd}, we offer a similar characterization here for \ref{alg:pq:agd}.

\begin{theorem}[Joint $\ell_p$--norm Regularization Path]\label{thm:joint:rp}
Consider the empirical risk minimization problem with logistic loss \( l(x) = \log(1 + \exp(-x)) \):
\begin{equation}
\begin{aligned}
(v^{(r)}, W^{(R)}) := &\argmin_{(v,W)}~\mathcal{L}(v, W) \\
\textnormal{subj. to} \quad &\|W\|_{p,p} \leq R, \\
&\|v\|_p \leq r := r(R).
\end{aligned}
\tag{\texttt{$\ell_p$-JointRP}}\label{alg:rp}
\end{equation}
Suppose there exist token indices $\alpha = (\alpha_i)_{i=1}^m$ for which $W^\alpha_{\mathrm{mm}}$ (as defined in \eqref{eqn:w-svm}) exists and Assumption~\ref{assum:label:shrink} holds for some $\Gamma, \nu > 0$. Then, for $r(R) = \exp(\Theta(R))$, as $R \to \infty$, we have
\begin{align}
  \lim_{R \to \infty}  \left(\frac{v^{(r)}}{r}, \frac{W^{(R)}}{R}\right) = \left(\frac{v_{\mathrm{mm}}}{\|v_{\mathrm{mm}}\|_{p}}, \frac{W^\alpha_{\mathrm{mm}}}{\|W^\alpha_{\mathrm{mm}}\|_{p,p}}\right),
\end{align}
where $v_{\mathrm{mm}}$ and $W^\alpha_{\mathrm{mm}}$ are the solutions to equations \eqref{eqn:vp:svm} and \eqref{eqn:w-svm}, respectively.
\end{theorem}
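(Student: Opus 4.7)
The plan is to prove convergence via a subsequential limit argument: normalize $\bar v := v^{(r)}/r$ and $\bar W := W^{(R)}/R$, both of which lie in $\ell_p$-unit balls, so any sequence $R_k \to \infty$ admits a convergent subsequence with some limit $(\bar v_\infty, \bar W_\infty)$. I would show that the only possible limit is the claimed max-margin pair, from which convergence of the entire path follows by uniqueness of the \eqref{eqn:w-svm} and \eqref{eqn:vp:svm} minimizers. The mechanism that pins down the limit is that $(v^{(r)}, W^{(R)})$ minimizes $\mathcal L$ inside its constraint set, combined with logistic loss being strictly decreasing and exponentially tailed, so the optimal direction is the one that pushes the logits to infinity fastest.

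To bound $\mathcal L$ from above I would substitute the explicit feasible competitor $v^\star = r\,v_{\mathrm{mm}}/\|v_{\mathrm{mm}}\|_p$ and $W^\star = R\,W^\alpha_{\mathrm{mm}}/\|W^\alpha_{\mathrm{mm}}\|_{p,p}$. The \eqref{eqn:w-svm} constraints give $[\S(X_i W^\star z_i)]_{\alpha_i} \geq 1 - T e^{-R/\|W^\alpha_{\mathrm{mm}}\|_{p,p}}$ and non-optimal weights bounded by $T e^{-R/\|W^\alpha_{\mathrm{mm}}\|_{p,p}}$. Combined with the \eqref{eqn:vp:svm} label margin $y_i v^{\star\top} X_{i\alpha_i} = r/\|v_{\mathrm{mm}}\|_p$, this yields a prediction logit of the form $r/\|v_{\mathrm{mm}}\|_p - C r T \Gamma e^{-R/\|W^\alpha_{\mathrm{mm}}\|_{p,p}}$ for a dataset-dependent $C$, and hence an exponentially-small loss. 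Choosing the constant in $r(R) = \exp(\Theta(R))$ strictly less than $1/\|W^\alpha_{\mathrm{mm}}\|_{p,p}$ forces $\mathcal L(v^\star, W^\star) \to 0$.

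For the matching lower bound I would invoke Assumption \ref{assum:label:shrink}: any feasible pair $(v,W)$ yields a logit at most $\|v\|_p \cdot \bigl(\Gamma - \nu(1 - s_{i\alpha_i})\bigr)$ for some $i$. If $\bar W_\infty$ differs from $W^\alpha_{\mathrm{mm}}/\|W^\alpha_{\mathrm{mm}}\|_{p,p}$, then at least one constraint $(X_{i\alpha_i}-X_{it})^\top \bar W_\infty z_i \geq 1/\|W^\alpha_{\mathrm{mm}}\|_{p,p}$ is violated, so $1-s_{i\alpha_i}$ decays no faster than $e^{-cR}$ with some $c < 1/\|W^\alpha_{\mathrm{mm}}\|_{p,p}$; the Assumption \ref{assum:label:shrink} shrinkage $\nu(1-s_{i\alpha_i})$ then multiplies $r$ to swamp the competitor's loss. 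Symmetrically, if $\bar W_\infty$ is correct but $\bar v_\infty$ is not the $\ell_p$-SVM direction, then by strict convexity of \eqref{eqn:vp:svm} the induced label margin is strictly below $1/\|v_{\mathrm{mm}}\|_p$, and the leading $r/\|v_{\mathrm{mm}}\|_p$ term in the logit is strictly suboptimal, again contradicting optimality of $(v^{(r)}, W^{(R)})$. Combining these two cases shows the only admissible limit is the claimed one.

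The main obstacle is calibrating the constant inside $r(R) = \exp(\Theta(R))$: it must be large enough that $r/\|v_{\mathrm{mm}}\|_p$ drives the logistic loss to zero, but strictly below $1/\|W^\alpha_{\mathrm{mm}}\|_{p,p}$ so that the error term $r \cdot e^{-R/\|W^\alpha_{\mathrm{mm}}\|_{p,p}}$ from imperfect softmax concentration vanishes. Making rigorous the dichotomy above requires tracking how fast $1-s_{i\alpha_i}$ can decay when $\bar W_\infty$ is off-direction, and matching this against the $v$-scaling; this is exactly where Assumption \ref{assum:label:shrink} does the heavy lifting by converting a geometric deviation in attention into a quantitative loss in label margin, allowing the two regularization paths to be decoupled in the limit.
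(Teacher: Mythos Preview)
Your proposal is correct and follows essentially the same approach as the paper: a contradiction argument (equivalently, ruling out non-max-margin subsequential limits) that compares the loss of the regularization-path solution against the loss of the explicit max-margin competitor $(r\,v_{\mathrm{mm}}/\|v_{\mathrm{mm}}\|_p,\, R\,W^\alpha_{\mathrm{mm}}/\|W^\alpha_{\mathrm{mm}}\|_{p,p})$, handling the $W$-off-direction case via Assumption~\ref{assum:label:shrink} and the $v$-off-direction case via strict convexity of \eqref{eqn:vp:svm}. The paper's proof organizes this into two explicit cases with the same softmax-concentration estimates and the same calibration of $r(R)$ that you identify as the crux.
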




\begin{proof}Without loss of generality, we set \(\alpha_i = 1\) for all \(i \in [n]\), and we use \(W_\mathrm{mm}\) instead of \(W_\mathrm{mm}^\alpha\). Suppose the claim is incorrect, meaning either \(W^{(R)} / R\) or \(v^{(r)} / r\) fails to converge as \(R\) and \(r\) grow. Define 
\begin{align}\label{notation:thm:joint1}
\nonumber 
\Xi &= \frac{1}{\|\bar{W}_\mathrm{mm}\|_{p,p}},  \qquad \Gamma = \frac{1}{\|v_\mathrm{mm}\|_p},  \\
\bar{W}_\mathrm{mm} &:= R \Xi W_\mathrm{mm},  \qquad \bar{v}_\mathrm{mm} := r \Gamma v_\mathrm{mm}.
\end{align}
Our strategy is to show that \((\bar{v}_\mathrm{mm}, \bar{W}_\mathrm{mm})\) is a strictly better solution compared to \((v^{(r)}, W^{(R)})\) for large \(R\) and \(r\), leading to a contradiction. 

\noindent$\bullet$ \textbf{Case 1: $W^{(R)}/R$ does not converge to $\bar{W}_\mathrm{mm}/R$}. 
In this case, there exists $\delta,\gamma=\gamma(\delta)>0$ such that we can find arbitrarily large $R$ with 
\[
\|W^{(R)}/R-\bar{W}_\mathrm{mm}/R\|\geq \delta
\]
and the margin induced by $W^{(R)}/R$ is at most \(\Xi(1-\gamma)\). 

We bound the amount of non-optimality \(q^*_i\) of \(\bar{W}_\mathrm{mm}\):
\begin{align*}
q^*_i := \frac{\sum_{t \neq \alpha_i} \exp(X_{it}^\top \bar{W}_\mathrm{mm} z_i)}{\sum_{t \in [T]} \exp(X_{it}^\top \bar{W}_\mathrm{mm} z_i)} &\leq \frac{\sum_{t \neq \alpha_i} \exp(X_{it}^\top \bar{W}_\mathrm{mm} z_i)}{\exp(X_{i\alpha_i}^\top \bar{W}_\mathrm{mm} z_i)}\\
&\leq T \exp(-\Xi R).    
\end{align*}
Thus, 
\begin{subequations}\label{joint:qbound}
\begin{equation}\label{joint:qstarmax}
   q^*_{\max} := \max_{i \in [n]} q^*_i \leq T \exp(-\Xi R).
\end{equation}
Next, assume without loss of generality that the first margin constraint is \(\gamma\)-violated by \(W^{(R)}\), meaning 
\[
 \min_{t \neq \alpha_1}~(X_{1\alpha_1} - X_{1t})^\top W^{(R)} z_1\leq \Xi R (1 - \gamma).   
\]
Denoting the amount of non-optimality of the first input of $W^{(R)}$ as \(\hat{q}_1\), we find
\begin{align*}
\hat{q}_1 := \frac{\sum_{t \neq \alpha_1} \exp(X_{1t}^\top W^{(R)}z_1)}{\sum_{t \in [T]} \exp(X_{1t}^\top W^{(R)}z_1)} &\geq \frac{1}{T} \frac{\sum_{t \neq \alpha_1} \exp(X_{1t}^\top W^{(R)}z_1)}{\exp(X_{1\alpha_1}^\top W^{(R)}z_1)} \\
&\geq T^{-1} \exp(-(1 - \gamma) R \Xi).    
\end{align*}
This implies that 
\begin{equation}\label{joint:qhatmax}
\hat{q}_{\max} := \max_{i \in [n]} q^*_i \geq T^{-1} \exp(-\Xi R(1-\gamma)).
\end{equation}
We similarly have 
\begin{equation}\label{joint:qmax}
q^*_{\max} \geq T^{-1} \exp(-\Xi R).
\end{equation}
\end{subequations}
Thus, \eqref{joint:qbound} gives the following relationship between the upper and lower bounds on non-optimality:
\begin{align}\label{smax difff joint}
\nonumber
 -(1 - \gamma) \Xi R - \log T &\leq \log(\hat{q}_{\max}), \\
    -\Xi R - \log T &\leq \log(q^*_{\max}) \leq -\Xi R + \log T.
\end{align}
In other words, \(\bar{W}_\mathrm{mm}\) has exponentially less non-optimality compared to \(W^{(R)}\) as \(R\) grows. To proceed, we need to upper and lower bound the logistic loss of \((\bar{v}_\mathrm{mm}, \bar{W}_\mathrm{mm})\) and \((v^{(r)}, W^{(R)})\) respectively, to establish a contradiction.

\noindent$\bullet$ \textbf{Sub-Case 1.1:  Upper bound for $\mathcal{L}(\bar{v}_\mathrm{mm}, \bar{W}_\mathrm{mm})$}. 
We now bound the logistic loss for the limiting solution. Set \(\bar{r}_i = X_i^\top \S(X_i \bar{W}_\mathrm{mm} z_i)\). If \(\| \bar{r}_i - X_{i1} \|_{p} \leq \epsilon_i\), then \(v_\mathrm{mm}\) satisfies the SVM constraints on \(\bar{r}_i\) with \(Y_i \cdot \bar{r}_i^\top v_\mathrm{mm} \geq 1 - \epsilon_i / \Gamma\). Setting \(\epsilon_{\max} = \sup_{i \in [n]} \epsilon_i\), \(v_\mathrm{mm}\) achieves a label-margin of \(\Gamma - \epsilon_{\max}\) on the dataset \((Y_i, \bar{r}_i)_{i \in [n]}\). Let \(M = \sup_{i \in [n], t, \tau \in [T]} \| X_{it} - X_{i\tau} \|_p\). Recalling \eqref{smax difff joint}, the worst-case perturbation is
\[
\epsilon_{\max} \leq M \exp(-\Xi R + \log T) = M T \exp(-\Xi R).
\]
This implies the upper bound on the logistic loss:
\begin{align}
\nonumber
\mathcal{L}(\bar{v}_\mathrm{mm}, \bar{W}_\mathrm{mm}) &\leq \max_{i \in [n]} \log(1 + \exp(-Y_i \bar{r}_i^\top \bar{v}_\mathrm{mm})) \\
\nonumber
&\leq \max_{i \in [n]} \exp(-Y_i \bar{r}_i^\top \bar{v}_\mathrm{mm}) \\
\nonumber
&\leq \exp(-r \Gamma + r \epsilon_{\max}) \\
&\leq e^{r M T \exp(-\Xi R)} e^{-r \Gamma}. \label{logistic upper}
\end{align}

\noindent$\bullet$ \textbf{Sub-Case 1.2:  Lower bound for $\mathcal{L}(v^{(r)}, W^{(R)})$}. 
We now bound the logistic loss for the finite solution. Set \(\bar{r}_i = X_i^\top \sigma(X_i W^{(R)}z_i)\). Using Assumption \ref{assum:label:shrink}, solving \eqref{eqn:vp:svm} on \((y_i, \bar{r}_i)_{i \in [n]}\) achieves at most \(\Gamma - \nu e^{-(1-\gamma)\Xi R} / T\) margin. Consequently, we have:

\begin{align*}
\mathcal{L}(v^{(r)}, W^{(R)})& \geq \frac{1}{n} \max_{i \in [n]} \log(1 + \exp(-Y_i \bar{r}_i^\top v^{(r)})) \\
&\geq \left(\frac{1}{2n} \max_{i \in [n]} \exp(-Y_i \bar{r}_i^\top v^{(r)}) \right)\wedge \log 2 \\
&\geq  \left(\frac{1}{2n} \exp(-r (\Gamma - \nu e^{-(1-\gamma)\Xi R} / T)) \right)\wedge \log 2 \\
&\geq \left( \frac{1}{2n} e^{r (\nu / T) \exp(-(1 - \gamma) \Xi R)} e^{-r \Gamma} \right) \wedge \log 2. 
\end{align*}

Observe that this lower bound dominates the upper bound from \eqref{logistic upper} when \(R\) is large, specifically when:
\begin{align*}
    \frac{1}{2n} \exp\left( r(\nu/T) \exp\left(-(1-\gamma)\Xi R\right) \right) &\geq \exp\left( rMT \exp(-\Xi R) \right), \\
    -\log(2n) + r(\nu/T) \exp\left(-(1-\gamma)\Xi R\right) &\geq rMT \exp(-\Xi R), \\
    r \exp\left(-(1-\gamma)\Xi R\right) \left(\frac{\nu}{T} - MT \exp(-\gamma\Xi R) \right) &\geq \log(2n).
\end{align*}
As $R$ gets larger,
\[r\gtrsim\frac{T\log(2n)}{\nu}\exp((1-\gamma)\Xi R)=\exp(\Theta(R)).\]
Thus, we obtain the desired contradiction since such a large \(R\) and \(r=r(R)\) would cause $\L(v^{(r)},W^{(R)})$ to not be optimal when \(W^{(R)} / R \not\rightarrow \bar{W}_\mathrm{mm}\). Therefore, \(W^{(R)} / R\) must converge to \( \bar{W}_\mathrm{mm}/R \).

\noindent$\bullet$ \textbf{Case 2: Suppose $v^{(r)}/r$ does not converge.} In this case, there exists $\delta > 0$ such that we can find arbitrarily large $r$ obeying $\text{dist}(v^{(r)}/r, \bar{v}_\mathrm{mm}/r) \geq \delta$. If $\text{dist}(W^{(R)}/R, \Xi W_\mathrm{mm}) \not\rightarrow 0$, then "Case 1" applies. Otherwise, we have $\text{dist}(W^{(R)}/R, \Xi W_\mathrm{mm}) \rightarrow 0$, thus we can assume $\text{dist}(W^{(R)}/R, \Xi W_\mathrm{mm}) \leq \epsilon$ for an arbitrary choice of $\epsilon > 0$.

On the other hand, due to the strong convexity of \eqref{eqn:w-svm}, for some $\gamma := \gamma(\delta) > 0$, $v^{(r)}$ achieves a margin of at most $(1 - \gamma) \Gamma r$ on the dataset $(Y_i, X_{i1})_{i \in [n]}$, where $X_{i1}$ denotes the optimal token for each $i \in [n]$.
Additionally, since $\text{dist}(W^{(R)}/R, \Xi W_\mathrm{mm}) \leq \epsilon$, $W^{(R)}$ strictly separates all optimal tokens (for small enough $\epsilon > 0$) and $\hat{q}_{\max} := f(\epsilon) \rightarrow 0$ as $R \rightarrow \infty$. Note that  \( f(\epsilon) \) quantifies the non-optimality of \( W^{(R)} \) compared to \( W_\mathrm{mm} \); as \( \epsilon \to 0 \), meaning \( W^{(R)} / R \) converges to \( \Xi W_\mathrm{mm} / R \), \( f(\epsilon) \to 0 \). Consequently, setting $r_i = X_i^\top \sigma(X_i  W^{(R)}z_i)$, for sufficiently large $R > 0$ and setting $M = \sup_{i \in [n], t \in [T]} \|X_{it}\|$, we have that
\begin{align}
\nonumber
\min_{i \in [n]}~~y_i (v^{(r)})^\top r_i & \leq \min_{i \in [n]} ~~y_i (v^{(r)})^\top X_{i1} + \sup_{i \in [n]} |(v^{(r)})^\top (X_{it} - X_{i1})| \\
\nonumber
& \leq (1 - \gamma) \Gamma r + M f(\epsilon) r \\
& \leq (1 - \gamma/2) \Gamma r.
\end{align}
This in turn implies that logistic loss is lower bounded by

\[
\mathcal{L}(v^{(r)}, W^{(R)}) \geq  \left(\frac{1}{2n} e^{\gamma \Gamma r / 2} e^{-\Gamma r} \right)\wedge \log 2.
\]

Now, using \eqref{logistic upper}, this exponentially dominates the upper bound of $(\bar{W}_\mathrm{mm}, \bar{v}_\mathrm{mm})$ whenever $r M T \exp(-\Xi R) < r \gamma \Gamma / 2-\log(2n)$, completing the proof, which is satisfied for some $r=\exp(\Theta(R))$.
\end{proof}

\begin{remark} 
The exponential relationship $r = \exp(\Theta(R))$ manifests in two complementary ways. Structurally, it emerges from the composition in \eqref{eqn:erm}: the inner attention mechanism (controlled by $W$ with norm $R$) generates exponentially sharp distributions through softmax, which the outer classifier (controlled by $v$ with norm $r$) must effectively discriminate. Analytically, this scaling is precisely quantified when examining the logistic loss landscape: when $W^{(R)}/R$ deviates from the max-margin solution by $\gamma$, the attention weights become exponentially less optimal as $\exp(-(1-\gamma)\Xi R)$, requiring $r \gtrsim \frac{T\log(2n)}{\nu}\exp((1-\gamma)\Xi R)$ to ensure the optimal solution dominates. This reveals a fundamental aspect of attention-based learning: the classifier's capacity to discriminate ($r$) must scale exponentially with the attention mechanism's capacity to focus ($R$) to guarantee convergence to the max-margin solution.
\end{remark}

\section{Implementation Details and Additional Experiments}\label{app:exp:detail}

The experiments were run on an Intel i7 core and a single V100 GPU using the pytorch and huggingface libraries. They should be runnable on any generic laptop.

\subsection{Computational Overhead of Algorithm}\label{alg:overhead}

When compared to the standard gradient descent algorithm, the mirror descent based algorithms, such as the $\ell_p$-MD, \ref{alg:p:agd}, and \ref{alg:pq:agd} have additional computational overhead. {
To provide a concrete comparison, we analyze the operation counts for updating a parameter matrix $W(k)$ with $D$ entries under three algorithms: GD, Adam, and $\ell_p$-MD. We assume the gradient $\nabla\mathcal{L}(W(k))$ is provided as an oracle, since all three algorithms require gradient computation, and focus solely on the update-rule overhead. 

For Adam, the algorithm maintains first and second momentum estimates $M(k)$ and $V(k)$, with update rules
\[
M(k+1)\leftarrow \beta_1M(k)+(1-\beta_1)\nabla\mathcal{L}(W(k)), \quad
V(k+1)\leftarrow \beta_2V(k)+(1-\beta_2)\nabla\mathcal{L}(W(k))^2,
\]
\[
W(k+1)\leftarrow W(k)-\eta\cdot\frac{M(k+1)/(1-\beta_1^k)}{\sqrt{V(k+1)/(1-\beta_2^k)}+\varepsilon}.
\]
Updating $M(k+1)$ requires two entry-wise multiplications and one addition; updating $V(k+1)$ requires two multiplications, one squaring, and one addition; updating $W(k+1)$ requires three divisions, one subtraction, one addition, one multiplication, and one square root. In total, Adam requires fourteen entry-wise operations per parameter.

For GD, the update is simply
\[
W(k+1)\leftarrow W(k)-\eta\nabla\mathcal{L}(W(k)),
\]
requiring only one multiplication and one subtraction, i.e., two entry-wise operations per parameter.

For $\ell_p$-MD, the update follows Algorithm~\ref{alg:p:agd}. Computing $[W(k)]^+_{ij}$ requires one entry-wise sign operation, one power, one absolute value, two multiplications, and one subtraction. The final computation requires one power, one absolute value, one sign, and one multiplication. In total, $\ell_p$-MD requires ten entry-wise operations per parameter.

All of these operations scale linearly with the number of parameters $D$. Thus, despite introducing richer $\ell_p$ geometry, $\ell_p$-MD remains strictly more efficient than Adam by four operations per parameter, while the overhead relative to GD is modest, linear in the number of parameters, and easily offset by the induced sparsity benefits. This analysis demonstrates that our algorithm achieves a favorable complexity-accuracy tradeoff: its computational overhead is provably negligible and in fact lighter than Adam, while yielding sparsity and interpretability benefits that neither GD nor Adam can provide.
}




\subsection{\ref{alg:p:agd} Experiment}
The dataset $(X_i,Y_i,z_i)_{i=1}^n$ is generated randomly: $X_i$ and $z_i$ are sampled from the unit sphere, and $Y_i$ is uniformly sampled from $\{\pm1\}$. Additionally, $v$ is randomly selected from the unit sphere. We use $n=6$ samples, $T=8$ tokens per sample, and $d=10$ dimensions per token, fulfilling the overparameterized condition for the \ref{eqn:w-svm} problem to be almost always feasible.

\begin{figure}[t]
     \centering
     \begin{subfigure}[b]{0.32\textwidth}
         \centering
         \includegraphics[width=\textwidth]{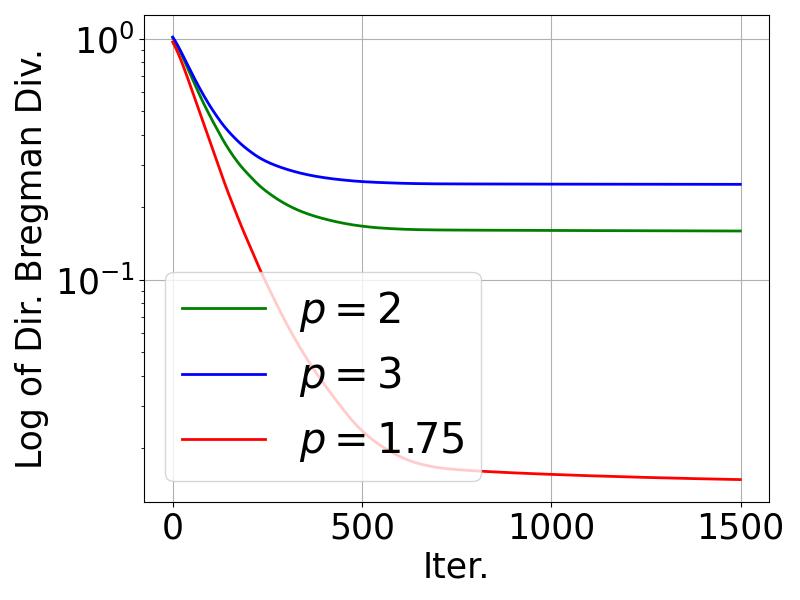}
         \caption{$\ell_{1.75}$ Log Divergence}
         \label{figjoint-inst-log-1-75}
     \end{subfigure}
     \hfill
     \begin{subfigure}[b]{0.32\textwidth}
         \centering
         \includegraphics[width=\textwidth]{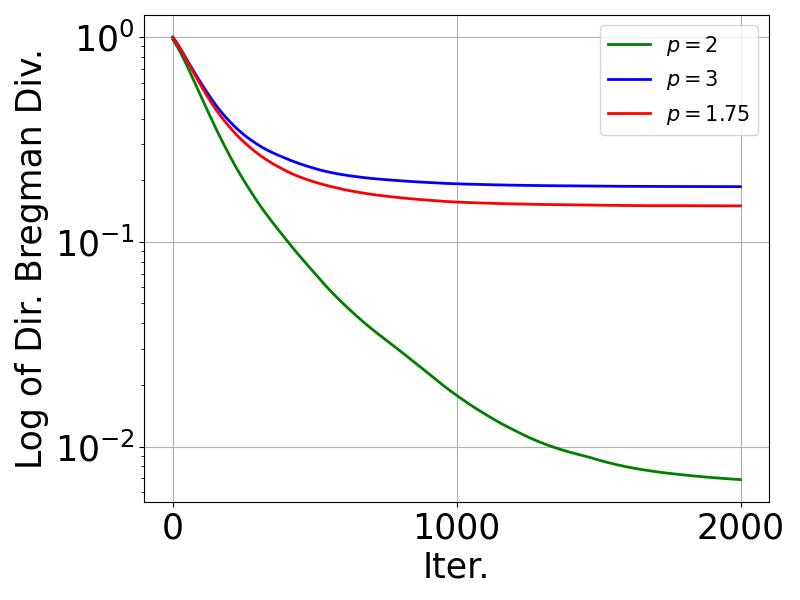}
         \caption{$\ell_2$ Log Divergence}
         \label{figjoint-inst-log-2}
     \end{subfigure}
     \hfill
     \begin{subfigure}[b]{0.32\textwidth}
         \centering
         \includegraphics[width=\textwidth]{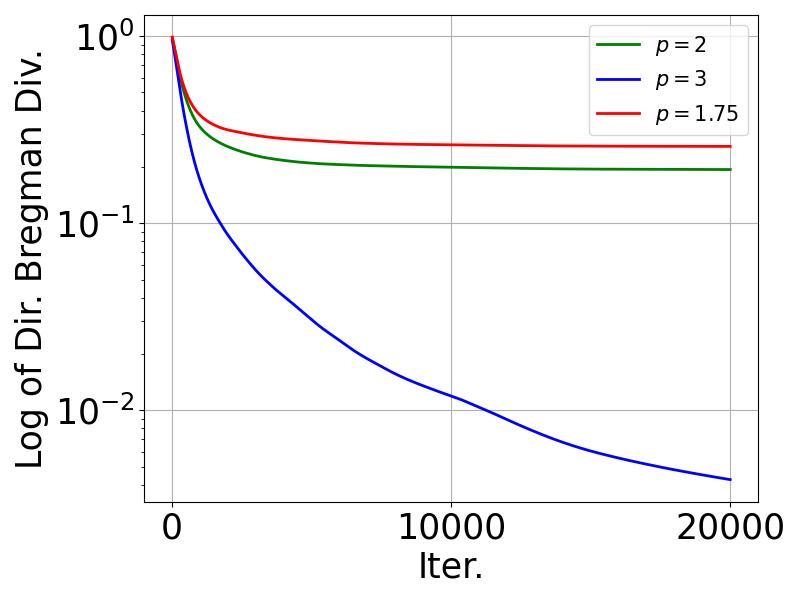}
         \caption{$\ell_3$ Log Divergence}
         \label{figjoint-inst-log-3}
     \end{subfigure}
\caption{Graph of the log of directional Bregman divergence between the (a) $\ell_{1.75}$, (b) $\ell_2$, and (c) $\ell_3$ optimization paths and the \eqref{eqn:w-svm}. This shows how the divergence behaves when viewed from the log-space, adding an extra detail to Figure \ref{fig-corr}}
\end{figure}

The model is trained with parameters initialized near the origin, using~\ref{alg:p:agd} with $p=1.75, 2$, and $3$, and a learning rate of $0.1$. Training lasted for $1,500$ epochs for $p=1.75$, $2,000$ epochs for $p=2$, and $20,000$ epochs for $p=3$. Gradients are normalized to accelerate convergence without altering results significantly. We refer to the parameter histories as the $\ell_{1.75}, \ell_2,$ and $\ell_3$ optimization paths. We compute the chosen tokens $(\alpha_i)_{i=1}^n$ for the \eqref{eqn:w-svm} problem by selecting the token with the highest softmax probability for each sample. This process is repeated for $p=1.75$, $2$, and $3$.

\subsection{~\ref{alg:pq:agd} Experiment}\label{app:exp:joint}
We use the following example dataset for the experiment on joint optimization.
\begin{example}\label{example:p-gd}
Let \(n=2\), \(T=3\), \(d=2\). Let \(y_1=1\), \(y_2=-1\). Let:
\begin{align}
  X_1 = \begin{pmatrix}
X_{11} \\ X_{12} \\ X_{13}
\end{pmatrix} =
\begin{pmatrix}
-5.4 & 2.4 \\
2.8 & 4.2 \\
2.6 & -0.2
\end{pmatrix}
,\quad \textnormal{and} \quad 
X_2 = \begin{pmatrix}
X_{21} \\ X_{22} \\ X_{23}
\end{pmatrix} =
\begin{pmatrix}
0.8 & -4.4 \\
-2.2 & -0.8 \\
1.8 & 0.2
\end{pmatrix}
.
\end{align}
Let \(z_1=X_{11}\), \(z_2=X_{21}\).

\end{example}
We use learning rates $0.1$ and we trained the model for $1,500$ epochs for when $p=1.75$, $2,000$ epochs for $p=2$, and $20,000$ epochs for $p=3$. As it was done in the previous experiment, we obtain the parameter history for each $p$, and compute the optimal token for the ~\eqref{eqn:w-svm} and ~\ref{eqn:vp:svm} problems.

\subsection{Architecture Details for Stanford Large Movie Review Classification}\label{app:exp:stanford}

The model architecture used for semantic analysis on the Stanford Large Movie Review dataset follows 
the transformer encoder described in \cite{vaswani2017attention}, with a linear classifier as the 
final layer.

The embedding layer has trainable token embeddings $E$ and position encodings $P$. The model's 
vocabulary size is $30{,}522$, with a maximum token length of $512$ and an embedding dimension of $384$. 
Hence, $E \in \mathbb{R}^{30522 \times 384}$ and $P \in \mathbb{R}^{512 \times 384}$. If a token $t$ 
is in position $i$, its embedding is $X_i = E_t + P_i$, where $E_t$ and $P_i$ denote the $t^\text{th}$ 
and $i^\text{th}$ rows of $E$ and $P$, respectively.

Next, the token features are passed through the encoding blocks, each consisting of a multi-head 
self-attention layer $\text{MultiHead}$, two layer-normalization layers ($\text{LayerNorm}_1$ and 
$\text{LayerNorm}_2$), and a Multilayer Perceptron (MLP). If the sequence of input token features 
is $X_1, \ldots, X_T$, and we denote $\text{MultiHead}(X_1, \ldots, X_T)_i$ as the $i^\text{th}$ 
token feature from the multi-head self-attention, then the output of the encoding block for the 
$i^\text{th}$ token is
\[
\text{LayerNorm}_2 \bigl( X_i' + \text{MLP}(X_i') \bigr),
\]
where
\[
X_i' = \text{LayerNorm}_1 \bigl( X_i + \text{MultiHead}(X_1, \ldots, X_T)_i \bigr).
\]
We apply a dropout rate of $0.2$ for regularization during training.

The $\text{MultiHead}$ attention is a variant of single-head attention that horizontally stacks multiple 
single-head attentions within the same layer. A single-head attention is equivalent to 
\eqref{eq-model-W}, but with the vector $z$ replaced by the matrix $X^\top$, and the vector $v$ 
replaced by the matrix $V$.

We experimented with 3 encoding blocks (each having 3 attention heads), 4 encoding blocks (with 
4 heads), and 6 encoding blocks (with 6 heads). Finally, we pass the feature vector of the first 
token from the last encoding layer to a linear classifier.

\subsection{Additional Experiment with Adam}\label{app:exp:adam}

The ViT architecture that we trained to compare $\ell_{1.1}$--\ref{eqn:md} and Adam used a patch size of $4$ on the input image, $512$ dimensional token feature, $6$ layers of attention blocks with $8$ attention heads per attention layer, and a one-layer linear classification layer for the final prediction head on the final [CLS] patch token feature.

The embedding layer of the architecture follows the work of \cite{dosovitskiy2020image}, where the embedding layer learns the [CLS] token embedding, a linear map for embedding each image patch, and a positional embedding for each possible position on the image. The details of how the attention layers are implemented are similar to that of the architecture used for the Stanford Large Movie dataset, just that the MLP sublayer is now a two-layer GeLU network, and that we apply the layer normalization before the multihead attention and the MLP instead of after the residual connection. Furthermore, we apply a dropout of $0.1$ when training the network.

\clearpage

\subsection{Addendum to the Attention Map Results}
%
\includegraphics[width=.9\linewidth]{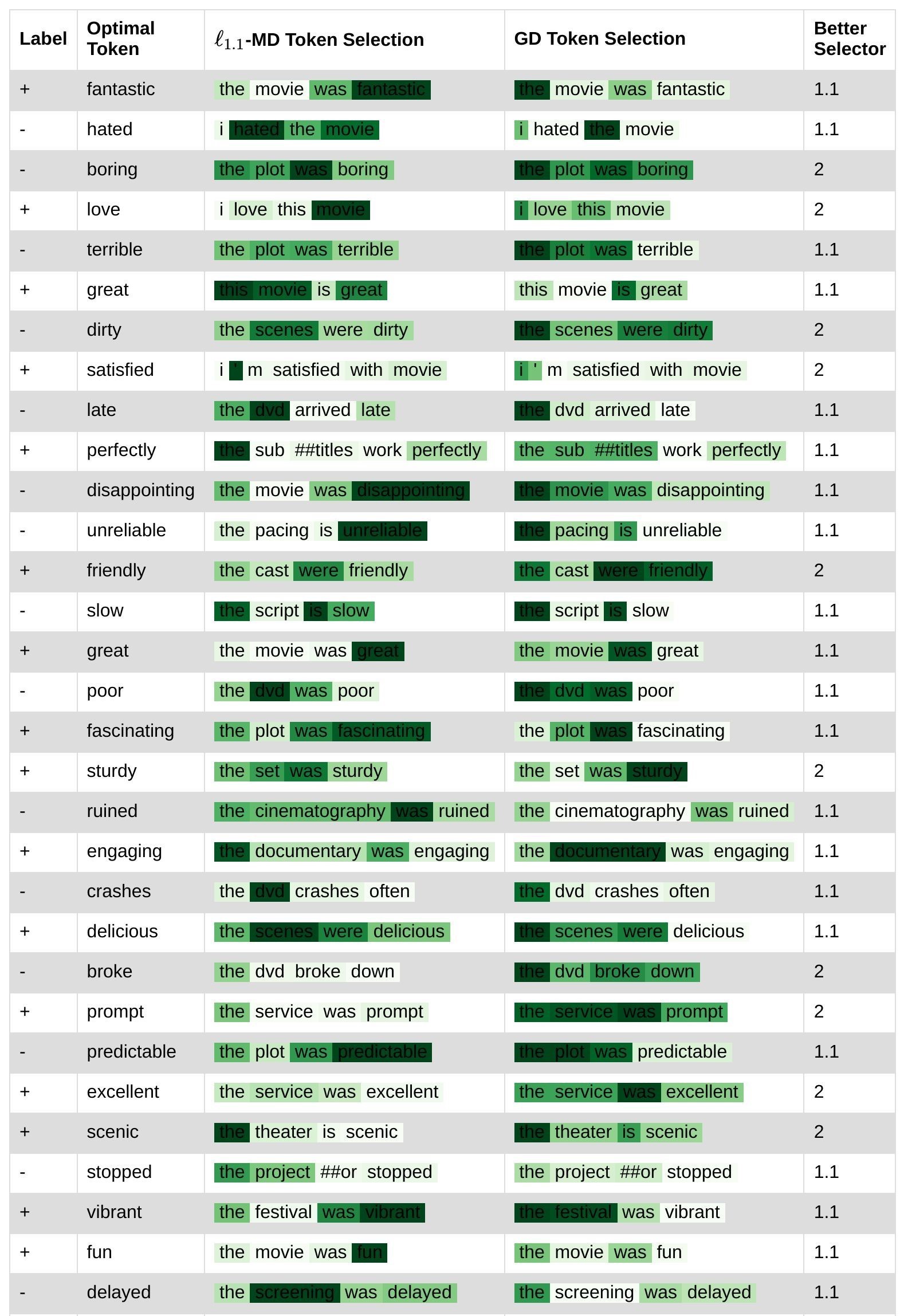}
\newpage
\begin{figure}
    \centering
    \includegraphics[width=1\linewidth]{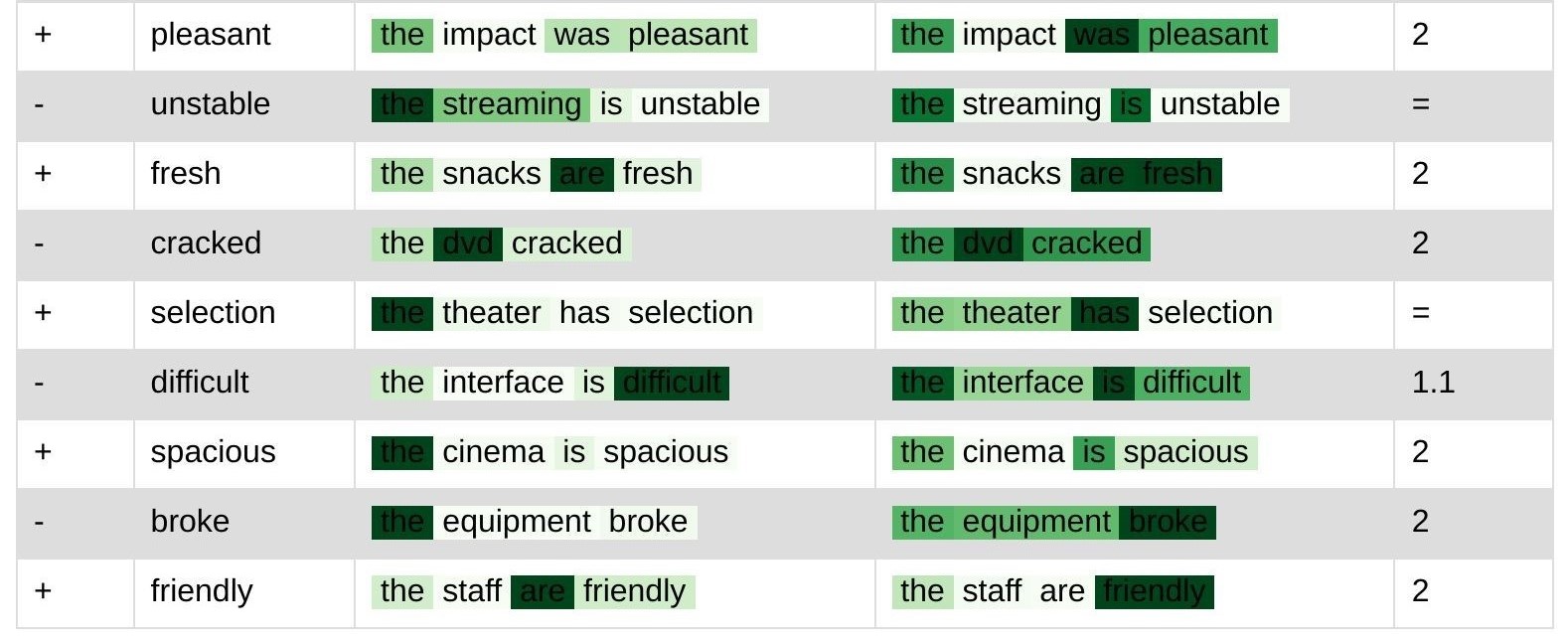}
    \caption{The full attention map table that shows that $\ell_{1.1}$-MD provides strictly more attention to the pivotal token compared to $\ell_2$-MD, or equivalently GD, for $22$ of the sample sentences. Out of the other $18$ sentences, for $16$ of which, GD strictly outperforms $\ell_{1.1}$-MD, while for the other $2$, the two algorithms are equally as good.}
\end{figure}

{
\clearpage
\newpage
\subsection{Additional Results on Attention Weight Sparsity and Runtime Performance} \label{app:add:sparsity}

\begin{figure}[h]
\centering
\begin{tabular}{cccc}
\subfloat[$W_K$ parameters with $\ell_{1.75}$-\ref{eqn:md}]{\includegraphics[width =0.31\linewidth]{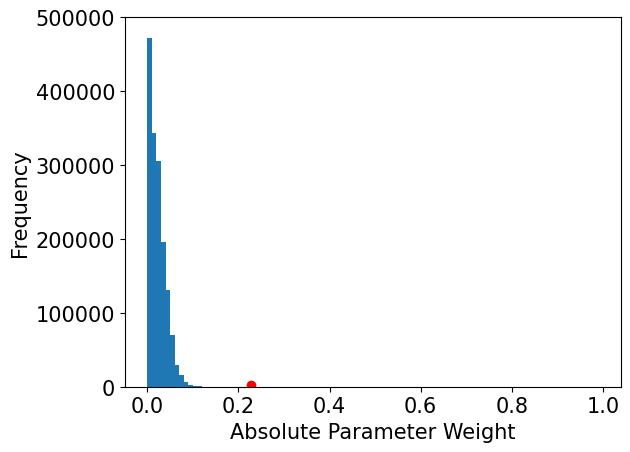}} 
\hspace{-.25cm}
&
\subfloat[$W_Q$ parameters with $\ell_{1.75}$-\ref{eqn:md}]{\includegraphics[width = 0.31\linewidth]{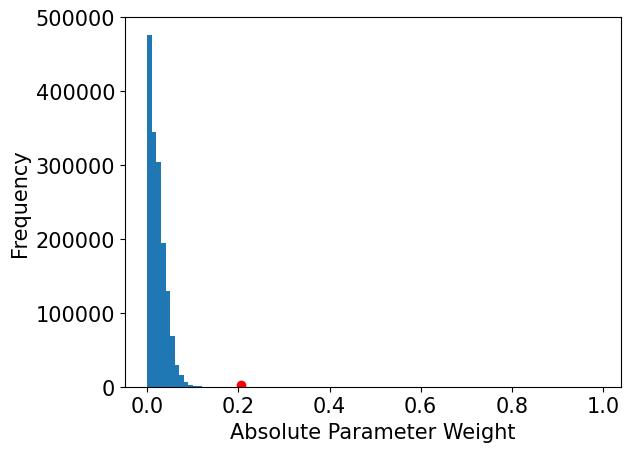}}
\hspace{-.3cm}
&
\subfloat[$W_V$ parameters with $\ell_{1.75}$-\ref{eqn:md}]{\includegraphics[width = 0.3\linewidth]{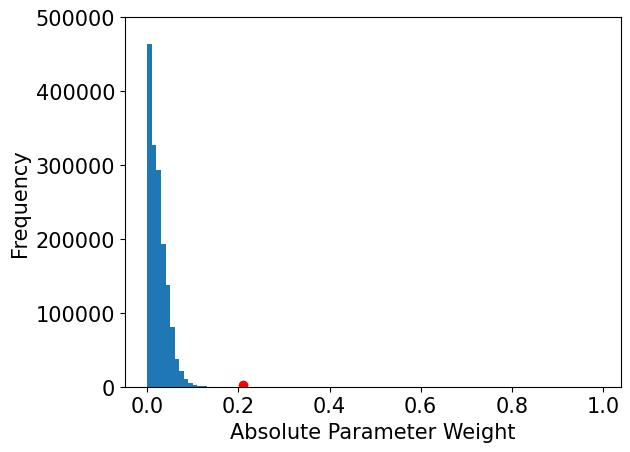}} \vspace{.3cm}\\
\subfloat[$W_K$ parameters with $\ell_2$-\ref{eqn:md}]{\includegraphics[width = 0.31\linewidth]{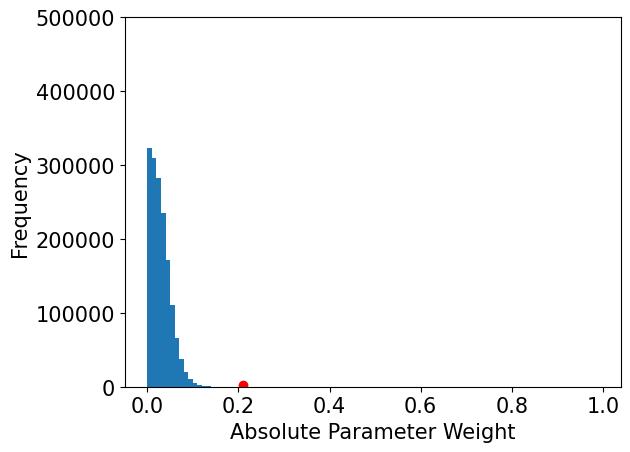}} &
\subfloat[$W_Q$ parameters with $\ell_2$-\ref{eqn:md}]{\includegraphics[width = 0.31\linewidth]{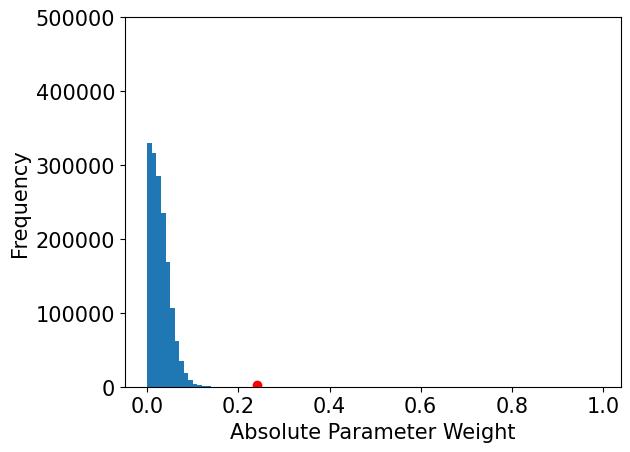}} &
\subfloat[$W_V$ parameters with $\ell_2$-\ref{eqn:md}]{\includegraphics[width = 0.3\linewidth]{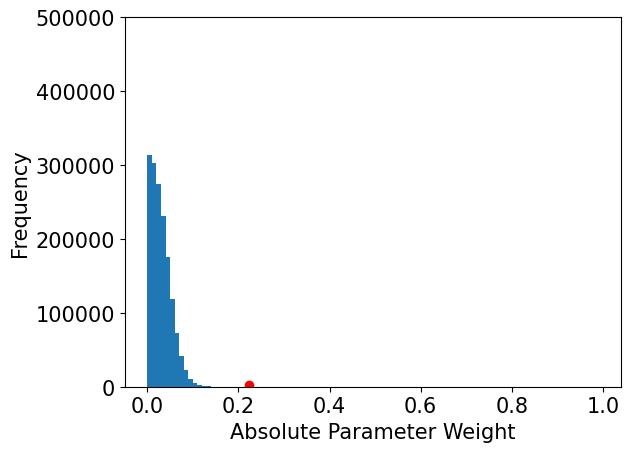}} \vspace{.3cm}\\
\subfloat[$W_K$ parameters with $\ell_3$-\ref{eqn:md}]{\includegraphics[width = 0.31\linewidth]{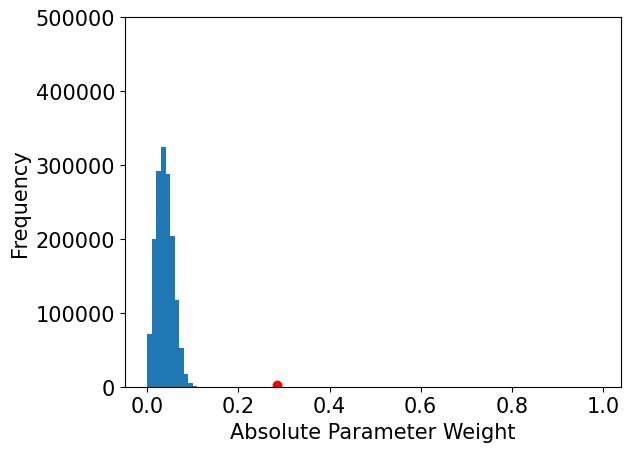}} &
\subfloat[$W_Q$ parameters with $\ell_3$-\ref{eqn:md}]{\includegraphics[width = 0.31\linewidth]{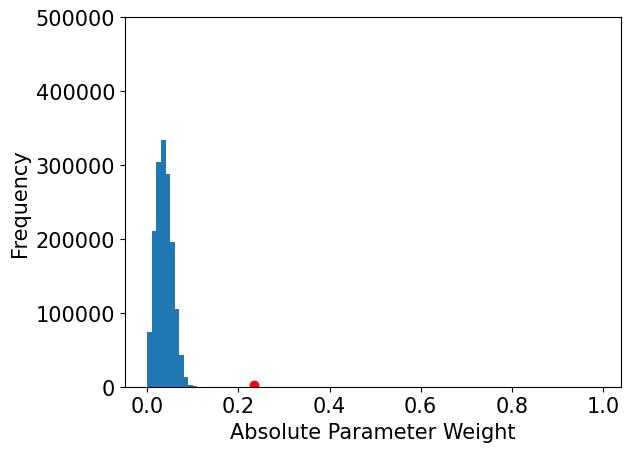}} &
\subfloat[$W_V$ parameters with $\ell_3$-\ref{eqn:md}]{\includegraphics[width = 0.3\linewidth]{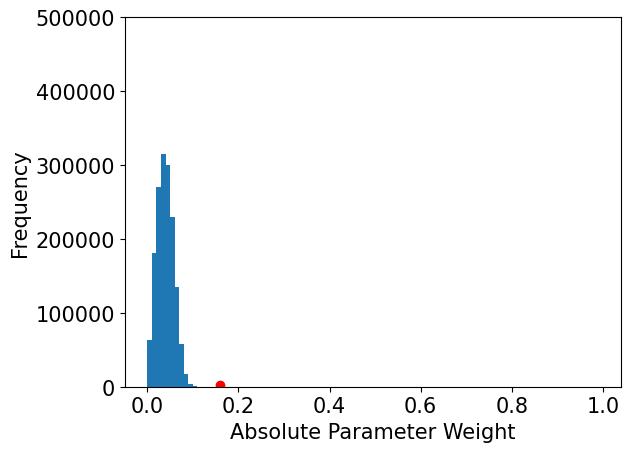}} \\
\end{tabular}
 \caption{Histogram of the absolute values of the entries in the $W_K,W_Q$, and $W_V$ attention weight matrices of the resulting models trained by $\ell_{1.75},\ell_2$, and $\ell_3$--\ref{eqn:md} on CIFAR-10.}
\label{fig:response-response-ii-hist-cifar10}
\end{figure}

\begin{figure}[h]
\centering
\begin{tabular}{cccc}
\subfloat[$W_K$ parameters with $\ell_{1.75}$-\ref{eqn:md}]{\includegraphics[width =0.31\linewidth]{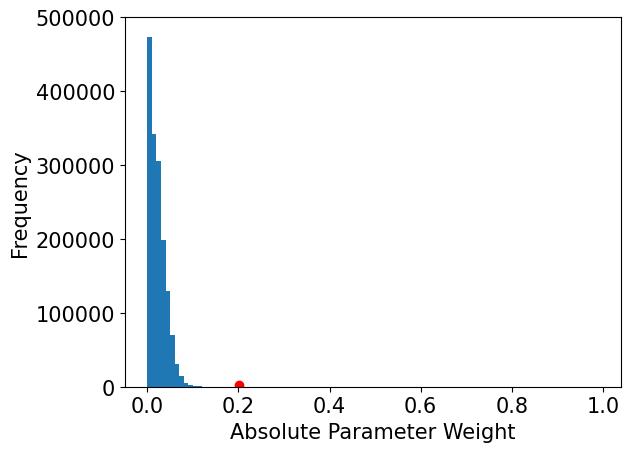}} 
\hspace{-.25cm}
&
\subfloat[$W_Q$ parameters with $\ell_{1.75}$-\ref{eqn:md}]{\includegraphics[width = 0.31\linewidth]{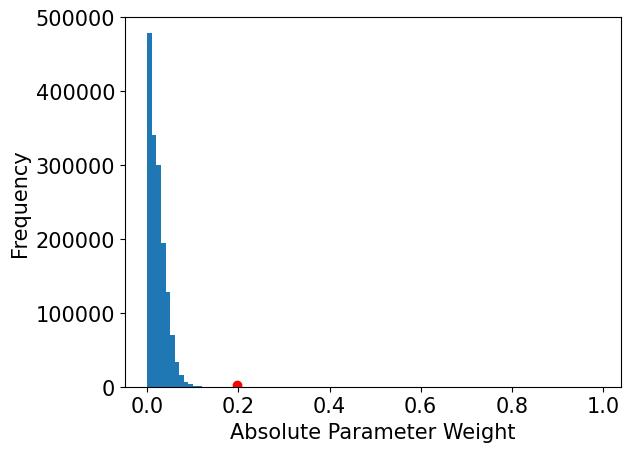}}
\hspace{-.3cm}
&
\subfloat[$W_V$ parameters with $\ell_{1.75}$-\ref{eqn:md}]{\includegraphics[width = 0.3\linewidth]{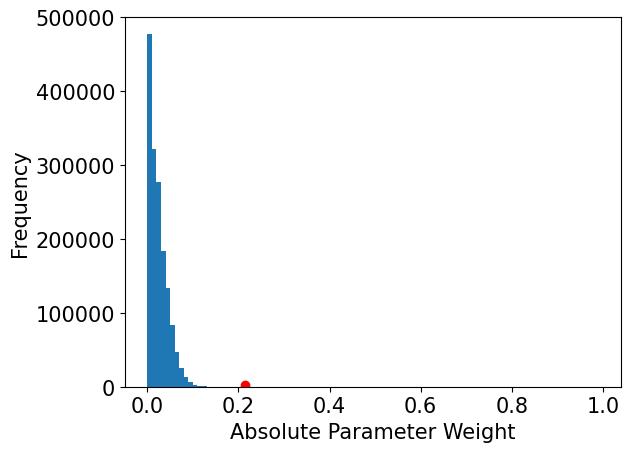}} \vspace{.3cm}\\
\subfloat[$W_K$ parameters with $\ell_2$-\ref{eqn:md}]{\includegraphics[width = 0.31\linewidth]{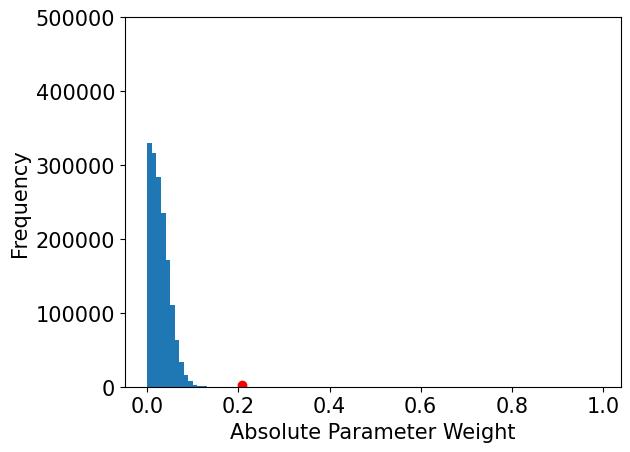}} &
\subfloat[$W_Q$ parameters with $\ell_2$-\ref{eqn:md}]{\includegraphics[width = 0.31\linewidth]{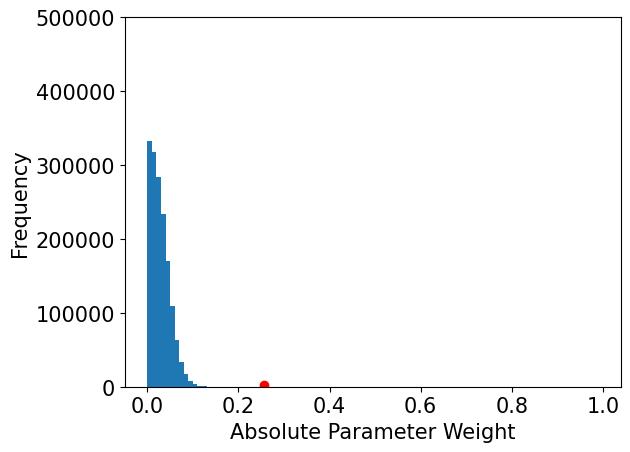}} &
\subfloat[$W_V$ parameters with $\ell_2$-\ref{eqn:md}]{\includegraphics[width = 0.3\linewidth]{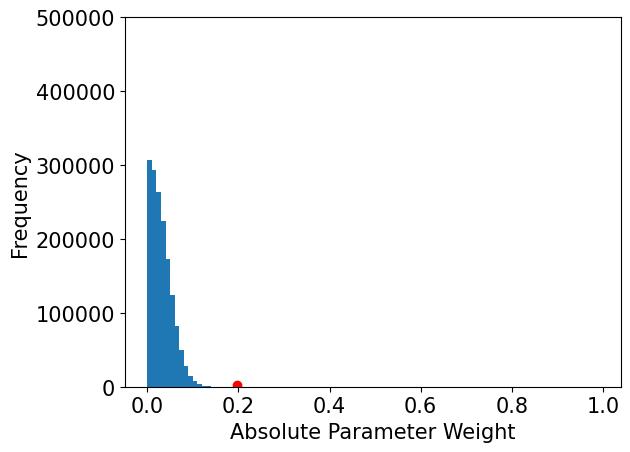}} \vspace{.3cm}\\
\subfloat[$W_K$ parameters with $\ell_3$-\ref{eqn:md}]{\includegraphics[width = 0.31\linewidth]{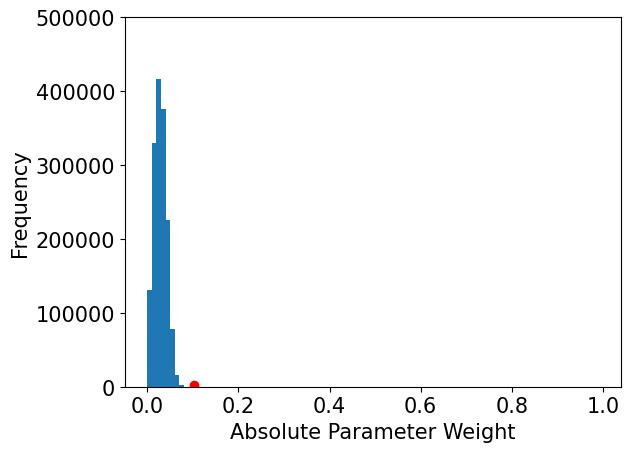}} &
\subfloat[$W_Q$ parameters with $\ell_3$-\ref{eqn:md}]{\includegraphics[width = 0.31\linewidth]{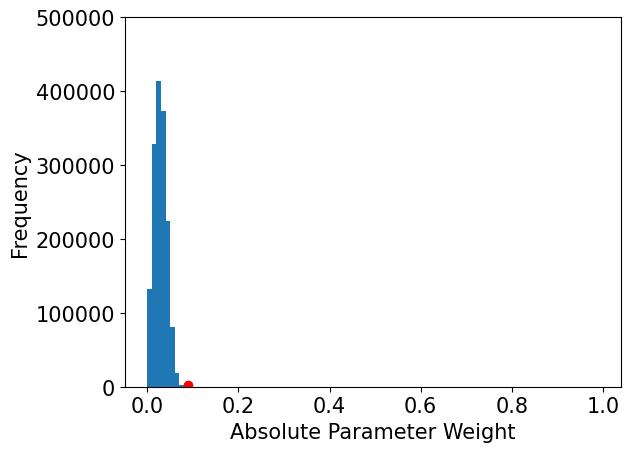}} &
\subfloat[$W_V$ parameters with $\ell_3$-\ref{eqn:md}]{\includegraphics[width = 0.3\linewidth]{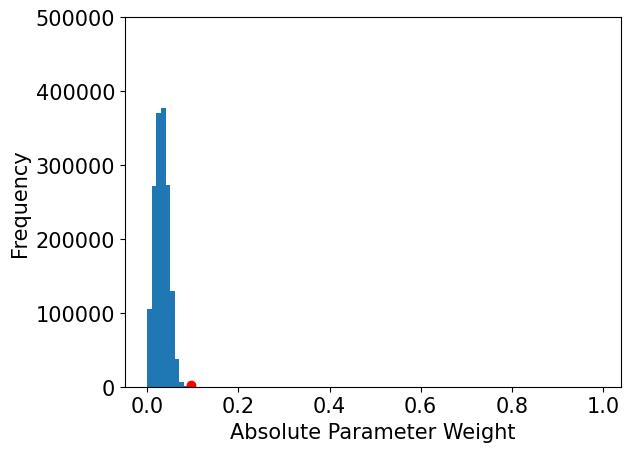}} \\
\end{tabular}
 \caption{Histogram of the absolute values of the entries in the $W_K,W_Q$, and $W_V$ attention weight matrices of the resulting models trained by $\ell_{1.75},\ell_2$, and $\ell_3$--\ref{eqn:md} on CIFAR-100.}
\label{fig:response-response-ii-hist-cifar100}
\end{figure}

Similar to the histogram plots for $\ell_{1.1}$--\ref{eqn:md} and Adam in Figures \ref{figure:hist-adam} and \ref{fig:response-sec4-hist} of Section 4.2.2, we plot the histograms for $\ell_{1.75},\ell_2,$ and $\ell_3$--\ref{eqn:md} below. As we can see in Figure \ref{fig:response-response-ii-hist-cifar10} and \ref{fig:response-response-ii-hist-cifar100}, the $\ell_{1.75}$--\ref{eqn:md} histogram shows a sparser weight distribution than that of Adam and $\ell_2$--\ref{eqn:md}, while the $\ell_3$--\ref{eqn:md} histogram shows a denser distribution.


\clearpage
\newpage
\begin{figure}[h]
     \centering
     \begin{subfigure}[b]{0.49\textwidth}
         \centering
         \includegraphics[width=\textwidth]{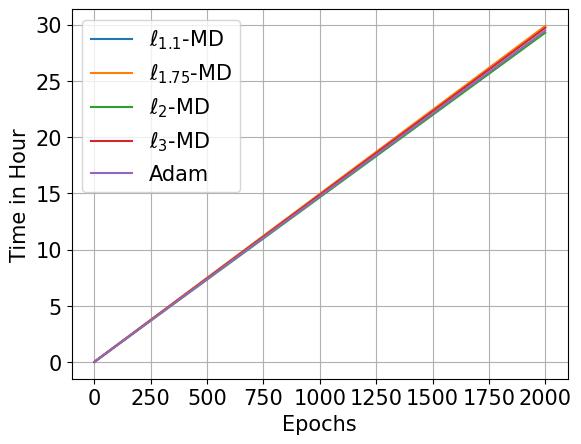}
         \caption{CIFAR-10 Training Time in Hour}
         \label{fig:response-ii-time-taken:cifar-10}
     \end{subfigure}
     \hfill
     \begin{subfigure}[b]{0.49\textwidth}
         \centering
         \includegraphics[width=\textwidth]{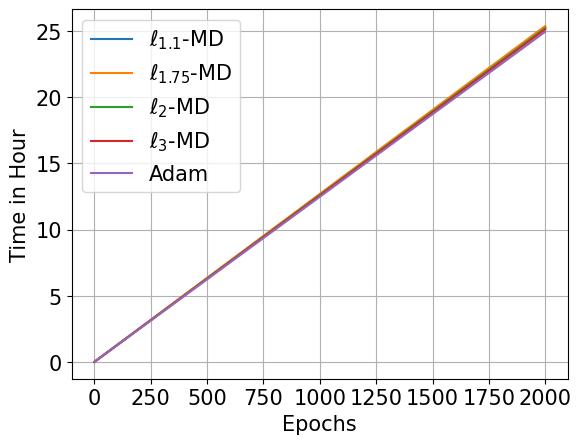}
         \caption{CIFAR-100 Training Time in Hour}
         \label{fig:response-ii-time-taken:cifar-100}
     \end{subfigure}
    \caption{The cumulative time taken to train ViT on CIFAR-10 (left) and CIFAR-100 (right) using $\ell_p$-\ref{eqn:md} for $p=1.1, 1.75, 2,$ and $3$ and Adam algorithms across the 2000 epochs. On the left subgraph, it took approximately 29 hours for each of the five optimizers to fully train the model on the CIFAR-10 dataset, and on the right subgraph, it took approximately 25 hours for each of the five optimizers to fully train the model on the CIFAR-100 dataset.}
    \label{fig:response-ii-time-taken}
\end{figure}

Next, we also take the cumulative time taken to perform each training steps for the experiments in Section 4.2.2. Figure \ref{fig:response-ii-time-taken} shows that for training CIFAR-10 and CIFAR-100, the differences in the training time between the five different training algorithms are very marginal. Therefore, our algorithms do not have any significant slowdown in the training speed.

}

\endgroup

\vskip 0.2in

\end{document}